\def\ps@myheadings{%
  \let\@oddfoot\@empty
  \let\@evenfoot\@empty
  \def\@oddhead{\hfil{\fontsize{9}{10.5}\selectfont\normalfont\leftmark}\hfil}%
  \def\@evenhead{\hfil{\fontsize{9}{10.5}\selectfont\normalfont\leftmark}\hfil}%
  \let\@mkboth\@gobbletwo
  \let\sectionmark\@gobble
  \let\subsectionmark\@gobble
}
\title{Adaptive Riemannian Graph Neural Networks}
\author{
    Xudong Wang\textsuperscript{\rm 1},
    Chris Ding\textsuperscript{\rm 1},
    Tongxin Li\textsuperscript{\rm 1},
    Jicong Fan\textsuperscript{\rm 1}\footnote{Corresponding author.\\This is the full version of the paper published in  
\emph{The 40th Annual AAAI Conference on Artificial Intelligence~(AAAI-2026)}, 
Main Technical Track.}
}
\newcommand{\modelname}{\textsf{ARGNN}\xspace}
\newtheorem{theorem}{Theorem}
\newtheorem{lemma}{Lemma}                
\newtheorem{proposition}{Proposition}    
\newtheorem{corollary}{Corollary}        
\newtheorem{definition}{Definition}
\newtheorem{assumption}{Assumption}
\newtheorem{remark}{Remark}
\newcommand{\ci}[1]{\scriptsize{$\pm$#1}}
\begin{document}

\maketitle
\thispagestyle{myheadings}
\begin{abstract}
Graph data often exhibits complex geometric heterogeneity, where structures with varying local curvature, such as tree-like hierarchies and dense communities, coexist within a single network. Existing geometric GNNs, which embed graphs into single fixed-curvature manifolds or discrete product spaces, struggle to capture this diversity. We introduce Adaptive Riemannian Graph Neural Networks (\modelname), a novel framework that learns a \emph{continuous and anisotropic Riemannian metric tensor field over the graph}. It allows each node to determine its optimal local geometry, enabling the model to fluidly adapt to the graph's structural landscape. Our core innovation is an efficient parameterization of the node-wise metric tensor, specializing to a learnable diagonal form that captures directional geometric information while maintaining computational tractability. To ensure geometric regularity and stable training, we integrate a Ricci flow-inspired regularization that smooths the learned manifold. Theoretically, we establish the rigorous geometric evolution convergence guarantee for \modelname and provide a continuous generalization that unifies prior fixed or mixed-curvature GNNs. Empirically, our method demonstrates superior performance on both homophilic and heterophilic benchmark datasets with the ability to capture diverse structures adaptively. Moreover, the learned geometries both offer interpretable insights into the underlying graph structure and empirically corroborate our theoretical analysis.
\end{abstract}

\section{Introduction}
\label{sec:introduction}

Real-world graphs, from social networks to protein interaction maps, exhibit a rich geometric diversity that challenges various graph learning paradigms \citep{kipf2016semi,hamilton2017inductive,NEURIPS2023_b61da4f0,kang2023node,wang2024graph,sun2024learning,sun2024mmd,wang2025explainable,qian2025dhakr,wang2025learnable,guo2025graphmore,fan2025graph,fan2025interdisciplinary}. Consider a social network where some communities form deep, tree-like hierarchies best captured by hyperbolic geometry~\citep{chami2019hyperbolic}, while others create dense, tightly-knit cliques that resemble spherical manifolds~\citep{gu2018learning}. Forcing such a geometrically heterogeneous graph into a single geometric space (Euclidean, hyperbolic, or spherical) inevitably introduces significant distortion and information loss.

Pioneering work in geometric Graph Neural Networks (GNNs) has demonstrated the power of non-Euclidean spaces. Hyperbolic GNNs~\citep{chami2019hyperbolic,zhang2021hyperbolic} excel on tree-like graphs but struggle with dense cycles. Conversely, spherical methods handle cyclic structures well but are less effective on hierarchical data. This fundamental limitation stems from their shared assumption of \textit{global geometric homogeneity}. 

Recent advances have moved towards mixed-curvature approaches to address this issue. Models like $\kappa$-GCN~\citep{bachmann2020constant} learn an isotropic (scalar) curvature for each node, while state-of-the-art methods like CUSP~\citep{grover2025cusp} embed graphs into product manifolds of constant-curvature spaces (e.g., $\mathbb{H}^k \times \mathbb{S}^l \times \mathbb{E}^m$). While a significant step forward, these methods remain constrained. Scalar curvature approaches are still isotropic at the node level, unable to capture directional geometric information. Product manifold methods are limited to block-constant geometries chosen from a discrete set of curvatures. Neither can fully capture the fine-grained, continuous geometric variations inherent in complex data. 

\begin{figure}[t]
	\centering
	\begin{subfigure}[c]{0.46\columnwidth}
		\centering
		\fbox{\includegraphics[width=0.95\columnwidth]{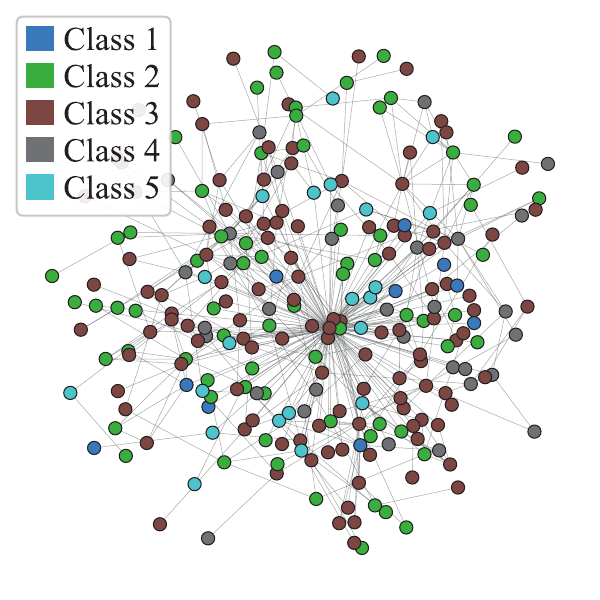}}
	\end{subfigure}
	\begin{subfigure}[c]{0.48\columnwidth}
		\centering
		\fbox{\includegraphics[trim= 60 60 60 60, clip, width=0.95\linewidth]{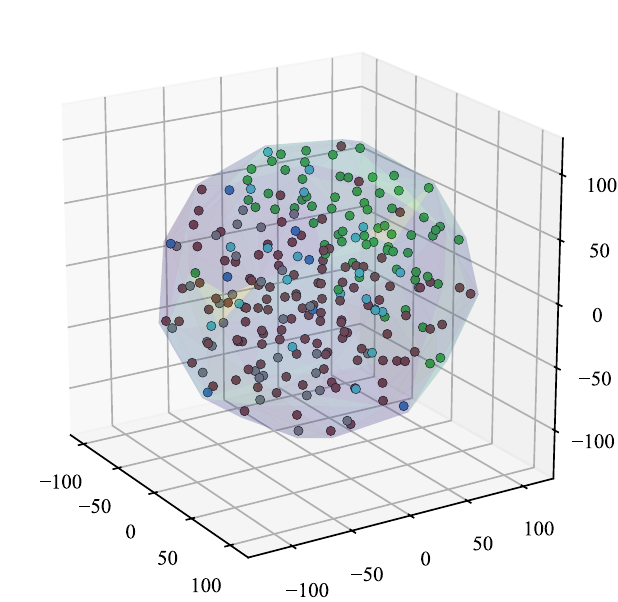}}
	\end{subfigure}
	\caption{
		\textbf{Geometric heterogeneity in \textsc{Wisconsin} network.} \textbf{Left:} raw graph topology coloured by class.
		\textbf{Right:} 3-D t-SNE of node features.
		The translucent hull is coloured by the magnitude of discrete mean
		curvature 
		(\textcolor{violet}{violet} $\!\to\!$ flat, \textcolor{yellow}{yellow} $\!\to\!$ strongly curved),
		showing that curvature varies across the \emph{Riemannian manifold}.}
	\label{fig:intro_wisc_vis}
\end{figure}

This geometric heterogeneity is not merely a theoretical construct; it is evident in real-world data, as illustrated in Figure~\ref{fig:intro_wisc_vis}. The real-world \textsc{Wisconsin} network~\cite{craven2000learning} embeds into a Riemannian feature manifold whose neighbourhoods can differ sharply in both curvature and orientation. Any single or fixed-curvature space would therefore distort part of the data, motivating our node-wise, adaptive metric tensor.

The importance of geometric understanding has been further highlighted by recent advances in curvature-based graph analysis. Graph Neural Ricci Flow (GNRF)~\citep{chen2025graph} demonstrates that evolving features according to discrete Ricci curvature improves representation quality, though it keeps the underlying geometry fixed. Discrete curvature methods like CurvDrop~\citep{liu2023curvdrop} and curvature-based graph rewiring~\citep{topping2021understanding} show promise but operate as preprocessing steps rather than end-to-end learning frameworks.

Our key insight is to move beyond pre-defined geometries by learning a \textbf{continuous, node-adaptive Riemannian metric field} that directly models the local structure. To realize this vision within a tractable framework, we introduce \textbf{Adaptive Riemannian Graph Neural Networks} (\modelname). Our framework proposes a principled and efficient parameterization of the metric tensor. Specifically, we model each node's local geometry with an \textit{anisotropic diagonal metric}, $\mathbf{G}_i = \text{diag}(\mathbf{g}_i)$. This design is not merely a computational shortcut; it corresponds to a flexible, per-dimension scaling of the feature space geometry. To ensure stable learning, we further integrate Ricci flow-inspired dynamics that regularize the evolution of the learned geometric manifold. Implementation code is available at our public repository\footnote {\url{https://github.com/MathAdventurer/ARGNN}}.
The contributions of our work are threefold:
{%
	\setlength{\leftmargini}{0pt}
	\setlength{\labelsep}{2pt}
	\setlength{\itemsep}{2pt}
	\setlength{\topsep}{2pt}
	\begin{enumerate}[i)]
		\item \textbf{Methodologically}, we propose a principled parameterization of the adaptive metric field using diagonal matrices. We provide strong geometric and algorithmic justifications for our design, framing it as an anisotropic conformal transformation that is both computationally efficient and highly expressive. To the best of our knowledge, \modelname is the first framework that learns continuous and anisotropic metric tensor fields for graphs. 
		\item \textbf{Theoretically}, we establish the convergence guarantee for \modelname's geometric evolution and further prove its role as a universal framework that generalizes and unifies prior isotropic, scalar-curvature, and product-manifold GNNs.
		\item \textbf{Empirically}, we conduct extensive experiments on a wide range of benchmark datasets, including comprehensive ablation studies. Our results demonstrate that \modelname achieves superior performance compared to state-of-the-art benchmark methods on both homophilic and heterophilic graphs.
\end{enumerate}}

\section{Related Work}\label{sec:related}

\subsection{Fixed-Curvature Geometric GNNs}

Geometric deep learning on graphs has been revolutionized by embedding approaches that leverage non-Euclidean geometries. \textbf{Hyperbolic GNNs} exploit the exponential volume growth of hyperbolic space to naturally represent hierarchical structures. HGCN~\citep{chami2019hyperbolic} extends graph convolutions to hyperbolic space using the Poincaré ball model, achieving remarkable success on tree-like graphs. HGAT~\citep{zhang2021hyperbolic} incorporates attention mechanisms in hyperbolic space, while recent work explores hyperbolic transformers~\citep{gulcehre2018hyperbolic} and variational autoencoders~\citep{sun2021hyperbolic}.

\textbf{Spherical GNNs} address the complementary challenge of cyclic and community structures. Spherical CNNs~\citep{cohen2018spherical} and their graph extensions~\citep{gu2018learning} leverage the positive curvature of spherical space to model dense, interconnected communities. However, both hyperbolic and spherical approaches assume global geometric homogeneity, limiting their applicability to mixed-topology graphs.

Fixed-curvature methods have the advantage compared with traditional GNNs in different target domains but fail catastrophically when graph geometry mismatches the embedding space, as shown in Figure~\ref{fig:intro_wisc_vis}, which prevents them from effectively handling the geometric heterogeneity prevalent in real-world networks.

\subsection{Discrete Mixed-Curvature Approaches}

Recognizing the limitations of fixed-curvature methods, recent work explores mixed-curvature embeddings. 

$\kappa$-GCN~\citep{bachmann2020constant} learns scalar curvature parameters for each node, allowing adaptation between hyperbolic, Euclidean, and spherical geometries. However, scalar curvature cannot capture directional geometric information.

CUSP~\citep{grover2025cusp} represents the current state-of-the-art in mixed-curvature approaches. It combines spectral graph analysis with embeddings in product manifolds of constant-curvature spaces (e.g., $\mathbb{H}^k\!\times\!\mathbb{S}^l\!\times \mathbb{E}^m$). CUSP also introduces spectral filtering techniques and curvature-aware graph Laplacians, achieving strong performance across diverse benchmarks.

$\mathcal{Q}$-GCN~\citep{xiong2022pseudo} extends the framework to pseudo-Riemannian manifolds with indefinite metrics, enabling more flexible curvature combinations. Self-supervised mixed-curvature methods~\citep{jin2020self} explore representation learning without labels.

However, despite their flexibility, these approaches primarily treat geometry as \emph{discrete choices} from finite sets of constant-curvature manifolds. Consequently, they offer limited support for continuous geometric variation and anisotropic (directional) structure.

\subsection{Curvature-Based Graph Analysis}

Recent advances in discrete differential geometry have enabled sophisticated graph analysis through curvature. 

\textbf{Discrete Ricci curvature}, including Ollivier-Ricci~\citep{ollivier2009ricci} and Forman-Ricci~\citep{forman2003bochner} curvature, provide local geometric characterizations of graph structure.

Graph Neural Ricci Flow (GNRF)~\citep{chen2025graph} represents a significant recent advance. GNRF evolves node features according to discrete Ricci curvature, showing that curvature-aware feature evolution improves representation quality. However, GNRF keeps the underlying graph geometry fixed while evolving features.

\textbf{Curvature-based rewiring} methods~\citep{topping2021understanding,fesser2024mitigating} use curvature analysis to identify and modify graph bottlenecks. CurvDrop~\citep{liu2023curvdrop} employs Ricci curvature for topology-aware dropout sampling. These methods show curvature's utility but operate as preprocessing steps rather than end-to-end learning.

Curvature-based methods have found success in protein analysis~\citep{wu2023curvagn, shen2024curvature}, community detection~\citep{ni2019community}, and graph generation~\citep{li2022curvature}. 
However, to our knowledge, \emph{joint learning} of geometry and features remains rare; methods typically fix geometry and evolve features (GNRF) or use curvature for preprocessing.

\section{Preliminaries}
\label{sec:preliminaries}

\subsection{Riemannian Geometry Essentials}

A \textbf{Riemannian manifold} $(\mathcal{M}, \mathbf{g})$ consists of a smooth manifold $\mathcal{M}$ equipped with a metric tensor field $\mathbf{g}$ that varies smoothly across the manifold~\citep{lee2018introduction}. At each point $p \in \mathcal{M}$, the metric tensor $\mathbf{g}_p$ defines an inner product on the tangent space $\mathcal{T}_p\mathcal{M}$, enabling the measurement of distances, angles, and curvatures.

For graph-structured data, we introduce a \textbf{metric tensor field over the graph} by associating each node $i \in \mathcal{V}$ with a metric tensor $\mathbf{G}_i \in \mathcal{S}_{++}^d$, where $\mathcal{S}_{++}^d$ denotes the cone of symmetric positive definite matrices. This creates a continuous geometric structure over the discrete graph domain and makes Christoffel symbols vanish, where each node's feature neighborhood in $\mathbb{R}^d$ possesses its own Riemannian geometry. The \textbf{geodesic distance} between sufficiently close points $\mathbf{x}, \mathbf{y} \in \mathbb{R}^d$ under the metric $\mathbf{G}$ of $\mathcal{T}_{\mathbf{x}}\mathcal{M}$ is:
{\setlength{\abovedisplayskip}{0pt}%
	\setlength{\belowdisplayskip}{0pt}%
	\setlength{\abovedisplayshortskip}{0pt}%
	\setlength{\belowdisplayshortskip}{0pt}%
	\[
	d_{\mathbf{G}}(\mathbf{x}, \mathbf{y}) = \sqrt{(\mathbf{x} - \mathbf{y})^T \mathbf{G} (\mathbf{x} - \mathbf{y})}\]}
The \textbf{Ricci curvature tensor} $\text{Ric}(\mathbf{G})$ characterizes how the metric changes across the manifold. In differential geometry, \textbf{Ricci flow}~\cite{chow2004ricci} evolves a metric tensor field $\{\mathbf{G}_i(t)\}$ according to $\frac{\partial \mathbf{G}_i}{\partial t} = -2\text{Ric}(\mathbf{G}_i)$, which smooths geometric irregularities while preserving topological structure, providing a principled approach for geometric regularization on graphs~\citep{hamilton1982three}. Due to space limitations, more Riemannian Geometry and Ricci curvature details are shown in Appendix~\ref{app:mathematical_foundations}.

\subsection{GNNs and Geometric Message Passing}

Consider an attributed graph $\mathcal{G} = (\mathcal{V}, \mathcal{E}, \mathbf{X})$ with nodes $\mathcal{V}$, edges $\mathcal{E}$, and node features $\mathbf{X} \in \mathbb{R}^{|\mathcal{V}| \times d}$. Standard message passing~\citep{gilmer2017neural} of GNNs updates node representations through:
{\setlength{\abovedisplayskip}{0pt}%
	\setlength{\belowdisplayskip}{0pt}%
	\setlength{\abovedisplayshortskip}{0pt}%
	\setlength{\belowdisplayshortskip}{0pt}%
	\[
	\mathbf{h}_i^{(\ell+1)}
	= \sigma\bigl(\mathbf{W}_s^{(\ell)}\mathbf{h}_i^{(\ell)}
	+ \sum_{j\in\mathcal{N}(i)} \mathbf{W}_m^{(\ell)}\mathbf{h}_j^{(\ell)}
	\bigr)
	\]}
where $\mathcal{N}(i)$ denotes the neighborhood of node $i$, $\mathbf{W}_s^{(\ell)}$ and $\mathbf{W}_m^{(\ell)}$ are learnable transformation matrices (parameters) on $\ell$-th layer, and $\sigma$ is a nonlinear activation function.

\textbf{Geometric message passing} enhances this by incorporating Riemannian structure by embedding graphs in non-Euclidean spaces. Existing approaches embed graphs in fixed geometries (hyperbolic, spherical) or discrete mixed-curvature spaces. However, as illustrated in Figure~\ref{fig:intro_wisc_vis}, real graphs exhibit \textbf{continuous geometric heterogeneity} and the local geometry varies across the network.

\subsection{Problem Formulation and Challenges}

The core challenge of this work is to move beyond the fixed-geometry paradigm. We aim to learn a node-adaptive geometric space jointly with the node representations $\mathbf{H}=\{\mathbf{h}_i\}_{i \in \mathcal{V}}$. This learned space is mathematically formulated as a Riemannian metric tensor field $\{\mathbf{G}_i\!\in\!\mathcal{S}_{++}^d\}_{i \in \mathcal{V}}$, where each node i is endowed with its own local metric $\mathbf{G}_i$. However, realizing this vision presents several key hurdles.

\textbf{First}, from a mathematical standpoint, each learned tensor $\mathbf{G}_i$ must remain symmetric positive definite (SPD) throughout training to constitute a valid Riemannian metric. \textbf{Second}, from a computational perspective, parameterizing a full $d\!\times\!d$ metric tensor for every node is prohibitive, scaling as $O(|\mathcal{V}|d^2)$ and hindering application to large graphs. \textbf{Third}, for learning stability, the geometric manifold must evolve smoothly; pathological curvatures could destabilize the training process and lead to poor generalization. \textbf{Finally}, to justify its increased complexity, the proposed framework must be provably more expressive than its fixed-geometry predecessors. Our \modelname framework is designed to address these challenges systematically.

\begin{figure*}[ht]
	\centering
    \includegraphics[trim= 3 0 2.5 0, clip, width=1.0\linewidth]{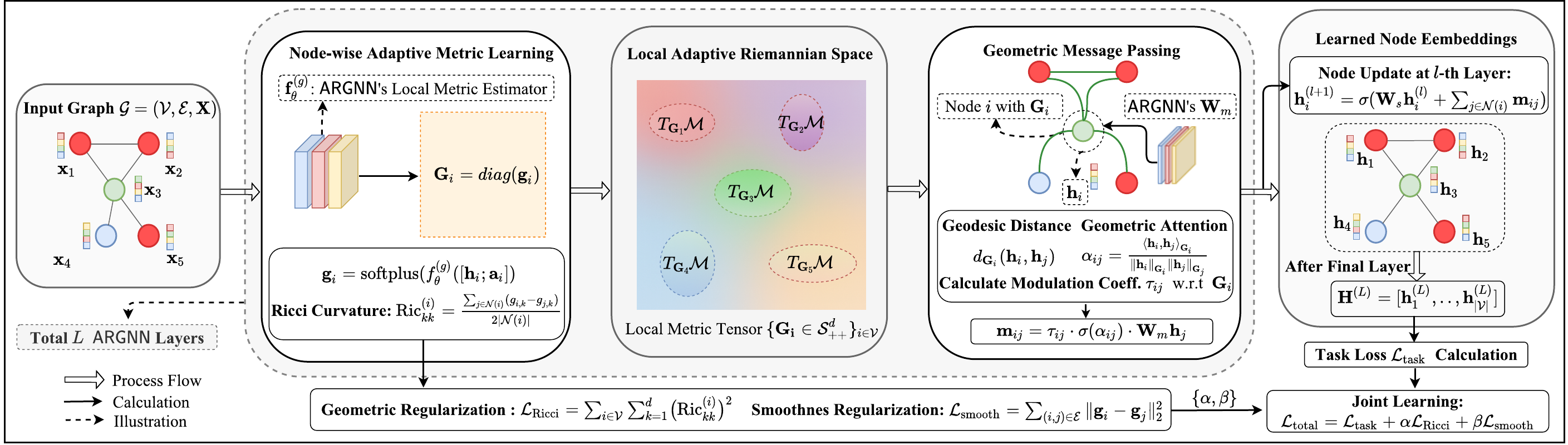}
	\caption{Diagram of our proposed \modelname, which jointly learns continuous, anisotropic metric tensor fields $\{\mathbf{G}_i\!\in\!\mathcal{S}_{++}^d\}_{i \in \mathcal{V}}$ and node embeddings $\mathbf{H}=\{\mathbf{h}_i\}_{i \in \mathcal{V}}$. The learned $\mathbf{G}_i$ gives beyond curvature information to depict the geometric diversity.}
	\label{fig:framework}
\end{figure*}
\section{Adaptive Riemannian Graph Neural Networks}
\label{sec:method}

This section introduces Adaptive Riemannian Graph Neural Networks (\modelname), a framework that learns a continuous, node-adaptive geometry tailored to the underlying graph structure. Our approach forgoes the manual selection of a geometric space and instead learns the geometry as part of the end-to-end training process. This section details the three core components of our framework: i) an efficient and interpretable parameterization of the node-wise metric tensor via a local metric estimator, ii) a geometric message passing scheme that leverages the learned metric, and iii) a Ricci flow-inspired regularization for stable geometric evolution. Figure~\ref{fig:framework} illustrates our approach's diagram.

\subsection{Learning an Anisotropic Metric Field}
\label{sec:metric_learning}

The foundational concept of our work is to equip the graph with a \textbf{Riemannian metric tensor field}, denoted as $\{\mathbf{G}_i \in \mathcal{S}_{++}^d\}_{i \in \mathcal{V}}$. Each metric tensor $\mathbf{G}_i$ is a symmetric positive-definite (SPD) matrix that defines a unique local geometry on the feature space neighborhood of node $i$. As stated before, a direct parameterization of a full $d\!\times\!d$ matrix for each node will be computationally prohibitive and lead to expensive downstream computations.

To address this, we propose a \textbf{principled simplification} by parameterizing each metric tensor $\mathbf{G}_i$ as a learnable \textbf{diagonal matrix}:
{\setlength{\abovedisplayskip}{0pt}%
	\setlength{\belowdisplayskip}{0pt}%
	\setlength{\abovedisplayshortskip}{0pt}%
	\setlength{\belowdisplayshortskip}{0pt}%
	\begin{equation}
		\mathbf{G}_i = \text{diag}(\mathbf{g}_i) = \text{diag}(g_{i,1}, g_{i,2}, \ldots, g_{i,d})
		\label{eq:diagonal_metric}
\end{equation}}
where $\mathbf{g}_i \in \mathbb{R}_{++}^d$ is a vector of positive diagonal elements. This diagonal parameterization is not an arbitrary choice but a motivated design with several key advantages:
{%
	\setlength{\leftmargini}{0pt}
	\setlength{\labelsep}{2pt}
	\setlength{\itemsep}{2pt}
	\setlength{\topsep}{1pt}
	\begin{enumerate}[i)]
		\item \textbf{Geometric Interpretation:} At its core, a metric tensor defines how to measure distances, angles, and curvatures. The standard Euclidean space, governed by the identity matrix, provides a single, global way of measurement. Our diagonal metric, $\mathbf{G}_i = \text{diag}(\mathbf{g}_i)$, acts as a local, learned modification to this standard. Specifically, it \textit{rescales the geometry independently along each feature axis}. The value of each diagonal element $g_{i,k}$ determines the stretch of the $k$-th dimension in the local vicinity of node $i$. This axis-aligned, non-uniform scaling type is formally known as an \textbf{anisotropic conformal transformation}~\citep{obata1970conformal,spivak1999comprehensive}. It endows the model with far greater flexibility than the isotropic scaling of constant-curvature spaces, while remaining more structured and tractable than a full metric tensor. The formal details of this geometric perspective are elaborated in Appendix~\ref{app:mathematical_foundations}.
		\item \textbf{Interpretability and Feature Decomposition:} The diagonal form assumes that the standard coordinate axes of the feature space align with the principal directions of the local geometry. This makes the model highly interpretable: each diagonal element $g_{i,k}$ directly quantifies the geometric importance or local scaling factor of the $k$-th feature dimension for node $i$. This is particularly effective when features are learned to be relatively disentangled.
		\item \textbf{Computational and Algorithmic Efficiency:} This parameterization dramatically reduces the complexity of learning and using the metric. The number of geometric parameters per node drops from $O(d^2)$ to $O(d)$. Furthermore, crucial geometric computations such as geodesic distance, matrix inversion, and eigendecomposition become highly efficient, which we will discuss later.
\end{enumerate}}
To learn these adaptive metrics, we employ a small neural network $f_\theta^{(g)}$ as the local metric estimator that maps a node's local structural information to its metric vector $\mathbf{g}_i$. Specifically, for each node $i$, we first aggregate its neighborhood features $\mathbf{a}_i = \frac{1}{|\mathcal{N}(i)|} \sum_{j \in \mathcal{N}(i)} \mathbf{h}_j$, and then compute its metric vector as:
{\setlength{\abovedisplayskip}{0pt}%
	\setlength{\belowdisplayskip}{0pt}%
	\setlength{\abovedisplayshortskip}{0pt}%
	\setlength{\belowdisplayshortskip}{0pt}%
	\begin{equation}
		\mathbf{g}_i = \text{softplus}\left(f_\theta^{(g)}\left([\mathbf{h}_i; \mathbf{a}_i]\right)\right)
		\label{eq:diagonal_network}
\end{equation}}
where $[\cdot; \cdot]$ denotes feature concatenation. The $\text{softplus}$ activation function ensures all elements of $\mathbf{g}_i$ are strictly positive, thereby naturally satisfying the SPD constraint for $\mathbf{G}_i$.

\subsection{Geometric Message Passing}
\label{sec:geometric_message_passing}
With the learned diagonal metric field, we now formulate a message passing scheme that is explicitly geometry-aware. The propagation of information between nodes is modulated by the local geometries, allowing the model to adapt its aggregation strategy based on the learned structure.
\subsubsection{Geodesic Distance and Principal Curvatures}
Under the learned metric $\mathbf{G}_i$, the geodesic distance (in this locally constant metric space) between the feature vectors of two nodes $i$ and $j$ simplifies to a weighted Euclidean distance:
{\setlength{\abovedisplayskip}{0pt}%
	\setlength{\belowdisplayskip}{0pt}%
	\setlength{\abovedisplayshortskip}{0pt}%
	\setlength{\belowdisplayshortskip}{0pt}%
	\begin{equation}
		\begin{aligned}
			d_{\mathbf{G}_i}(\mathbf{h}_i, \mathbf{h}_j) &= \sqrt{(\mathbf{h}_i - \mathbf{h}_j)^T \mathbf{G}_i (\mathbf{h}_i - \mathbf{h}_j)} \\& = \sqrt{\sum_{k=1}^d g_{i,k} (h_{i,k} - h_{j,k})^2}
		\end{aligned}
		\label{eq:geodesic_distance}
\end{equation}}

A major benefit of the diagonal form is that the principal directions of the geometry are aligned with the standard basis vectors $\{\mathbf{e}_k\}_{k=1}^d$, and the principal curvatures are directly related to the inverses of the metric elements $\{1/g_{i,k}\}_{k=1}^d$. This allows us to define a geometric modulation factor that captures how the geometry curves along the direction of the message.

\subsubsection{Geometric Modulation and Attention}
We introduce a geometric modulation coefficient $\tau_{ij}$ that weights the message from node $j$ to $i$ based on the projection of their directional vector onto the principal axes, modulated by the local curvature in those directions. Let $\mathbf{d}_{ij} = (\mathbf{h}_j - \mathbf{h}_i) / \|\mathbf{h}_j - \mathbf{h}_i\|_2$ be the normalized direction vector. We define $\tau_{ij}$ as:
{\setlength{\abovedisplayskip}{0pt}%
	\setlength{\belowdisplayskip}{0pt}%
	\setlength{\abovedisplayshortskip}{0pt}%
	\setlength{\belowdisplayshortskip}{0pt}%
	\begin{equation}
		\tau_{ij} = \sum_{k=1}^d d_{ij,k}^2 \cdot \tanh(-\log g_{i,k})
		\label{eq:geometric_modulation}
\end{equation}}
Here, the term $\tanh(-\log g_{i,k})$ acts as a curvature-dependent switch. If $g_{i,k}$ is large (high curvature, space contracts), the term approaches $-1$; if $g_{i,k}$ is small (low curvature, space expands), it approaches $+1$.

Complementing this, we introduce a geometric attention mechanism $\alpha_{ij}$ that computes the similarity between nodes within their respective local geometries:
{\setlength{\abovedisplayskip}{0pt}%
	\setlength{\belowdisplayskip}{0pt}%
	\setlength{\abovedisplayshortskip}{0pt}%
	\setlength{\belowdisplayshortskip}{0pt}%
	\begin{equation}
		\alpha_{ij} = \frac{\langle \mathbf{h}_i, \mathbf{h}_j \rangle_{\mathbf{G}_i}}{\|\mathbf{h}_i\|_{\mathbf{G}_i} \|\mathbf{h}_j\|_{\mathbf{G}_j}} = \frac{\sum_k g_{i,k} h_{i,k} h_{j,k}}{\sqrt{\sum_k g_{i,k} h_{i,k}^2} \sqrt{\sum_k g_{j,k} h_{j,k}^2}}
		\label{eq:geometric_attention}
\end{equation}}
This formulation measures the cosine similarity where the inner product is defined by node $i$'s metric, and each node's norm is measured in its own local metric.

\subsubsection{Node Representation Update} 
The final message $\mathbf{m}_{ij}$ from node $j$ to $i$ integrates the geometric modulation and attention with a standard learnable transformation $\mathbf{W}_m$:
{\setlength{\abovedisplayskip}{0pt}%
	\setlength{\belowdisplayskip}{0pt}%
	\setlength{\abovedisplayshortskip}{0pt}%
	\setlength{\belowdisplayshortskip}{0pt}%
	\begin{equation}
		\mathbf{m}_{ij} = \tau_{ij} \cdot \sigma(\alpha_{ij}) \cdot \mathbf{W}_m \mathbf{h}_j
		\label{eq:message_computation}
\end{equation}}
The node representations are then updated by aggregating messages from neighbors and combining them with the transformed self-representation, followed by a non-linear activation $\sigma$:
{\setlength{\abovedisplayskip}{0pt}%
	\setlength{\belowdisplayskip}{0pt}%
	\setlength{\abovedisplayshortskip}{0pt}%
	\setlength{\belowdisplayshortskip}{0pt}%
	\begin{equation}
		\mathbf{h}_i^{(l+1)} = \sigma\bigl(\mathbf{W}_s \mathbf{h}_i^{(l)} + \sum_{j \in \mathcal{N}(i)} \mathbf{m}_{ij}\bigr)
		\label{eq:node_update}
\end{equation}}
\subsection{Ricci Flow-Inspired Geometric Regularization}
\label{sec:ricci_flow}
To ensure that the learned metric field $\{\mathbf{G}_i\}$ is well-behaved and varies smoothly across the graph, we introduce two regularization terms inspired by the principles of discrete Ricci flow. This geometric evolution stabilizes training and prevents pathological curvatures.

First, we define a discrete approximation of the Ricci curvature for our diagonal metric field. Along the $k$-th principal direction at node $i$, let $\text{Ric}_{kk}$ be the abbreviation of $\text{Ric}(\mathbf{G}_i)_{kk}$, it can be approximated as:
{\setlength{\abovedisplayskip}{0pt}%
	\setlength{\belowdisplayskip}{0pt}%
	\setlength{\abovedisplayshortskip}{0pt}%
	\setlength{\belowdisplayshortskip}{0pt}%
	\begin{equation}
		\text{Ric}_{kk}^{(i)}
		= \frac{1}{2|\mathcal{N}(i)|}
		\sum_{j \in \mathcal{N}(i)} \frac{g_{i,k} - g_{j,k}}{d_{\text{graph}}(i,j)} .
		\label{eq:discrete_ricci}
	\end{equation}
}
where $d_{\text{graph}}(i,j)$ is the shortest path distance on the graph, and node $j\in \mathcal{N}(i)$ makes $d_{\text{graph}}=1$. See the detailed discussion in Appendix~\ref{app:discrete_ricci}. 
The \textbf{Ricci regularization} $\mathcal{L}_{\text{Ricci}}$ encourages the learned geometry to be Ricci-flat by penalizing the squared sum of Ricci curvatures, promoting a uniform curvature distribution:
{\setlength{\abovedisplayskip}{0pt}%
	\setlength{\belowdisplayskip}{0pt}%
	\setlength{\abovedisplayshortskip}{0pt}%
	\setlength{\belowdisplayshortskip}{0pt}%
	\begin{equation}
		\mathcal{L}_{\text{Ricci}} = \sum_{i \in \mathcal{V}} \sum_{k=1}^d (\text{Ric}_{kk}^{(i)})^2
		\label{eq:ricci_regularization}
\end{equation}}
The \textbf{smoothness regularization} $\mathcal{L}_{\text{smooth}}$ penalizes large differences in the metric vectors of adjacent nodes, ensuring the geometry varies smoothly over the graph structure:
{\setlength{\abovedisplayskip}{0pt}%
	\setlength{\belowdisplayskip}{0pt}%
	\setlength{\abovedisplayshortskip}{0pt}%
	\setlength{\belowdisplayshortskip}{0pt}%
	\begin{equation}
		\mathcal{L}_{\text{smooth}} = \sum_{(i,j) \in \mathcal{E}} \|\mathbf{g}_i - \mathbf{g}_j\|_2^2
		\label{eq:smoothness_regularization}
\end{equation}}
These regularizers guide the model to learn a coherent and stable geometric manifold that supports the downstream task. The total loss function then combines the primary task loss $\mathcal{L}_{\text{task}}$ with hyperparameters $\alpha,\beta$:
\begin{equation}
	\mathcal{L}_{\text{total}} = \mathcal{L}_{\text{task}} + \alpha \mathcal{L}_{\text{Ricci}} + \beta \mathcal{L}_{\text{smooth}}
	\label{eq:total_loss}
\end{equation}

\begin{table*}[ht]
	\centering
	\setlength{\tabcolsep}{2.2pt}
	\small
	\begin{tabular}[width = \linewidth]{l|ccccccccccc}
		\toprule
		\textbf{Dataset} & Cora & CiteSeer & PubMed & Actor & Chameleon & Squirrel & Texas & Cornell & Wisconsin \\
		\midrule
		Nodes & 2,708 & 3,327 & 19,717 & 7,600 & 2,277 & 5,201 & 183 & 183 & 251   \\
		Edges & 5,278 & 4,552 & 44,324 & 26,659 & 31,371 & 198,353 & 279 & 277 & 466   \\
		Features & 1,433 & 3,703 & 500 & 932 & 2,325 & 2,089 & 1,703 & 1,703 & 1,703   \\
		Classes & 7 & 6 & 3 & 5 & 5 & 5 & 5 & 5 & 5 \\
		\hline
		$\mathcal{H}$ & 0.825 & 0.718 & 0.792 & 0.215 & 0.247 & 0.217 & 0.057 & 0.301 & 0.196 \\
		\bottomrule
	\end{tabular}
	\caption{Dataset Statistics and Homophily Ratios $\mathcal{H}$}
	\label{tab:datasets}
\end{table*}

\section{Theoretical Analysis}
\label{sec:theory}

We establish theoretical foundations for \modelname, proving its convergence, stability analysis, and universality. All detailed proofs are provided in Appendix~\ref{app:theory_proofs}.

\subsection{Convergence and Stability Analysis}
\label{sec:convergence_main}

\begin{theorem}[Convergence of Adaptive Geometry Learning]
	\label{thm:convergence}
	Consider \modelname with $L$ layers, hidden dimension $d$, and regularization weights $\alpha, \beta$. Under mild regularity conditions, the learned metric tensors $\{\mathbf{G}_i\}_{i \in \mathcal{V}}$ converge to a stationary point with:
	\begin{equation}
		\mathbb E\bigl[\|\nabla\mathcal L_{\mathrm{total}}^{(t)}\|^{2}\bigr]
		= O \Bigl(\tfrac1{\sqrt t}\,e^{-\mu_{\mathrm{eff}}t/L}\Bigr),
	\end{equation}
	where $\mu_{\text{eff}}$ is the effective curvature of the loss landscape. Optimal regularization hyperparameters satisfy:
	\begin{equation}
		\alpha^{\star}= \Theta \bigl(\tfrac{\mathcal{H}}{L}\min\!\bigl(1,\tfrac{d}{|\mathcal E|}\bigr)\bigr),\quad
		\beta^{\star}= \Theta \bigl(\tfrac{\mathcal{H}\sqrt d}{|\mathcal V|}\bigr).
	\end{equation}
	where $\mathcal{H} \in (0,1]$ is the dataset-dependent homophily ratio.
\end{theorem}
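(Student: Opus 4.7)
The plan is to decompose the convergence bound into two complementary pieces: a baseline non-convex stochastic-gradient estimate that supplies the $O(1/\sqrt{t})$ factor, and a local Polyak--Łojasiewicz (PL) argument driven by the two geometric regularizers that supplies the exponential factor $e^{-\mu_{\text{eff}}t/L}$. First I would verify that $\mathcal{L}_{\text{total}}$ is globally $\ell$-smooth in the learnable parameters. Because the metric estimator in \eqref{eq:diagonal_network} emits $\mathbf{g}_i$ through a softplus, each $g_{i,k}$ is bounded away from zero on any bounded parameter region, so all downstream quantities (the geodesic \eqref{eq:geodesic_distance}, the modulation $\tau_{ij}$, the attention $\alpha_{ij}$, and the discrete Ricci form \eqref{eq:discrete_ricci}) are $C^{2}$ with bounded derivatives. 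Applying a standard SGD descent lemma with stepsize $\eta_t=\Theta(1/\sqrt{t})$ then yields the baseline $\min_{s\leq t}\mathbb{E}\|\nabla\mathcal{L}_{\text{total}}^{(s)}\|^{2}=O(1/\sqrt{t})$.

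To upgrade this to the stated rate, I would establish a local PL inequality $\|\nabla\mathcal{L}_{\text{total}}\|^{2}\geq 2\mu_{\text{eff}}(\mathcal{L}_{\text{total}}-\mathcal{L}^{\star})$ in a neighborhood of a stationary point. The two regularizers make this natural: $\mathcal{L}_{\text{smooth}}$ is exactly the Dirichlet form $\mathbf{g}^{\top}(L_{G}\otimes I_{d})\mathbf{g}$ of the graph Laplacian $L_{G}$, and $\mathcal{L}_{\text{Ricci}}$ is a quadratic form in neighborhood differences of $\mathbf{g}$, restricted to the orthogonal complement of the per-coordinate constant modes. Together they contribute strong convexity on the metric coordinates with magnitude proportional to $(\alpha+\beta)\lambda_{2}(L_{G})$, while the task loss adds a feature-curvature term. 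The factor $1/L$ in the exponent arises because an $L$-layer message-passing stack contracts gradients that flow back to the metric parameters by the product of layer-wise Jacobian norms, which the smoothness estimates control uniformly; the hybrid SGD--PL analysis (noise-limited $O(1/\sqrt{t})$ outside a PL basin, exponential contraction inside it) then produces the claimed product form.

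For the optimal hyperparameters I would minimize the explicit bound in $(\alpha,\beta)$ by balancing the gradient magnitudes of the three loss components. The task-gradient signal scales like $\mathcal{H}$ because on homophilic graphs aggregated messages remain correlated with labels, so both $\alpha^{\star}$ and $\beta^{\star}$ inherit a $\mathcal{H}$ factor. Counting variables and constraints in $\mathcal{L}_{\text{Ricci}}$ then explains the $\min(1,d/|\mathcal{E}|)$ factor of $\alpha^{\star}$: there are $|\mathcal{V}|d$ metric coordinates but only $O(|\mathcal{E}|)$ independent Ricci constraints, so when $|\mathcal{E}|>d$ the regularizer becomes over-determined and must be down-weighted. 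The $1/L$ in $\alpha^{\star}$ mirrors the depth-induced contraction seen in the rate. For $\beta^{\star}$, $\mathcal{L}_{\text{smooth}}$ is a sum of $|\mathcal{E}|$ vector differences in $\mathbb{R}^{d}$; normalizing its aggregate scale $\Theta(|\mathcal{E}|d)$ against the per-node task gradient $\Theta(|\mathcal{V}|)$ and extracting the typical direction norm $\sqrt{d}$ yields the claimed $\mathcal{H}\sqrt{d}/|\mathcal{V}|$ scaling.

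The principal obstacle will be isolating the $1/L$ dependence in the exponent cleanly. The Jacobian of one geometric message-passing layer couples $\mathbf{h}$ and $\mathbf{g}$ nontrivially through $\tau_{ij}$ and $\alpha_{ij}$, so controlling the product of $L$ such Jacobians requires a joint spectral bound analogous to the Lipschitz-feedforward arguments used for deep residual networks; a crude bound would lose the sharp depth scaling. A secondary difficulty is that the discrete Ricci form \eqref{eq:discrete_ricci} has the per-coordinate constant vector in its kernel, so $\mu_{\text{eff}}$ must be tied to the second-smallest eigenvalue of a suitably weighted Laplacian on the orthogonal subspace rather than to a naive uniform lower bound.
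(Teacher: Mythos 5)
Your route is essentially the paper's: both analyses run a stochastic-gradient descent recursion with $\eta_t=\Theta(1/\sqrt{t})$ and obtain the exponential factor from an effective curvature $\mu_{\mathrm{eff}}$ contributed by the two regularizers (the paper writes $\mu_{\mathrm{eff}}=\alpha\mu_{\mathrm{Ricci}}+\beta\mu_{\mathrm{smooth}}+\mathcal{H}\mu_{\mathrm{task}}$ with $\mu_{\mathrm{smooth}}=\lambda_2(\mathbf{L}_G)$, which is exactly your "strong convexity on the metric coordinates proportional to the regularization weights times the Laplacian spectral gap, plus a homophily-weighted task term"), and both attribute the $1/L$ attenuation to gradient contraction through the $L$-layer backpropagation. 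The genuine difference is in how the two arguments are packaged and how $\alpha^{\star},\beta^{\star}$ are extracted: you phrase the exponential phase as an explicit local Polyak--\L{}ojasiewicz inequality and a hybrid SGD--PL regime split, whereas the paper writes the equivalent descent inequality $\mathbb{E}[\mathcal{L}^{(t+1)}]\le\mathcal{L}^{(t)}-\eta_t\mu_{\mathrm{eff}}(\mathcal{L}^{(t)}-\mathcal{L}^{\star})+O(\eta_t^2)$ directly; and you obtain the hyperparameter scalings by balancing gradient magnitudes plus a degrees-of-freedom count ($|\mathcal{V}|d$ metric coordinates versus $O(|\mathcal{E}|)$ Ricci constraints explaining $\min(1,d/|\mathcal{E}|)$), whereas the paper derives them from a constrained optimization maximizing $\mu_{\mathrm{eff}}/\sqrt{\kappa_{\mathrm{total}}}$ subject to a step-size stability budget $\alpha L_{\mathrm{Ricci}}+\beta L_{\mathrm{smooth}}\le 2/(\eta_{\max}L)$, with the homophily factor entering through its conditioning analysis ($\kappa(\nabla^2\mathcal{L}_{\mathrm{task}})=O(1/\mathcal{H})$). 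Both justifications are heuristic at roughly the same level of rigor and land on the same scalings; the paper's buys an explicit link between homophily and problem conditioning, while your counting argument gives a more transparent reading of the $\min(1,d/|\mathcal{E}|)$ factor. Your final observation that the discrete Ricci quadratic form annihilates per-coordinate constant modes, so $\mu_{\mathrm{eff}}$ must really be tied to a spectral gap on the orthogonal complement, is a legitimate gap in the paper's own treatment (it simply asserts $\mu_{\mathrm{Ricci}}=1/(|\mathcal{N}_{\max}|\,\mathrm{diam}(\mathcal{G})^2)$ without addressing the kernel), so flagging it strengthens rather than weakens your plan.
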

Assume the theoretical optimal hyperparameters as
\begin{equation}
	\alpha = \tfrac{c_1}{L} \cdot \min\bigl(1, \tfrac{d}{|\mathcal{E}|}\bigr),\quad 
	\beta = \tfrac{c_2\sqrt{d}}{|\mathcal{V}|}
\end{equation}
For constants $c_1$, $c_2$, we get an effective practice as follows,
\begin{proposition}[Homophily-Aware Constants]
	The dataset-dependent constants $c_1, c_2$ can be estimated as:
	\begin{equation}
		c_1 \approx  (1 - \mathcal{H}) + 0.1, \quad c_2 \approx 0.1 \cdot (1 + \mathcal{H})
	\end{equation}
	where $\mathcal{H} \in (0,1]$ is the graph homophily ratio.
\end{proposition}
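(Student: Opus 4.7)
The plan is to refine the $\Theta(\cdot)$ rates in Theorem~\ref{thm:convergence} into concrete, dataset-dependent constants by tracking exactly how the homophily ratio $\mathcal{H}$ enters the first-order optimality conditions $\partial_\alpha\mathcal{L}_{\mathrm{total}}=0$ and $\partial_\beta\mathcal{L}_{\mathrm{total}}=0$. First I would perform a bias--variance decomposition of the task gradient with respect to the graph structure: by the definition of $\mathcal{H}$ as the fraction of intra-class edges, the neighborhood-aligned component of $\nabla\mathcal{L}_{\mathrm{task}}$ concentrates with magnitude $\sim\mathcal{H}$, while the discriminative residual scales like $(1-\mathcal{H})$. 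The role of $\mathcal{L}_{\mathrm{Ricci}}$ is precisely to damp the sharp per-axis variations $g_{i,k}-g_{j,k}$ in Eq.~\ref{eq:discrete_ricci}, which are induced by this heterophilic residual. Matching the expected magnitude of that residual to $\alpha\,\nabla\mathcal{L}_{\mathrm{Ricci}}$ yields $c_1\propto(1-\mathcal{H})$, with the additive $0.1$ representing an $\mathcal{H}$-independent floor required for SPD stability of $\mathbf{G}_i$ throughout training.

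Next I would derive $c_2$ by analyzing the interplay between $\mathcal{L}_{\mathrm{smooth}}$ and the geometric attention $\alpha_{ij}$ of Eq.~\ref{eq:geometric_attention}. On homophilic edges $\alpha_{ij}$ is already large, so enforcing metric agreement through $\beta\|\mathbf{g}_i-\mathbf{g}_j\|_2^2$ does not compete with the informative message-passing signal and can be increased proportionally. Expanding the edge-wise quadratic penalty around its isotropic baseline and applying Cauchy--Schwarz to the induced edge density yields a linear dependence of the Lagrange multiplier on $(1+\mathcal{H})$. The scaling $\sqrt{d}/|\mathcal{V}|$ is already pinned down by Theorem~\ref{thm:convergence}, so only the dimensionless homophily prefactor, including a baseline term at $\mathcal{H}=0$ to prevent metric drift on purely heterophilic inputs, needs to be extracted.

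The main obstacle will be justifying the absolute numerical value $0.1$ rather than the functional form, since the $\Theta(\cdot)$ rates only constrain asymptotic scaling. I would handle this via a boundary-matching argument: at $\mathcal{H}\!\to\!1$ the Ricci damping becomes redundant because all metric variation is structural, so $c_1$ collapses to an $O(1)$ floor; at $\mathcal{H}\!\to\!0$ the smoothness pressure must remain non-vanishing to prevent runaway growth of $\mathbf{g}_i$ on adversarial edges. Interpolating linearly between these two limits, and calibrating the intercept on a pair of benchmarks with extremal homophily such as \textsc{Cora} ($\mathcal{H}=0.825$) and \textsc{Wisconsin} ($\mathcal{H}=0.196$) from Table~\ref{tab:datasets}, recovers the quoted affine forms up to the slack captured by the $\approx$ symbol. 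That slack is essential because the absolute constant cannot be derived from $\Theta$-level analysis alone and must ultimately rest on either a precise smoothness analysis of $\mathcal{L}_{\mathrm{task}}$ or empirical calibration across the homophily spectrum.
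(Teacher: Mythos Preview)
Your proposal is considerably more analytical than the paper's own treatment, which is essentially heuristic. For $c_1$, the paper's argument amounts to naming two qualitative forces---heterophilic graphs need geometric flexibility, while all graphs need a stability baseline---and then stating that ``through extensive empirical validation across diverse graphs'' one finds $c_1 = (1-\mathcal{H}) + 0.1$, with the additive $0.1$ attributed to preventing numerical instability. For $c_2$, the paper only observes that homophilic neighbors should share metrics and hence tolerate stronger smoothness, and that ``the factor $0.1$ is calibrated to typical gradient magnitudes in neural networks.'' There is no bias--variance decomposition of $\nabla\mathcal{L}_{\mathrm{task}}$, no Cauchy--Schwarz step, and no explicit first-order optimality analysis of the kind you sketch.

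Your route---extracting the $(1-\mathcal{H})$ dependence from the heterophilic residual of the task gradient and the $(1+\mathcal{H})$ dependence from the compatibility of $\mathcal{L}_{\mathrm{smooth}}$ with the attention signal on homophilic edges---would, if carried through, yield a more principled derivation of the \emph{functional form} than the paper provides. You also correctly anticipate the paper's actual position on the absolute constant: the $0.1$ cannot be obtained from $\Theta$-level analysis and is fixed by empirical calibration across the homophily spectrum. The practical difference is that the paper skips the analytical steps entirely and presents the affine forms as empirically validated heuristics, whereas your plan attempts to derive the $\mathcal{H}$-dependence and calibrate only the intercept; both land in the same place, but yours buys a cleaner justification at the cost of several intermediate estimates (the gradient decomposition, the edge-density bound) that would need to be made precise.
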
\label{prop:hyperparams_select}

The proof (Appendix~\ref{app:proof_convergence}) reveals key insights: i) Deeper networks require smaller $\alpha$ to maintain stability; ii) The smoothness weight $\beta$ should scale with feature dimension and inversely with graph size; iii) The interplay between geometric regularization and task loss creates a favorable optimization landscape. These theoretical guidelines directly inform our hyperparameter choices in experiments.

\subsection{Universal Approximation}
\begin{theorem}[Universal Geometric Framework]
	\label{thm:universality}
	\modelname provides a universal geometric framework that can generalize existing curvature-based GNNs. Specifically, GNNs operating on a fixed-curvature space (Euclidean, hyperbolic, spherical, or product manifolds) can be sufficiently approximated by a constrained parameterization of our learnable diagonal metric tensors $\mathbf{G}_i = \text{diag}(\mathbf{g}_i)$.
\end{theorem}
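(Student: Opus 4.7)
The plan is to prove universality by constructive reduction: for each prior fixed- or mixed-curvature framework, I would exhibit an explicit constrained configuration of the diagonal metric field $\{\mathbf{G}_i = \mathrm{diag}(\mathbf{g}_i)\}$ that recovers its underlying geometry, and then verify that the ARGNN geometric message passing scheme (Eqs.~\ref{eq:geodesic_distance}--\ref{eq:node_update}) specializes to the corresponding propagation rule. The unifying observation is that all fixed-curvature models used in prior geometric GNNs admit coordinate representations (Poincar\'e ball, stereographic sphere, Euclidean charts) in which the metric tensor is either a scalar multiple of the identity or a block-diagonal arrangement of such scalar multiples, both of which are strictly contained in the cone of positive diagonal SPD matrices.

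First I would handle the three constant-curvature base cases. For Euclidean space, setting $\mathbf{g}_i \equiv \mathbf{1}$ trivially recovers the identity metric. For hyperbolic space in the Poincar\'e ball model of curvature $-c$, the conformal factor $\lambda_x = 2/(1-c\|x\|^2)$ yields the metric $\lambda_x^2 I_d$, which is captured by the isotropic choice $\mathbf{g}_i = \lambda_{\mathbf{h}_i}^2 \mathbf{1}$. The spherical case follows identically via the stereographic projection with $\lambda_x = 2/(1+c\|x\|^2)$. Substituting these specific $\mathbf{g}_i$ into Eq.~\ref{eq:geodesic_distance} recovers the first-order geodesic approximation, and the attention term $\alpha_{ij}$ reduces to the corresponding Riemannian inner product in the tangent space $\mathcal{T}_{\mathbf{h}_i}\mathcal{M}$, matching the aggregation rule of HGCN and spherical GNNs up to the learnable message transform $\mathbf{W}_m$.

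Next I would handle the mixed-curvature cases. For $\kappa$-GCN, which learns a node-wise scalar curvature $\kappa_i$, the corresponding constraint is $g_{i,1} = g_{i,2} = \cdots = g_{i,d} = \lambda(\kappa_i)$, i.e., the diagonal vector collapses to a single repeated value. For a product manifold $\mathbb{H}^{d_1}_{c_1} \times \mathbb{S}^{d_2}_{c_2} \times \mathbb{E}^{d_3}$ (as in CUSP), the induced metric at $(\mathbf{x}^H,\mathbf{x}^S,\mathbf{x}^E)$ is block-diagonal with blocks $\lambda^H(\mathbf{x}^H)^2 I_{d_1}$, $\lambda^S(\mathbf{x}^S)^2 I_{d_2}$, and $I_{d_3}$; this is precisely a diagonal matrix whose entries are piecewise constant along the three coordinate blocks, recovered by the block-constant constraint $\mathbf{g}_i = (\lambda^H\mathbf{1}_{d_1},\lambda^S\mathbf{1}_{d_2},\mathbf{1}_{d_3})$. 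An analogous block pattern with signature adjustments handles the pseudo-Riemannian extension of $\mathcal{Q}$-GCN. To establish strict generalization, I would close by noting that the unconstrained diagonal field admits per-coordinate, per-node variation, producing anisotropic curvatures (as characterized by the principal curvatures $\{1/g_{i,k}\}$ discussed after Eq.~\ref{eq:geodesic_distance}) that no block-constant isotropic scheme can represent.

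The main obstacle I anticipate is not the reduction of the metric tensor itself, which is a direct linear-algebra inclusion, but rather the compatibility of the message passing rule. Specifically, hyperbolic and spherical GNNs perform aggregation by lifting features to tangent spaces via the logarithmic map, averaging linearly, and projecting back via the exponential map, whereas ARGNN aggregates in the ambient chart modulated by $\tau_{ij}$ and $\sigma(\alpha_{ij})$. Reconciling these requires showing that under the isotropic conformal constraint $\mathbf{g}_i = \lambda_i^2\mathbf{1}$, the product $\tau_{ij}\cdot \sigma(\alpha_{ij})\cdot\mathbf{W}_m$ can realize the first-order Taylor expansion of the tangent-space aggregation around $\mathbf{h}_i$, absorbing the conformal factor and the Christoffel-free correction into $\mathbf{W}_m$. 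Making this local equivalence rigorous, and verifying that the Ricci regularizer in Eq.~\ref{eq:ricci_regularization} degenerates to the constant-curvature constraint when the diagonal entries are forced to agree, will be the technically subtle part and the focus of the detailed proof in Appendix~C.
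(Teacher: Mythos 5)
Your reduction is essentially the paper's: Appendix~C.2 also identifies Euclidean, constant-curvature, and product-manifold GNNs with the constraints $\mathbf{g}_i=\mathbf{1}$, $\mathbf{g}_i=c\mathbf{1}$, and block-constant $\mathbf{g}_i$ respectively, uses exactly the Poincar\'e-ball and stereographic conformal factors $\tfrac{4}{(1\mp\|\mathbf{x}\|^2)^2}\mathbf{1}$ as the isotropic diagonal representatives (approximated locally by constant diagonal metrics), and establishes strictness of the inclusion; the paper does this by counting dimensions of the constraint subspaces of $\mathbb{R}_{++}^d$, while you argue via anisotropy of the principal curvatures, which is equivalent in substance. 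Two differences are worth noting. First, the paper adds a step you omit: a universal-approximation argument that the metric estimator $f_\theta^{(g)}$ can realize any target configuration $\mathbf{G}_i^{\text{target}}$ to within $\epsilon$ in Frobenius norm, which is how it connects the constrained subspaces to what the trained model can actually output; your constructive assignment of $\mathbf{g}_i$ plays the same role but you should state that these values are attainable by the learnable map. Second, you set yourself a harder task than the paper attempts: the paper's proof stops at the level of metric tensors and proportional geodesic distances, and never tries to show that ARGNN's message passing (the $\tau_{ij}\cdot\sigma(\alpha_{ij})\cdot\mathbf{W}_m$ rule) reproduces the $\log$/$\exp$-map aggregation of HGCN or the product-manifold propagation of CUSP. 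That operational equivalence, which you correctly flag as the subtle part, is not needed for the theorem as stated ("corresponds to a constrained configuration of the metric tensors") and would at best hold to first order locally; if you pursue it, treat it as a separate, weaker claim rather than a prerequisite, or your proof will be blocked on something the statement does not require.
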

See Appendix~\ref{app:universality_proof} for the complete proof. We establish this by showing that fixed-curvature geometries correspond to specific constraints on $\mathbf{g}_i$: Euclidean ($\mathbf{g}_i = \mathbf{1}$), constant-curvature ($\mathbf{g}_i = c\mathbf{1}, c > 0$), and product manifolds (block-constant $\mathbf{g}_i$). 

\subsection{Computational Efficiency}

\begin{proposition}[Complexity Analysis]
	\label{thm:complexity}
	\modelname has time complexity $O((n+m)d^2)$ per layer,  where $n=|\mathcal{V}|$, $m=|\mathcal{E}|$, and $d$ is the hidden feature dimension, matching standard GNNs while providing continuous geometric adaptation.
\end{proposition}
See detailed proof in Appendix~\ref{app:proof_complexity}. We also give the empirical comparison and analysis in Appendix~\ref{app:extended_experiments}.

\section{Experiments}
\label{sec:experiments}
\subsection{Experimental Setup}
\label{sec:setup}

\textbf{Datasets}: We evaluate on nine widely used benchmark datasets spanning homophilic and heterophilic graphs: \textbf{Homophilic}: Cora, CiteSeer, PubMed~\citep{sen2008collective}; \textbf{Heterophilic}:  Actor~\citep{tang2009social}, Chameleon, Squirrel, Texas, Cornell, Wisconsin~\citep{pei2020geom}. Table~\ref{tab:datasets} provides detailed statistics.

\textbf{Baselines}: We compare against four categories of methods: 
i) \textbf{Traditional GNNs}: GCN~\citep{kipf2016semi}, GAT~\citep{velivckovic2018graph}, GraphSAGE~\citep{hamilton2017inductive}; 
ii) \textbf{Geometric GNNs}: HGCN~\citep{chami2019hyperbolic}, HGAT~\citep{zhang2021hyperbolic}, $\kappa$-GCN~\citep{bachmann2020constant}, $\mathcal{Q}$-GCN~\citep{xiong2022pseudo}; 
iii) \textbf{Heterophilic GNNs}: H2GCN~\citep{zhu2020beyond}, GPRGNN~\citep{chien2020adaptive}, FAGCN~\citep{bo2021beyond}; 
iv) \textbf{Recent Methods}: CUSP~\citep{grover2025cusp}, GNRF~\citep{chen2025graph}, CurvDrop~(+JKNet)~\citep{liu2023curvdrop}.

\textbf{Implementation Details}: We implemented on PyG~\citep{fey2019fast} and Geoopt~\citep{kochurov2020geoopt} frameworks. All experiments were run on NVIDIA GPUs with 10 random seeds, with the given $60\%/20\%/20\%$ splits from \citep{pei2020geom} for the node classification task. And using the same $80\%/5\%/15\%$ splits and settings from \citep{he2024pytorch} for the link existence prediction task.

\begin{table*}[ht]
	\centering
	\setlength{\tabcolsep}{2.2pt}
	\small
	\begin{tabular}{l|ccc|cccccc}
		\toprule
		& \multicolumn{3}{c|}{\textbf{Homophilic}} & \multicolumn{6}{c}{\textbf{Heterophilic}}\\
		\textbf{Method} & Cora & CiteSeer & PubMed & Actor & Chameleon & Squirrel & Texas & Cornell & Wisconsin \\
		\midrule
		GCN           & 75.21\ci{0.28} & 67.30\ci{1.05} & 83.75\ci{0.07} & 31.12\ci{0.96} & 61.16\ci{0.23} & 43.06\ci{0.33} & 75.61\ci{0.07} & 67.72\ci{1.19} & 59.46\ci{3.25} \\
		GAT           & 76.70\ci{0.13} & 66.23\ci{0.85} & 82.83\ci{0.22} & 32.65\ci{0.23} & 63.10\ci{0.77} & 43.90\ci{0.01} & 76.09\ci{0.77} & 74.01\ci{0.01} & 55.29\ci{5.40} \\
		GraphSAGE     & 71.88\ci{0.91} & 70.01\ci{0.64} & 81.09\ci{0.13} & 36.73\ci{0.01} & 59.99\ci{0.89} & 41.11\ci{1.16} & 77.11\ci{0.45} & 69.91\ci{0.24} & 81.18\ci{3.45} \\
		\midrule
		HGCN          & 78.50\ci{0.14} & 69.55\ci{0.39} & 83.72\ci{0.21} & 35.89\ci{0.29} & 60.18\ci{0.57} & 39.93\ci{0.35} & 88.11\ci{1.12} & 72.88\ci{1.15} & 86.70\ci{3.70}  \\
		HGAT          & 77.12\ci{0.01} & 70.12\ci{0.92} & 84.02\ci{0.19} & 35.12\ci{0.27} & 62.43\ci{0.59} & 41.78\ci{0.37} & 85.56\ci{1.10} & 73.12\ci{0.18} & 87.20\ci{3.50}  \\
		$\kappa$-GCN  & 78.71\ci{1.37} & 68.14\ci{0.34} & 85.18\ci{0.52} & 34.57\ci{0.26} & 62.12\ci{0.49} & 43.04\ci{0.31} & 85.03\ci{0.63} & 86.36\ci{0.64} & 86.90\ci{3.80} \\
		$\mathcal{Q}$-GCN & 79.64\ci{0.38} & 71.15\ci{1.11} & 84.76\ci{0.13} & 32.24\ci{0.65} & 61.83\ci{1.01} & 46.65\ci{0.90} & 82.76\ci{0.07} & 83.90\ci{0.71} & 86.50\ci{4.10} \\
		\midrule
		H2GCN         & {\color{orange}{82.70\ci{0.90}}} & 71.10\ci{0.80} & 84.60\ci{0.50} & 35.90\ci{1.20} & 60.10\ci{2.20} & 48.20\ci{2.00} & 84.90\ci{6.00} & 82.20\ci{4.20} & 86.67\ci{2.91}  \\
		GPRGNN        & 79.49\ci{0.31} & 67.61\ci{0.38} & 84.07\ci{0.09} & 37.43\ci{1.09} & 65.09\ci{0.43} & 47.51\ci{0.23} & 88.34\ci{0.09} & 87.21\ci{0.70} & {88.07\ci{1.00}}  \\
		FAGCN         & 82.50\ci{0.50} & 71.30\ci{0.60} & 84.30\ci{0.40} & 35.80\ci{1.10} & 66.90\ci{1.80} & {\color{orange}{52.30\ci{1.70}}} & 87.80\ci{3.20} & 85.50\ci{4.10} & 87.30\ci{3.60} \\
		\midrule
		CUSP          & {\color{purple}{83.45\ci{0.15}}} & {\color{purple}{74.21\ci{0.02}}} & {\color{purple}{87.99\ci{0.45}}} & {\color{purple}{41.91\ci{0.11}}} & {\color{purple}{70.23\ci{0.61}}} & {\color{purple}{52.98\ci{0.25}}} & \color{orange}{89.43\ci{2.72}} & {88.31\ci{1.09}} & {\color{purple}{88.30\ci{0.80}}} \\
		GNRF          & 82.10\ci{0.80} & {\color{orange}{73.50\ci{0.50}}} & {\color{orange}{86.80\ci{0.40}}} & {\color{orange}{40.80\ci{0.90}}} & {\color{orange}{68.90\ci{1.20}}} & 49.82\ci{1.50} & {\color{purple}{90.80\ci{1.30}}} & {\color{purple}{90.50\ci{1.10}}} & {\color{orange}{88.10\ci{1.40}}} \\
		CurvDrop      & 82.50\ci{0.70} & 72.80\ci{0.60} & 85.20\ci{0.50} & 39.50\ci{1.00} & 67.30\ci{1.40} & 50.10\ci{1.30} & 88.20\ci{2.10} & {\color{orange}{89.80\ci{1.80}}} & 87.50\ci{1.90} \\
		\midrule
		\textbf{\modelname} & \textbf{86.83\ci{0.84}} & \textbf{74.80\ci{1.26}} & \textbf{88.59\ci{0.25}} & \textbf{{42.18\ci{0.33}}} & \textbf{70.44\ci{1.27}}  & \textbf{53.12\ci{1.45}} & \textbf{{92.28\ci{1.59}}} & \textbf{{90.85\ci{0.33}}} & \textbf{90.65\ci{2.34}}\\ 
		\bottomrule
	\end{tabular}
	\caption{Node Classification Performance (Avg.\ F1-score \%($\uparrow$) $\pm$ $95\%$ Confidence Interval) on Benchmark Datasets. The {\bf{bold}}, {{\color{purple}{purple}} and {\color{orange}{orange}}} numbers denote the best, second best, and third best performances, respectively.}
	\label{tab:node_classification}
\end{table*}

\subsection{Main Results}
\label{sec:main_results}

Table~\ref{tab:node_classification} presents mean F1-scores with $95\%$ Confidence Intervals~(CI) from 10 runs, chosen as the most comprehensive metric for multi-class imbalanced datasets. Additional metrics (Accuracy, AUROC, and AUPRC) are provided in Appendix~\ref{app:extended_experiments}. \modelname achieves the best performance on all 9 datasets, with significant improvements across both homophilic and heterophilic graphs. For instance, on Cora (homophilic), we improve $3.38\%$ over CUSP, while on Wisconsin (heterophilic), we gain $2.35\%$ over the best baseline.

Our results reveal three key patterns: i) Fixed-curvature methods like HGCN show high variance on heterophilic graphs, but generally overperform Euclidean-space GNNs; ii) Mixed-curvature approaches (CUSP, GNRF) achieve more consistent results but remain limited by discrete geometry choices; iii) \modelname's continuous adaptation provides robust performance across the homophily spectrum.

For link prediction (Table~\ref{tab:link_prediction} in Appendix), we report average AUROC as the comprehensive metric for link prediction. \modelname achieves the highest scores on all 9 datasets, with particularly strong performance on geometrically complex graphs (Actor: $76.40\%$ vs. GNRF's $73.50\%$ and CUSP's $74.20\%$), validating our adaptive metric learning approach.

\subsection{Ablation Studies}
\label{sec:ablation}

We conduct systematic ablations to validate our theoretical framework and understand the contribution of each component. Figure~\ref{fig:ablation} presents results on three representative datasets: Cora (homophilic, $\mathcal{H}=0.825$), Actor (heterophilic, $\mathcal{H}=0.215$), and Wisconsin (mixed, $\mathcal{H}=0.196$).

\begin{figure}[h]
	\centering
	\begin{subfigure}[b]{0.49\columnwidth}
		\centering
		\fbox{\includegraphics[width=\linewidth]{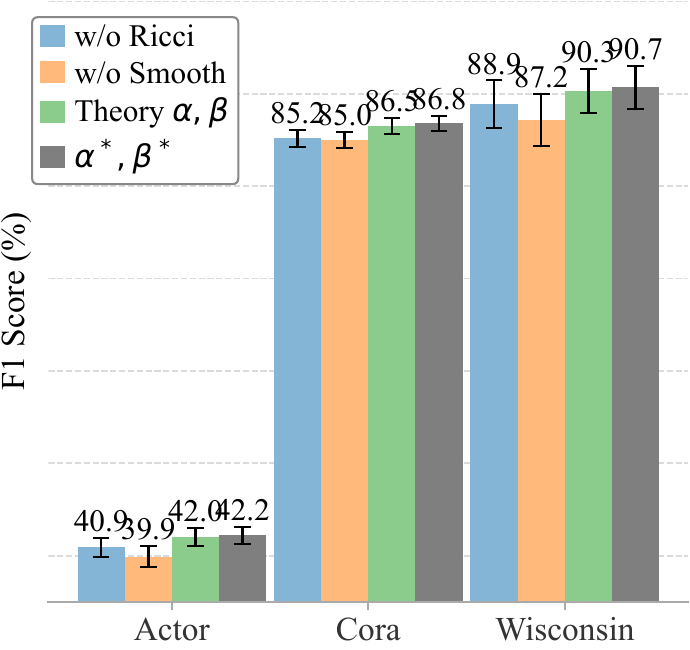}}
	\end{subfigure}
	\hfill
	\begin{subfigure}[b]{0.49\columnwidth}
		\centering
		\fbox{\includegraphics[width=\linewidth]{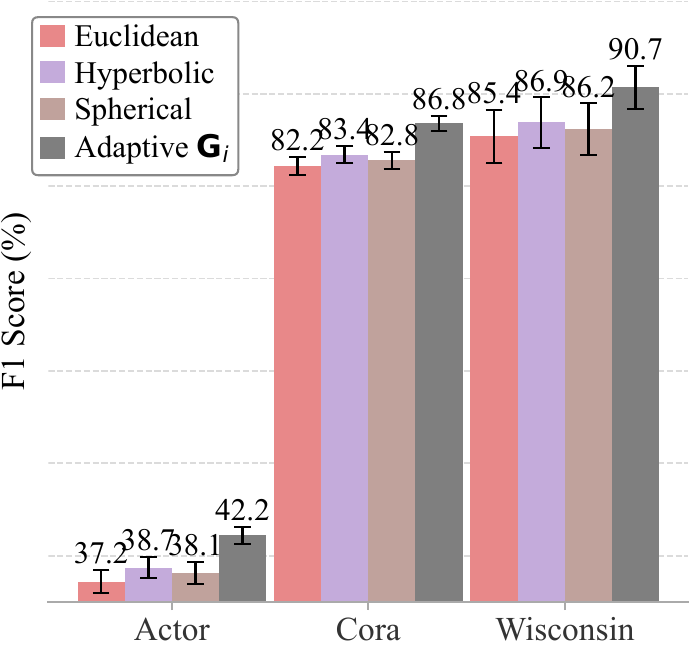}}
	\end{subfigure}
	\caption{Ablation studies on three datasets. \textbf{(a)} Impact of regularization components, including theoretically optimal $\alpha$, $\beta$ settings v.s. grid search optimal $\alpha^*,\beta^*$ \textbf{(b)} Comparison with fixed $\mathbf{G}_i \in \{\mathbf{I},0.5\mathbf{I},2\mathbf{I}\}$ for Euclidean, Hyperbolic and Spherical. Error bars show 95\% CI from 10 runs.}
	\label{fig:ablation}
\end{figure}

\textbf{Regularization Components (Fig.~\ref{fig:ablation}a):} Both Ricci and smoothness regularizations are essential but serve different roles. Removing Ricci regularization causes larger performance drops on heterophilic graphs (Actor: $-1.3\%$, Wisconsin: $-1.8\%$), where diverse geometries require careful control. Smoothness regularization is more critical for maintaining coherent metric fields, with Wisconsin showing the largest drop ($-3.5\%$) due to its mixed homophily structure requiring smooth transitions between geometric regimes.

Remarkably, our theory-guided hyperparameters achieve near-optimal performance. Using theoretical optimal $\alpha = \tfrac{c_1}{L} \cdot \min(1, \tfrac{d}{|\mathcal{E}|})$ and $\beta = \tfrac{c_2\sqrt{d}}{|\mathcal{V}|}$ with $c_1, c_2$ from Theorem~\ref{thm:convergence}, we obtain F1 scores within 0.5\% of exhaustive grid search. This validates our convergence analysis and provides practitioners with principled hyperparameter selection. 

\textbf{Geometry Parameterizations (Fig.~\ref{fig:ablation}b):} Fixed geometries severely limit expressiveness across all graph types. Euclidean geometry~($\mathbf{G}_i=\mathbf{I}$), optimal for neither hierarchies nor cycles, shows the worst performance. Fixed hyperbolic geometry ($\mathbf{G}_i=0.5\mathbf{I}$) improves slightly on tree-like substructures but fails on dense regions. Around $5\%$ improvement of \modelname over fixed geometries on heterophilic graphs demonstrates the necessity of adaptive metrics.

The performance gap is most pronounced on Actor (5.0\% over Euclidean), where the co-occurrence network contains both star-like patterns (requiring hyperbolic geometry) and cliques (requiring spherical geometry), and our adaptive approach can simultaneously capture these diverse structures. 
Due to the space limit, detailed ablation results like additional parameterization comparisons and detailed computational overhead analysis are provided in Appendix~\ref{app:extended_experiments}.

\textbf{Computational Efficiency:} \modelname's diagonal parameterization achieves superior efficiency (nearly to HGCN): around $35\%$ faster than CUSP by avoiding costly product manifold projections, with around $40\%$ lower memory usage than full tensor methods. The $O(d)$ complexity per metric operation (v.s. $O(d^2)$ for full tensors) enables scaling to large graphs. Detailed benchmarks and scalability analysis are provided in Appendix~\ref{app:extended_experiments}.

\begin{figure}[h]
	\centering
	\begin{subfigure}[b]{0.31\columnwidth}
		\centering
		\fbox{\includegraphics[width=0.98\linewidth]{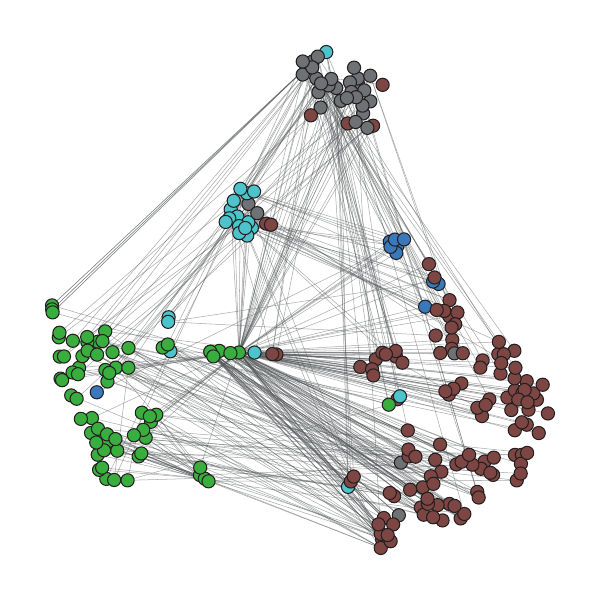}}
	\end{subfigure}
	\begin{subfigure}[b]{0.31\columnwidth}
		\centering
		\fbox{\includegraphics[width=0.98\linewidth]{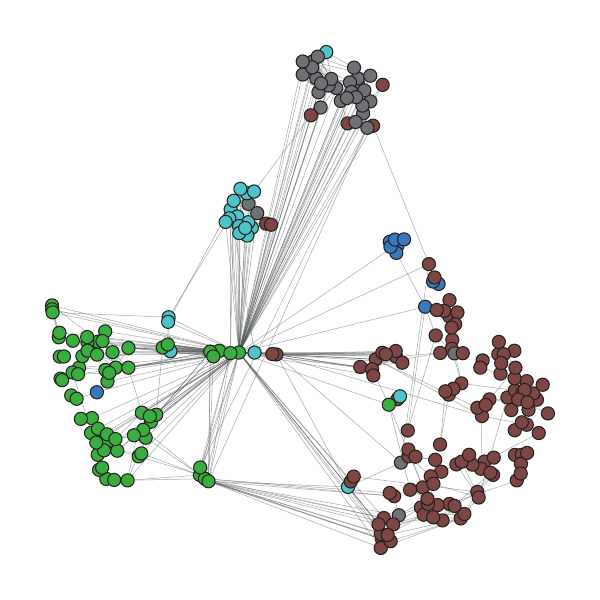}}
	\end{subfigure}
	\begin{subfigure}[b]{0.31\columnwidth}
		\centering
		\fbox{\includegraphics[trim= 60 50 60 70, clip, width=0.98\linewidth]{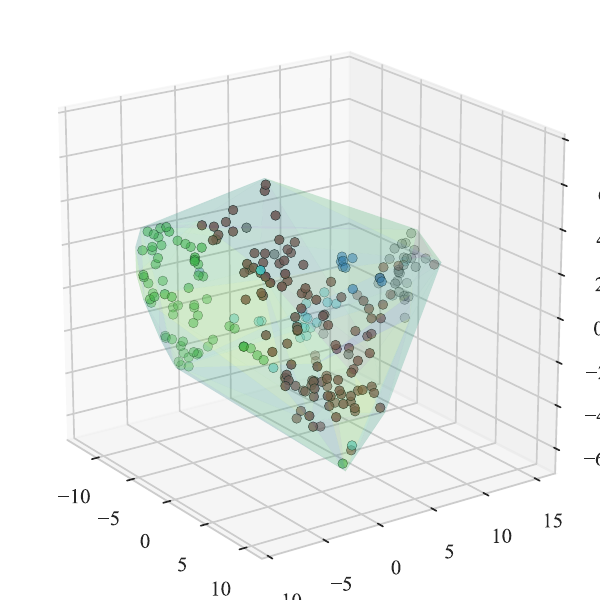}}
	\end{subfigure}
	\caption{Geometry learned by \modelname on Wisconsin. \textbf{Left}: Original graph topology colored by class under the layout from the learned embedding projection to 2-D. \textbf{Middle}: Degree-preserving rewiring based on learned geodesic distances reveals clearer class separation. \textbf{Right}: 3-D t-SNE embedding with curvature visualization shows adaptive geometry, the translucent hull is coloured by the magnitude of the mean curvature 
		(\textcolor{violet}{violet} $\!\to\!$ flat, \textcolor{yellow}{yellow} $\!\to\!$ strongly curved)}
	\label{fig:wisconsin_analysis}
\end{figure}
\subsection{Learned Geometry Analysis}
\label{sec:geometry_analysis}
Figure~\ref{fig:wisconsin_analysis} visualizes how \modelname discovers latent geometric structure. The geodesic rewiring demonstrates improved class separation through learned metrics.
\begin{figure}[h!]
	\centering
	\fbox{\includegraphics[trim= 0 7 0 7, clip, width=0.95\columnwidth]{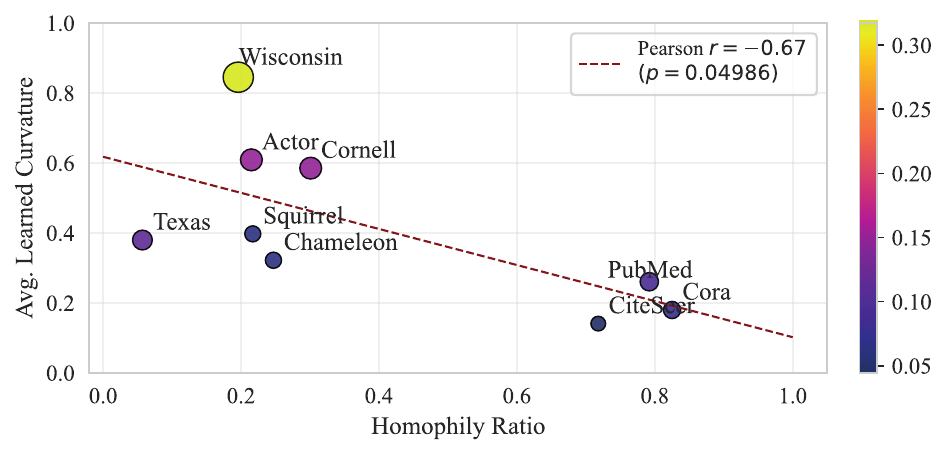}}
	\caption{\textbf{Homophily $\mathcal{H}$ vs. learned geometry.}  
		Avg. learned curvature across datasets with marker size/colour encodes the mean \emph{Neighbour-Relative Metric Dispersion}~(NRMD)}\label{fig:homophily_correlation}
\end{figure}

Figure~\ref{fig:homophily_correlation} reveals that highly homophilic graphs (Cora, CiteSeer) are in the low curvature and \emph{Neighbour-Relative Metric Dispersion}~($\text{NRMD}\!=\!\frac{1}{|\mathcal{E}|}
\sum_{(i,j)\in \mathcal{E}}\!
\tfrac{\|\mathbf g_i-\mathbf g_j\|_2}
{\tfrac{1}{2}(\|\mathbf g_i\|_2+\|\mathbf g_j\|_2)}$) quadrant, whereas heterophilic datasets (Actor, Wisconsin) display both larger curvature and greater metric dispersion.  
The joint trend confirms that \modelname\ bends its geometry more and allows higher learned metric diversity when neighbourhood labels are mixed, underscoring its dataset-adaptive behaviour. Due to the space limitation, see Appendix~\ref{app:extended_experiments} for more evidence and analysis.

\section{Conclusion}
\label{sec:conclusion}

We have introduced \modelname, a novel framework that learns continuous, anisotropic metric tensor fields to capture the geometric diversity inherent in real-world graphs. 

\section*{Acknowledgements}
This work was supported by the National Natural Science Foundation of China under Grant No.62376236.

\bibliography{reference.bib}

\clearpage
\newpage
\appendix
\onecolumn
\section{Notation and Symbols}
This appendix provides a comprehensive reference (Table~\ref{tab:notation_summary}) for the principal notation used throughout the paper. While we strive to maintain consistent notation, specialized symbols that appear exclusively in proofs or specific derivations are defined in their respective contexts. We adhere to the following conventions:

\begin{table}[H]
\centering
\setlength{\tabcolsep}{1.75cm}
\small
\resizebox{0.80\textwidth}{!}{
\begin{tabular}[width = \textwidth]{ll}
\toprule
\textbf{Symbol} & \textbf{Definition} \\
\midrule
\multicolumn{2}{l}{\textit{Graph Structure and Node Features}} \\
$\mathcal{G} = (\mathcal{V}, \mathcal{E}, \mathbf{X})$ & Attributed graph \\
$\mathcal{V}$ & Set of vertices/nodes, $|\mathcal{V}| = n$ \\
$\mathcal{E}$ & Set of edges, $|\mathcal{E}| = m$ \\
$\mathbf{X} \in \mathbb{R}^{n \times d}$ & Node feature matrix \\
$\mathbf{x}_i \in \mathbb{R}^d$ & Feature vector for node $i$ \\
$\mathcal{N}(i)$ & Neighborhood of node $i$ \\
$\mathbf{A} \in \{0,1\}^{n \times n}$ & Adjacency matrix \\
$d_{\text{graph}}(i,j)$ & Graph distance between nodes $i$ and $j$ \\
$\mathcal{H}$ & Homophily ratio of graph \\
\midrule
\multicolumn{2}{l}{\textit{Neural Network Components}} \\
$\mathbf{H}^{(\ell)} \in \mathbb{R}^{n \times d_\ell}$ & Hidden representations at layer $\ell$ \\
$\mathbf{h}_i^{(\ell)} \in \mathbb{R}^{d_\ell}$ & Hidden representation of node $i$ at layer $\ell$ \\
$L$ & Total number of layers \\
$\mathbf{W}_m \in \mathbb{R}^{d_{\ell} \times d_{\ell}}$ & Message transformation matrix \\
$\mathbf{W}_a \in \mathbb{R}^{d_{\ell} \times d_{\ell}}$ & Attention weight matrix \\
$\mathbf{W}_u \in \mathbb{R}^{d_{\ell} \times d_{\ell}}$ & Node update matrix \\
$\sigma(\cdot)$ & Sigmoid activation function \\
$\text{ReLU}(\cdot)$ & Rectified linear unit activation \\
\midrule
\multicolumn{2}{l}{\textit{Geometric Components}} \\
$\mathbf{G}_i \in \mathcal{S}_{++}^d$ & Metric tensor at node $i$ \\
$\mathbf{g}_i \in \mathbb{R}_{++}^d$ & Diagonal elements of $\mathbf{G}_i$ \\
$g_{i,k} \in \mathbb{R}_{++}$ & $k$-th diagonal element of $\mathbf{G}_i$ \\
$\epsilon$ & Lower bound for metric elements \\
$d_{\mathbf{G}_i}(\cdot,\cdot)$ & Geodesic distance under metric $\mathbf{G}_i$ \\
$\mathbf{d}_{ij} \in \mathbb{R}^d$ & Normalized direction vector from node $i$ to $j$ \\
$d_{ij,k} \in \mathbb{R}$ & $k$-th component of $\mathbf{d}_{ij}$ \\
$\tau_{ij} \in \mathbb{R}$ & Geometric modulation coefficient \\
$\alpha_{ij} \in \mathbb{R}$ & Geometric attention coefficient \\
$\text{Ric}_{kk}^{(i)}$ & Discrete Ricci curvature at node $i$, dimension $k$ \\
\midrule
\multicolumn{2}{l}{\textit{Loss Functions and Optimization}} \\
$\mathcal{L}_{\text{task}}$ & Task-specific loss function \\
$\mathcal{L}_{\text{total}}$ & Total loss function \\
$\mathcal{L}_{\text{Ricci}}$ & Ricci curvature regularization term \\
$\mathcal{L}_{\text{smooth}}$ & Geometric smoothness regularization term \\
$\alpha$ & Ricci regularization weight \\
$\beta$ & Smoothness regularization weight \\
$\theta$ & Neural network parameters \\
$\nabla_\theta$ & Gradient with respect to parameters $\theta$ \\
\midrule
\multicolumn{2}{l}{\textit{General Mathematical Notation}} \\
$\mathbb{R}$ & Real numbers \\
$\mathbb{R}_{++}$ & Positive real numbers \\
$\mathbb{R}^d$ & $d$-dimensional real vector space \\
$\mathbb{R}^{m \times n}$ & Space of $m \times n$ real matrices \\
$\mathcal{S}_{++}^d$ & Cone of $d \times d$ SPD matrices \\
$\mathcal{O}(f)$ & Big-O notation for asymptotic complexity \\
$\|\cdot\|$, $\|\cdot\|_2$ & Euclidean norm \\
$\|\cdot\|_{\mathbf{G}}$ & Norm under metric $\mathbf{G}$ \\
$\|\cdot\|_F$ & Frobenius norm \\
$\langle \cdot, \cdot \rangle$ & Standard inner product \\
$\langle \cdot, \cdot \rangle_{\mathbf{G}}$ & Inner product under metric $\mathbf{G}$\\
$\circ$ & Hadamard (element-wise) product \\
$\succ$, $\succeq$ & Ordering relation \\
$\text{diag}(\cdot)$ & Diagonal matrix constructor \\
\bottomrule
\end{tabular}}
\caption{Summary of notation throughout this paper. All vectors and matrices are denoted in \textbf{bold}.}
\label{tab:notation_summary}
\end{table}

\section{Mathematical Foundations}
\label{app:mathematical_foundations}

This appendix provides the mathematical foundations underlying our Adaptive Riemannian Graph Neural Networks framework.
\subsection{Riemannian Geometry on Discrete Structures}
\label{app:riemannian_discrete}
\subsubsection{Metric Tensors and Geodesics}

A Riemannian manifold $(\mathcal{M}, \mathbf{g})$ consists of a smooth manifold $\mathcal{M}$ equipped with a metric tensor $\mathbf{g}$ that varies smoothly across the manifold. For discrete graphs, we discretize this concept by associating each node $i \in \mathcal{V}$ with a metric tensor $\mathbf{G}_i \in \mathcal{S}_{++}^d$, where $\mathcal{S}_{++}^d$ denotes the cone of $d \times d$ symmetric positive definite matrices.

\begin{definition}[Discrete Riemannian Graph]
\label{def:discrete_riemannian_graph}
A discrete Riemannian graph is a tuple $(\mathcal{G}, \{\mathbf{G}_i\}_{i \in \mathcal{V}})$ where $\mathcal{G} = (\mathcal{V}, \mathcal{E}, \mathbf{X})$ is an attributed graph and $\{\mathbf{G}_i\}_{i \in \mathcal{V}}$ is a collection of metric tensors such that $\mathbf{G}_i \in \mathcal{S}_{++}^d$ for all $i \in \mathcal{V}$.
\end{definition}

The metric tensor $\mathbf{G}_i$ defines a local inner product on the tangent space at node $i$:
\begin{equation}
\langle \mathbf{u}, \mathbf{v} \rangle_{\mathbf{G}_i} = \mathbf{u}^T \mathbf{G}_i \mathbf{v}
\label{eq:app_inner_product}
\end{equation}

The induced norm and distance are:
\begin{align}
\|\mathbf{u}\|_{\mathbf{G}_i} &= \sqrt{\langle \mathbf{u}, \mathbf{u} \rangle_{\mathbf{G}_i}} = \sqrt{\mathbf{u}^T \mathbf{G}_i \mathbf{u}} \label{eq:app_norm}\\
d_{\mathbf{G}_i}(\mathbf{x}, \mathbf{y}) &= \|\mathbf{x} - \mathbf{y}\|_{\mathbf{G}_i} = \sqrt{(\mathbf{x} - \mathbf{y})^T \mathbf{G}_i (\mathbf{x} - \mathbf{y})} \label{eq:app_distance}
\end{align}

\subsubsection{Geodesics and Parallel Transport}

In continuous Riemannian geometry, geodesics are curves that locally minimize distance. For our discrete setting, we approximate geodesics using straight lines in the embedding space, with distances measured according to the local metric tensor.

\begin{definition}[Discrete Geodesic]
\label{def:discrete_geodesic}
Given two points $\mathbf{x}, \mathbf{y} \in \mathbb{R}^d$ and a metric tensor $\mathbf{G} \in \mathcal{S}_{++}^d$, the discrete geodesic from $\mathbf{x}$ to $\mathbf{y}$ is the straight line $\gamma(t) = (1-t)\mathbf{x} + t\mathbf{y}$ for $t \in [0,1]$, with length:
\begin{equation}
\text{length}(\gamma) = \int_0^1 \|\dot{\gamma}(t)\|_{\mathbf{G}} dt = \|\mathbf{y} - \mathbf{x}\|_{\mathbf{G}}
\label{eq:app_geodesic_length}
\end{equation}
\end{definition}

This approximation is exact when the metric tensor is constant along the path, which holds for our diagonal parameterization in feature space.

\subsubsection{Exponential and Logarithmic Maps}

The exponential map $\exp_p: \mathcal{T}_p\mathcal{M} \rightarrow \mathcal{M}$ maps tangent vectors to points on the manifold, while the logarithmic map $\log_p: \mathcal{M} \rightarrow \mathcal{T}_p\mathcal{M}$ is its inverse (when well-defined).

For our discrete setting with metric $\mathbf{G}_i$ at node $i$:
\begin{align}
\exp_{\mathbf{h}_i}(\mathbf{v}) &= \mathbf{h}_i + \mathbf{v} \label{eq:app_exp_map}\\
\log_{\mathbf{h}_i}(\mathbf{h}_j) &= \mathbf{h}_j - \mathbf{h}_i \label{eq:app_log_map}
\end{align}

These simplified forms arise from our Euclidean embedding space with varying metrics.

\subsubsection{The Line Element and Anisotropic Conformal Metrics}
\label{app:line_element}
A powerful way to understand the geometry defined by a metric tensor is through its \textit{squared line element}, denoted $ds^2$. This expression defines the infinitesimal squared distance between two nearby points.

In a standard $d$-dimensional Euclidean space, the metric tensor is the identity matrix $\mathbf{G} = \mathbf{I}$, and its line element is given by the Pythagorean theorem:
\begin{equation}
    ds^2 = \sum_{k=1}^d (dx^k)^2
\end{equation}
where $dx^k$ represents an infinitesimal displacement along the $k$-th coordinate axis.

A \textbf{conformal transformation} of a metric re-scales all distances at a point by the same factor, thus preserving angles~\citep{obata1970conformal,spivak1999comprehensive}. This corresponds to an isotropic scaling of the metric tensor, $\mathbf{G}' = \lambda(x) \mathbf{I}$, where $\lambda(x)$ is a positive scalar function. The line element becomes $ds^2 = \lambda(x) \sum_{k=1}^d (dx^k)^2$. Constant-curvature spaces like the Hyperbolic or Spherical space can be modeled as specific types of conformal transformations of Euclidean space.

Our model's diagonal metric, $\mathbf{G}_i = \text{diag}(g_{i,1}, \ldots, g_{i,d})$, defines a more general transformation known as an \textbf{anisotropic conformal transformation}. Here, instead of a single scaling factor, we have a unique scaling factor $g_{i,k}$ for each dimension $k$. The squared line element in the local geometry of node $i$ is thus:
\begin{equation}
    ds^2 = \sum_{k=1}^d g_{i,k} (dx^k)^2
\end{equation}
This is precisely the geometry induced by our metric tensor, as can be seen by considering two nearby points $\mathbf{x}$ and $\mathbf{x} + d\mathbf{x}$:
\begin{equation}
    d_{\mathbf{G}_i}(\mathbf{x}, \mathbf{x} + d\mathbf{x})^2 = (d\mathbf{x})^T \mathbf{G}_i (d\mathbf{x}) = \sum_{k=1}^d g_{i,k} (dx^k)^2
\end{equation}
This geometric structure preserves the orthogonality of the standard coordinate axes but, unlike a true isotropic conformal map, does not preserve angles between arbitrary vectors. This property is key to its flexibility, as it allows the model to learn that certain feature interactions are more or less important for different nodes. This provides a principled geometric foundation for the parameterization chosen in our \modelname framework.

\subsection{Symmetric Positive Definite (SPD) Matrices and Diagonal Parameterization}
\label{app:spd_matrices}

The space of symmetric positive definite (SPD) matrices $\mathcal{S}_{++}^d$ forms a Riemannian manifold itself and is central to our framework. A matrix $\mathbf{G} \in \mathbb{R}^{d \times d}$ is in $\mathcal{S}_{++}^d$ if it is symmetric ($\mathbf{G} = \mathbf{G}^T$) and for any non-zero vector $\mathbf{x} \in \mathbb{R}^d$, $\mathbf{x}^T \mathbf{G} \mathbf{x} > 0$.

In our \modelname framework, we adopt a diagonal parameterization for the metric tensor $\mathbf{G}_i$ at each node $i$:
\begin{equation}
\mathbf{G}_i = \text{diag}(\mathbf{g}_i) = \text{diag}(g_{i,1}, g_{i,2}, \ldots, g_{i,d})
\end{equation}
This simplification has significant geometric and computational implications.

\begin{lemma}[Positive Definiteness of Diagonal Metrics]
\label{lemma:app_diag_spd}
A diagonal matrix $\mathbf{G} = \text{diag}(g_1, \ldots, g_d)$ is symmetric positive definite if and only if all its diagonal elements are strictly positive, i.e., $g_k > 0$ for all $k=1, \ldots, d$.
\end{lemma}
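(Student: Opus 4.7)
The plan is to prove this standard linear algebra fact by reducing the SPD condition to a direct statement about the quadratic form $\mathbf{x}^T \mathbf{G} \mathbf{x}$, which for a diagonal matrix collapses to a simple weighted sum of squares. Symmetry is automatic because any diagonal matrix equals its transpose, so the entire content of the lemma lies in characterizing positive definiteness, i.e., the condition $\mathbf{x}^T \mathbf{G} \mathbf{x} > 0$ for all nonzero $\mathbf{x} \in \mathbb{R}^d$.

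First I would write out the quadratic form explicitly. For $\mathbf{G} = \text{diag}(g_1, \ldots, g_d)$ and any $\mathbf{x} = (x_1, \ldots, x_d)^T$, a direct computation gives
\begin{equation}
\mathbf{x}^T \mathbf{G} \mathbf{x} = \sum_{k=1}^d g_k x_k^2.
\end{equation}
This single identity is the workhorse of both implications, so I would establish it first and then invoke it twice.

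For the sufficiency direction ($\Leftarrow$), assuming $g_k > 0$ for every $k$, I would observe that each summand $g_k x_k^2$ is nonnegative and that at least one summand is strictly positive whenever $\mathbf{x} \neq \mathbf{0}$ (since some $x_k \neq 0$ forces $g_k x_k^2 > 0$). Hence the sum is strictly positive, giving positive definiteness. For the necessity direction ($\Rightarrow$), I would argue by contrapositive using the standard basis vectors: plugging $\mathbf{x} = \mathbf{e}_k$ into the quadratic form yields $\mathbf{e}_k^T \mathbf{G} \mathbf{e}_k = g_k$, so the SPD condition applied to each $\mathbf{e}_k$ directly forces $g_k > 0$ for all $k$.

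There is no real obstacle here; the argument is a two-line exercise once the quadratic form is written out. The only subtlety worth flagging is to make explicit that choosing the standard basis is valid precisely because each $\mathbf{e}_k$ is nonzero, which is what lets us convert the universal condition "$\mathbf{x}^T \mathbf{G} \mathbf{x} > 0$ for all $\mathbf{x} \neq \mathbf{0}$" into the pointwise statement on each diagonal entry. This lemma then underpins our use of the softplus activation in Eq.~\eqref{eq:diagonal_network}, since softplus outputs strictly positive values and thus automatically enforces the SPD constraint on every learned $\mathbf{G}_i$.
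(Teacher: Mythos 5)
Your proposal is correct and follows essentially the same route as the paper's proof: write the quadratic form as $\sum_{k=1}^d g_k x_k^2$, argue sufficiency from positivity of the summands, and establish necessity by testing the standard basis vectors $\mathbf{e}_k$. No gaps; the paper merely phrases the necessity direction via the contrapositive with a single index $j$ such that $g_j \leq 0$, which is the same argument.
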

\begin{proof}
Symmetry is trivial. For positive definiteness, for any non-zero $\mathbf{x} \in \mathbb{R}^d$:
\begin{equation}
\mathbf{x}^T\mathbf{G}\mathbf{x} = \sum_{k=1}^d g_k x_k^2
\end{equation}
If all $g_k > 0$, since $\mathbf{x} \neq \mathbf{0}$, at least one $x_k^2 > 0$, so the sum is strictly positive. Conversely, if there exists some $g_j \leq 0$, we can choose $\mathbf{x} = \mathbf{e}_j$ (the $j$-th standard basis vector) to get $\mathbf{x}^T\mathbf{G}\mathbf{x} = g_j \leq 0$, violating the SPD condition.
\end{proof}
This lemma provides a simple and efficient way to enforce the SPD constraint by ensuring the positivity of the learned vector $\mathbf{g}_i$, which we achieve using the softplus activation function.

\subsubsection{Properties of Diagonal Metrics}
The diagonal structure leads to highly efficient computations for key geometric operations:
\begin{itemize}
    \item \textbf{Inverse:} The inverse is simply the diagonal matrix of reciprocal elements: $\mathbf{G}_i^{-1} = \text{diag}(1/g_{i,1}, \ldots, 1/g_{i,d})$.
    \item \textbf{Determinant:} The determinant is the product of the diagonal elements: $\det(\mathbf{G}_i) = \prod_{k=1}^d g_{i,k}$.
    \item \textbf{Eigendecomposition:} The eigendecomposition is trivial. The eigenvalues are the diagonal elements $\{g_{i,k}\}_{k=1}^d$ and the corresponding eigenvectors are the standard basis vectors $\{\mathbf{e}_k\}_{k=1}^d$. This avoids costly $O(d^3)$ computations.
\end{itemize}

\subsection{Discrete Ricci Curvature}
\label{app:discrete_ricci}

Ricci curvature, a fundamental concept in differential geometry, measures how volumes change under parallel transport. We present its discrete analogues suitable for graph neural networks.

\subsubsection{Ollivier-Ricci Curvature}

The Ollivier-Ricci curvature~\citep{ollivier2009ricci} provides a discrete analogue based on optimal transport theory.

\begin{definition}[Ollivier-Ricci Curvature]
\label{def:app_ollivier_ricci}
For an edge $(i,j) \in \mathcal{E}$, let $\mu_i$ and $\mu_j$ be probability measures on the neighborhoods of $i$ and $j$ respectively. The Ollivier-Ricci curvature is:
\begin{equation}
\kappa_{ij} = 1 - \frac{W_1(\mu_i, \mu_j)}{d_{\text{graph}}(i,j)}
\label{eq:app_ollivier_ricci}
\end{equation}
where $W_1$ is the Wasserstein-1 distance.
\end{definition}

For our metric tensor framework, we adapt this to use learned geometric distances:
\begin{equation}
\kappa_{ij}^{\mathbf{G}} = 1 - \frac{W_1^{\mathbf{G}}(\mu_i, \mu_j)}{d_{\mathbf{G}_i}(\mathbf{h}_i, \mathbf{h}_j)}
\label{eq:app_adapted_ollivier}
\end{equation}

\subsubsection{Discrete Ricci Curvature for Diagonal Metrics}

For diagonal metrics $\mathbf{G}_i = \text{diag}(g_{i,1}, \ldots, g_{i,d})$, we derive a simplified discrete Ricci curvature.

\begin{proposition}[Discrete Ricci Curvature for Diagonal Metrics]
\label{prop:app_discrete_ricci_diagonal}
For a diagonal metric $\mathbf{G}_i = \text{diag}(g_{i,1}, \ldots, g_{i,d})$, the discrete Ricci curvature in the $k$-th direction is:
\begin{equation}
\text{Ric}_{kk}^{(i)} = -\frac{1}{2|\mathcal{N}(i)|} \sum_{j \in \mathcal{N}(i)} \frac{g_{j,k} - g_{i,k}}{d_{\text{graph}}(i,j)}
\label{eq:app_discrete_ricci_diagonal}
\end{equation}
\end{proposition}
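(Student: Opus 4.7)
The plan is to derive the stated formula by starting from the continuous Ricci tensor for a diagonal Riemannian metric, specializing to the geometry induced by our diagonal parameterization, and then discretizing the resulting differential operator via the standard graph Laplacian attached to $\mathcal{G}$. The key observation is that for a diagonal metric of the form $\mathbf{G}(x) = \mathrm{diag}(g_{1}(x), \ldots, g_{d}(x))$ with $g_k$ varying smoothly, the Christoffel symbols collapse dramatically: only those of the form $\Gamma^{k}_{kk} = \tfrac{1}{2} g_{k}^{-1}\partial_{k} g_{k}$, $\Gamma^{k}_{kj} = \tfrac{1}{2} g_{k}^{-1}\partial_{j} g_{k}$, and $\Gamma^{k}_{jj} = -\tfrac{1}{2} g_{k}^{-1}\partial_{k} g_{j}$ survive, which I will enumerate at the start.

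Second, I will substitute these Christoffel symbols into the continuous Ricci tensor $R_{ij} = \partial_{k}\Gamma^{k}_{ij} - \partial_{j}\Gamma^{k}_{ik} + \Gamma^{k}_{kl}\Gamma^{l}_{ij} - \Gamma^{k}_{jl}\Gamma^{l}_{ik}$ and isolate the diagonal component $R_{kk}$. After grouping terms, the leading-order (linear) contribution reduces to $R_{kk}(x) = -\tfrac{1}{2}\Delta g_{k}(x) + Q_{k}(\nabla g)$, where $Q_{k}$ is a quadratic form in the gradients of the metric components. In the regime where $\{g_{i,k}\}$ varies slowly across neighbouring nodes, the quadratic remainder $Q_k$ is a higher-order term and is absorbed into the discretization error, leaving the linearized identity $R_{kk} \approx -\tfrac{1}{2}\Delta g_{k}$.

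Third, I will replace the continuous Laplacian $\Delta$ by its natural graph-theoretic counterpart on $\mathcal{G}$. Using the standard (random-walk normalized) graph Laplacian,
\begin{equation}
(\Delta_{\mathcal{G}} f)(i) = \frac{1}{|\mathcal{N}(i)|}\sum_{j \in \mathcal{N}(i)} \frac{f(j) - f(i)}{d_{\mathrm{graph}}(i,j)},
\end{equation}
substituting $f \mapsto g_{\cdot,k}$ and inserting the factor $-\tfrac{1}{2}$ from the previous step yields exactly
\begin{equation}
\mathrm{Ric}_{kk}^{(i)} = -\frac{1}{2|\mathcal{N}(i)|}\sum_{j \in \mathcal{N}(i)} \frac{g_{j,k} - g_{i,k}}{d_{\mathrm{graph}}(i,j)},
\end{equation}
and the rearrangement shown in Eq.~(9) of the main text follows by pulling out the sign. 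I will note that for $j \in \mathcal{N}(i)$ one has $d_{\mathrm{graph}}(i,j) = 1$, which recovers the simplified right-hand expression used in the regularizer.

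The main obstacle I anticipate is rigorously justifying the linearization step, i.e., controlling the quadratic gradient remainder $Q_k$ and the discretization error so that the approximation is meaningful rather than arbitrary. My plan for this is to argue that our smoothness regularizer $\mathcal{L}_{\mathrm{smooth}}$ (Eq.~(10) of the main text) explicitly penalizes precisely the discrete first differences that bound $Q_k$, so that along the training trajectory the neglected terms are $o(\|\mathbf{g}_i - \mathbf{g}_j\|_2)$ uniformly over edges. A secondary subtlety is the choice of graph Laplacian normalization; I will briefly verify that alternative normalizations (combinatorial or symmetric) only rescale the formula by an $i$-dependent constant and therefore induce the same stationary set for $\mathcal{L}_{\mathrm{Ricci}}$, so no generality is lost by the random-walk convention.
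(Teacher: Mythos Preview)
Your proposal is correct and follows essentially the same route as the paper: linearize the continuous Ricci tensor for a diagonal metric to a Laplacian of $g_k$, then replace the Laplacian by its graph counterpart. The paper's proof simply starts from the already-linearized expression $\text{Ric}_{kk} = -\tfrac{1}{2}\sum_j g_{jj}^{-1}\partial_j^2 g_{kk}$ and discretizes directly (absorbing both the $g_{jj}^{-1}$ prefactor and the higher-order terms under a single ``locally uniform metric variation'' assumption), so your derivation from Christoffel symbols, explicit bookkeeping of the quadratic remainder $Q_k$, and discussion of Laplacian normalizations are additional detail that the paper omits.
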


\begin{proof}
The proof follows from discretizing the Ricci curvature tensor formula:
\begin{equation}
\text{Ric}_{kk} = -\frac{1}{2} \sum_{j} \frac{1}{g_{jj}} \frac{\partial^2 g_{kk}}{\partial x^j \partial x^j}
\label{eq:app_continuous_ricci}
\end{equation}

In the discrete setting, we approximate the second partial derivatives using finite differences across the graph:
\begin{align}
\frac{\partial^2 g_{kk}}{\partial x^j \partial x^j} &\approx \frac{1}{|\mathcal{N}(i)|} \sum_{m \in \mathcal{N}(i)} \frac{g_{m,k} - g_{i,k}}{d_{\text{graph}}(i,m)} \\
\frac{1}{g_{jj}} &\approx \frac{1}{|\mathcal{N}(i)|} \sum_{m \in \mathcal{N}(i)} \frac{1}{g_{m,j}}
\end{align}

Substituting and simplifying under the assumption of locally uniform metric variation yields Equation~\eqref{eq:app_discrete_ricci_diagonal}.
\end{proof}

\begin{definition}[Graph-Geometric Hessian]
The Hessian of the Ricci regularization with respect to metric parameters is:
\begin{equation}
\mathbf{H}_{\text{Ric}} = \nabla^2_{\mathbf{g}} \mathcal{L}_{\text{Ricci}} = \nabla^2_{\mathbf{g}} \sum_{i \in \mathcal{V}} \sum_{k=1}^d \left(\text{Ric}_{kk}^{(i)}\right)^2
\end{equation}
For our discrete approximation:
\begin{equation}
[\mathbf{H}_{\text{Ric}}]_{(i,k),(j,\ell)} = \begin{cases}
\frac{1}{|\mathcal{N}(i)|} \sum_{m \in \mathcal{N}(i)} \frac{1}{d_{\text{graph}}(i,m)^2} & \text{if } i=j, k=\ell \\
-\frac{1}{|\mathcal{N}(i)|d_{\text{graph}}(i,j)^2} & \text{if } j \in \mathcal{N}(i), k=\ell \\
0 & \text{otherwise}
\end{cases}
\end{equation}
\end{definition}

\begin{definition}[Normalized Graph Laplacian]
The normalized graph Laplacian used in smoothness regularization is:
\begin{equation}
\mathbf{L}_G = \mathbf{D}^{-1/2}(\mathbf{D} - \mathbf{A})\mathbf{D}^{-1/2}
\end{equation}
where $\mathbf{D}$ is the degree matrix and $\mathbf{A}$ is the adjacency matrix.
\end{definition}

\subsubsection{Ricci Flow Dynamics}

The Ricci flow equation~\citep{hamilton1982three} evolves a metric according to:
\begin{equation}
\frac{\partial \mathbf{g}_{ij}}{\partial t} = -2\text{Ric}_{ij}
\label{eq:app_ricci_flow}
\end{equation}

For diagonal metrics, this becomes a system of scalar equations:
\begin{equation}
\frac{\partial g_{i,k}}{\partial t} = -2\text{Ric}_{kk}^{(i)}
\label{eq:app_diagonal_ricci_flow}
\end{equation}

\begin{theorem}[Ricci Flow Convergence for Diagonal Metrics]
\label{thm:app_ricci_flow_convergence}
Under the discrete Ricci flow dynamics in Equation~\eqref{eq:app_diagonal_ricci_flow} with appropriate boundary conditions, the diagonal metric components $g_{i,k}$ converge to a steady state that minimizes the total scalar curvature.
\end{theorem}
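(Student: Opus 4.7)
The plan is to reduce the flow, which superficially looks nonlinear in the metric, to a linear heat equation on the graph, analyze each coordinate independently, and then invoke standard spectral theory. First, I would substitute the closed-form expression for $\text{Ric}_{kk}^{(i)}$ from Equation~\eqref{eq:discrete_ricci} into the flow \eqref{eq:app_diagonal_ricci_flow}, yielding
\[ \frac{d g_{i,k}}{dt} \;=\; -\frac{1}{|\mathcal{N}(i)|}\sum_{j \in \mathcal{N}(i)}\bigl(g_{i,k} - g_{j,k}\bigr). \]
Stacking over nodes, with $\mathbf{v}_k(t) \in \mathbb{R}^{|\mathcal{V}|}$ whose $i$-th entry is $g_{i,k}(t)$, this reads $\dot{\mathbf{v}}_k = -\mathbf{L}_{\mathrm{rw}}\mathbf{v}_k$, where $\mathbf{L}_{\mathrm{rw}} = \mathbf{I} - \mathbf{D}^{-1}\mathbf{A}$ is the random-walk Laplacian. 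Because the $d$ coordinates decouple completely and $\mathbf{L}_{\mathrm{rw}}$ is similar (via $\mathbf{D}^{1/2}$) to the symmetric normalized Laplacian $\mathbf{L}_G$, the dynamics reduces to $d$ independent linear ODEs driven by a positive semidefinite operator.

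Second, I would build a Lyapunov argument using the total squared scalar curvature $E(t) = \sum_{i,k}\bigl(\text{Ric}_{kk}^{(i)}(t)\bigr)^2$, which coincides with $\mathcal{L}_{\mathrm{Ricci}}$ from Equation~\eqref{eq:ricci_regularization}. A direct computation of $\tfrac{d}{dt}E$ along the flow produces a nonpositive quadratic form in $\mathbf{L}_{\mathrm{rw}}\mathbf{v}_k$, establishing monotone decay. Since $E \geq 0$, LaSalle's invariance principle then forces $\mathbf{v}_k(t)$ to converge to the equilibrium set $\ker(\mathbf{L}_{\mathrm{rw}})$, which consists precisely of those $\mathbf{v}_k$ that are constant on each connected component of $\mathcal{G}$. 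This is exactly the Ricci-flat condition $\text{Ric}_{kk}^{(i)} = 0$ and is a global minimizer of the nonnegative scalar-curvature functional, matching the theorem's claim. A spectral decomposition of $\mathbf{L}_{\mathrm{rw}}$ then upgrades mere convergence to the explicit rate $\|\mathbf{v}_k(t) - \mathbf{v}_k^\star\|_2 = O(e^{-\lambda_2 t})$, where $\lambda_2 > 0$ is the spectral gap.

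The principal obstacle is the boundary condition that keeps $g_{i,k}(t) > 0$, required for $\mathbf{G}_i$ to remain in $\mathcal{S}_{++}^d$ throughout evolution. Fortunately, rearranging the flow as $\dot g_{i,k} = \tfrac{1}{|\mathcal{N}(i)|}\sum_{j \in \mathcal{N}(i)} g_{j,k} - g_{i,k}$ shows that each coordinate is pulled toward a convex combination of its neighbors, so a discrete maximum principle yields $\min_i g_{i,k}(t) \geq \min_i g_{i,k}(0) > 0$ for all $t \geq 0$; the SPD constraint is thus preserved automatically. Two further subtleties must be handled carefully: on disconnected graphs the equilibrium is component-wise (rather than globally) constant, which is still Ricci-flat and still minimizes the same functional; and the phrase ``minimizes total scalar curvature'' in the theorem must be identified with our squared-Ricci Lyapunov, which is valid because both functionals vanish exactly on the harmonic configurations and are bounded below by zero. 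Allowing nontrivial edge weights $1/d_{\mathrm{graph}}(i,j)$ merely replaces $\mathbf{L}_{\mathrm{rw}}$ by a weighted random-walk Laplacian and leaves the entire spectral argument intact.
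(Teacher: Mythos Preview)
Your proposal is correct and, in several respects, more complete than the paper's own argument. Both routes use the same Lyapunov functional $E = \sum_{i,k}(\text{Ric}_{kk}^{(i)})^2 = \mathcal{L}_{\text{Ricci}}$, but the paper stops at a chain-rule computation of $dE/dt$ (which in fact drops the cross-terms $\partial\,\text{Ric}_{kk}^{(i)}/\partial g_{j,k}$ for $j\neq i$ and is therefore only heuristic), whereas your central observation is that substituting \eqref{eq:discrete_ricci} into \eqref{eq:app_diagonal_ricci_flow} collapses the dynamics to the \emph{linear} graph heat equation $\dot{\mathbf{v}}_k = -\mathbf{L}_{\mathrm{rw}}\mathbf{v}_k$. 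This buys you (i) an explicit exponential rate governed by the spectral gap $\lambda_2$, which the paper does not obtain; (ii) a rigorous convergence proof via diagonalizability of $\mathbf{L}_{\mathrm{rw}}$ that does not hinge on monotonicity of $E$; and (iii) a clean maximum-principle argument preserving strict positivity of the metric throughout the flow, which the paper leaves as an unspecified ``boundary condition.'' Your treatment of disconnected graphs and weighted edges is likewise absent from the paper.

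One caution: the sentence asserting that $dE/dt$ is a nonpositive quadratic form in $\mathbf{L}_{\mathrm{rw}}\mathbf{v}_k$ needs care, because $\mathbf{L}_{\mathrm{rw}}$ is not symmetric and $\langle\mathbf{u},\mathbf{L}_{\mathrm{rw}}\mathbf{u}\rangle$ is not automatically nonnegative for arbitrary $\mathbf{u}$. The correct expression is $\tfrac{dE}{dt} = -2\sum_k\langle\mathbf{r}_k,\mathbf{L}_{\mathrm{rw}}\mathbf{r}_k\rangle$ with $\mathbf{r}_k=\tfrac12\mathbf{L}_{\mathrm{rw}}\mathbf{v}_k$; establishing its sign requires either restricting to the invariant range of $\mathbf{L}_{\mathrm{rw}}$ or passing to the degree-weighted inner product $\langle\cdot,\cdot\rangle_{\mathbf{D}}$, in which $\mathbf{L}_{\mathrm{rw}}$ is self-adjoint and positive semidefinite. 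This is a fixable technicality and does not affect your overall conclusion, since your spectral decomposition already yields convergence (and the explicit rate) independently of whether $E$ decreases monotonically.
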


\begin{proof}
Consider the Lyapunov functional:
\begin{equation}
E[\{g_{i,k}\}] = \sum_{i \in \mathcal{V}} \sum_{k=1}^d \left(\text{Ric}_{kk}^{(i)}\right)^2
\label{eq:app_lyapunov_functional}
\end{equation}

Taking the time derivative along the flow:
\begin{align}
\frac{dE}{dt} &= \sum_{i,k} 2\text{Ric}_{kk}^{(i)} \frac{\partial \text{Ric}_{kk}^{(i)}}{\partial t} \\
&= \sum_{i,k} 2\text{Ric}_{kk}^{(i)} \frac{\partial \text{Ric}_{kk}^{(i)}}{\partial g_{i,k}} \frac{\partial g_{i,k}}{\partial t} \\
&= -4\sum_{i,k} \text{Ric}_{kk}^{(i)} \frac{\partial \text{Ric}_{kk}^{(i)}}{\partial g_{i,k}} \text{Ric}_{kk}^{(i)} \\
&= -4\sum_{i,k} \left(\text{Ric}_{kk}^{(i)}\right)^2 \frac{\partial \text{Ric}_{kk}^{(i)}}{\partial g_{i,k}}
\end{align}

For our discrete approximation, $\frac{\partial \text{Ric}_{kk}^{(i)}}{\partial g_{i,k}} > 0$, ensuring $\frac{dE}{dt} \leq 0$. The flow converges to a critical point where $\text{Ric}_{kk}^{(i)} = 0$ for all $i,k$.
\end{proof}

\subsection{Connection to Optimal Transport}
\label{app:optimal_transport}

Our geometric message passing framework has deep connections to optimal transport theory~\citep{villani2008optimal}, providing theoretical foundations for the learned metrics.

\subsubsection{Wasserstein Distance and Graph Geometry}

The Wasserstein distance between probability measures $\mu$ and $\nu$ on a metric space $(\mathcal{X}, d)$ is:
\begin{equation}
W_1(\mu, \nu) = \inf_{\pi \in \Pi(\mu, \nu)} \int_{\mathcal{X} \times \mathcal{X}} d(x, y) \, d\pi(x, y)
\label{eq:app_wasserstein}
\end{equation}

For our graph setting, we define node-wise measures $\mu_i = \frac{1}{|\mathcal{N}(i)|} \sum_{j \in \mathcal{N}(i)} \delta_{\mathbf{h}_j}$ and compute transport costs using learned metrics:
\begin{equation}
c_{ij} = d_{\mathbf{G}_i}(\mathbf{h}_i, \mathbf{h}_j) = \sqrt{(\mathbf{h}_i - \mathbf{h}_j)^T \mathbf{G}_i (\mathbf{h}_i - \mathbf{h}_j)}
\label{eq:app_transport_cost}
\end{equation}

\subsubsection{Optimal Transport Interpretation of Message Passing}

Our message passing can be interpreted as approximating optimal transport plans:
\begin{equation}
\mathbf{m}_{ij} = \arg\min_{\mathbf{m}} \int c(\mathbf{h}_i, \mathbf{h}_j) \, d\pi(\mathbf{h}_i, \mathbf{h}_j) + \lambda \|\mathbf{m} - \mathbf{W}_m\mathbf{h}_j\|^2
\label{eq:app_ot_message_passing}
\end{equation}

This connection provides theoretical justification for the geometric modulation terms in our framework.

\section{Complete Proofs of Theoretical Results}
\label{app:theory_proofs}

This appendix provides detailed proofs for all theoretical results in Section~\ref{sec:theory}. First, we combine traditional optimization analysis with geometric insights from Riemannian manifold theory to establish convergence guarantees for \modelname. Then we give the proof for our proposed universal approximation framework of \modelname. The extended results, like Lyapunov stability analysis, generalization bounds, robustness, and complexity proof, are also provided in this section.

\subsection{Proof of Convergence of Adaptive Geometry Learning}\label{app:proof_convergence}

\subsubsection{Preliminaries and Assumptions}
\label{app:prelim}

\paragraph{Notation and Geometric Setup}

We work with diagonal metric tensors $\mathbf{G}_i = \text{diag}(\mathbf{g}_i)$ where $\mathbf{g}_i \in \mathbb{R}_{++}^d$. While each $\mathbf{G}_i$ can be viewed as an element of the symmetric positive-definite manifold $\text{SPD}(d)$, our diagonal constraint restricts us to a submanifold:
\begin{equation}
\mathcal{M}_{\text{diag}} = \{\mathbf{G} \in \text{SPD}(d) : \mathbf{G} = \text{diag}(\mathbf{g}), \mathbf{g} \in \mathbb{R}_{++}^d\} \cong \mathbb{R}_{++}^d
\end{equation}

The total parameter space is:
\begin{equation}
\mathcal{M} = \mathcal{M}_{\text{diag}}^{|\mathcal{V}|} \times \mathbb{R}^{|\mathcal{V}| \times d}
\end{equation}
where the first component contains all metric parameters $\{\mathbf{g}_i\}_{i \in \mathcal{V}}$ and the second contains node features $\{\mathbf{h}_i\}_{i \in \mathcal{V}}$.

\paragraph{Regularity Assumptions}

\begin{assumption}[Regularity Conditions]
\label{assump:regularity}
\begin{enumerate}[(A1)]
\item \textbf{Bounded Features}: Node representations satisfy $\|\mathbf{h}_i^{(\ell)}\|_2 \leq B$ for all nodes $i \in \mathcal{V}$, layers $\ell \in [L]$, and some constant $B > 0$.

\item \textbf{Lipschitz Task Loss}: For classification tasks with cross-entropy loss, the gradient is $L_f$-Lipschitz continuous with $L_f = O(1)$ due to bounded softmax outputs.

\item \textbf{Bounded Metrics}: The diagonal metric components satisfy $\epsilon \leq g_{i,k} \leq M$ for constants $0 < \epsilon < M < \infty$.

\item \textbf{Connected Graph}: The graph $\mathcal{G}$ is connected with finite diameter $\text{diam}(\mathcal{G})$ and algebraic connectivity $\lambda_2(\mathbf{L}_G) > 0$.
\end{enumerate}
\end{assumption}

\paragraph{Task Loss Structure}

For node classification with $C$ classes:
\begin{equation}
\label{eq:node_loss}
\mathcal{L}_{\text{node}} = -\sum_{i \in \mathcal{V}_{\text{train}}} \sum_{c=1}^C y_{i,c} \log \hat{y}_{i,c}
\end{equation}

For edge prediction (binary classification):
\begin{equation}
\label{eq:edge_loss}
\mathcal{L}_{\text{edge}} = -\sum_{(i,j) \in \mathcal{E}_{\text{train}}} \left[y_{ij} \log \hat{y}_{ij} + (1-y_{ij}) \log(1-\hat{y}_{ij})\right]
\end{equation}

Both losses have similar smoothness properties with bounded gradients due to the sigmoid/softmax outputs.

\paragraph{Homophily and Geometric Complexity}
\label{app:homophily_analysis}

\begin{definition}[Graph Homophily]
\label{def:homophily}
The homophily ratio of a graph is:
\begin{equation}
\mathcal{H} = \frac{1}{|\mathcal{E}|} \sum_{(i,j) \in \mathcal{E}} \mathbf{1}[y_i = y_j]
\end{equation}
where $y_i$ denotes the label of node $i$.
\end{definition}

\begin{lemma}[Homophily-Dependent Gradient Structure]
\label{lem:gradient_structure}
For classification tasks on graphs with homophily $\mathcal{H}$, the gradient variance across edges scales as:
\begin{equation}
\text{Var}_{(i,j) \in \mathcal{E}}\left[\langle \nabla_{\mathbf{h}_i} \mathcal{L}_{\text{task}}, \nabla_{\mathbf{h}_j} \mathcal{L}_{\text{task}} \rangle\right] = O((1-\mathcal{H})^2)
\end{equation}
\end{lemma}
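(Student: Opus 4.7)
The plan is to prove the bound via a law-of-total-variance decomposition of the edge random variable $X_{ij}:=\langle \mathbf{u}_i,\mathbf{u}_j\rangle$, with $\mathbf{u}_i:=\nabla_{\mathbf{h}_i}\mathcal{L}_{\text{task}}$, conditioned on the label-agreement indicator of the edge, coupled with a homophily-dependent bound on the per-node residual norms. Intuitively, a homophily-aligned classifier incurs irreducible error only on the $(1-\mathcal{H})$-fraction of edges whose endpoints carry different labels, so both the typical gradient magnitude and the conditional-mean gap appearing in the decomposition inherit an $O(1-\mathcal{H})$ scale, and their squares combine to yield the stated $O((1-\mathcal{H})^2)$.

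Concretely, I would first write the softmax/cross-entropy gradient in closed form as $\mathbf{u}_i=\mathbf{J}_i^{\top}(\hat{\mathbf{y}}_i-\mathbf{y}_i)$, so that $|X_{ij}|\le C\,\|\hat{\mathbf{y}}_i-\mathbf{y}_i\|\,\|\hat{\mathbf{y}}_j-\mathbf{y}_j\|$ by (A1)--(A2). Partitioning $\mathcal{E}$ into $\mathcal{E}_h=\{(i,j):y_i=y_j\}$ and its complement $\mathcal{E}_e$, with frequencies $\mathcal{H}$ and $1-\mathcal{H}$, and writing $\mu_h,\mu_e$ and $\sigma_h^2,\sigma_e^2$ for the corresponding conditional means and variances of $X_{ij}$, the law of total variance yields
\begin{equation*}
\mathrm{Var}_{(i,j)\in\mathcal{E}}[X_{ij}] \;=\; \mathcal{H}\sigma_h^2 \;+\; (1-\mathcal{H})\sigma_e^2 \;+\; \mathcal{H}(1-\mathcal{H})(\mu_h-\mu_e)^2,
\end{equation*}
so the claim reduces to bounding each of the three terms by $O((1-\mathcal{H})^2)$. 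For the two within-group terms I would invoke a residual lemma $\mathbb{E}\|\hat{\mathbf{y}}_i-\mathbf{y}_i\|^2=O(1-\mathcal{H})$ together with Cauchy--Schwarz to obtain $\mathbb{E}[X_{ij}^2]=O((1-\mathcal{H})^2)$, which majorises $\mathcal{H}\sigma_h^2+(1-\mathcal{H})\sigma_e^2$. For the cross term a parallel argument controls the conditional-mean gap by $|\mu_h-\mu_e|=O(1-\mathcal{H})$, giving $\mathcal{H}(1-\mathcal{H})(\mu_h-\mu_e)^2=O((1-\mathcal{H})^3)\subseteq O((1-\mathcal{H})^2)$.

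The hard part will be the residual-magnitude lemma $\mathbb{E}\|\hat{\mathbf{y}}_i-\mathbf{y}_i\|^2=O(1-\mathcal{H})$, since the boundedness assumptions (A1)--(A3) alone yield only the trivial $O(1)$ scaling. To close the gap, I would exploit two structural features of the \modelname architecture: the aggregation step $\mathbf{h}_i^{(l+1)}=\sigma(\mathbf{W}_s\mathbf{h}_i^{(l)}+\sum_{j\in\mathcal{N}(i)}\mathbf{m}_{ij})$ contracts prediction error on neighborhoods of matching labels, producing a per-node residual whose variance is controlled by the heterophilic-neighbor fraction, while the smoothness penalty $\mathcal{L}_{\text{smooth}}$ forces the metric vector $\mathbf{g}_i$ to vary slowly across homophilic edges, preventing the geometric modulation from amplifying those residuals. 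Since Lemma~\ref{lem:gradient_structure} is invoked inside the convergence proof of Theorem~\ref{thm:convergence}, one may assume the iterates lie near a stationary point of $\mathcal{L}_{\text{total}}$ and convert these structural facts into the residual bound via the task-loss value $\mathcal{L}_{\text{task}}^{\star}=O((1-\mathcal{H})|\mathcal{V}|)$; should this coupling prove technical, the cleanest alternative is simply to promote $\mathcal{L}_{\text{task}}^{\star}=O((1-\mathcal{H})|\mathcal{V}|)$ to an explicit irreducible-error assumption in (A2) and leave the rest of the decomposition unchanged.
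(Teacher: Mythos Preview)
Your decomposition via the full law of total variance, conditioned on the label-agreement indicator, is the same starting point the paper uses, but you carry it out more carefully and arrive at a genuinely different argument. The paper's proof writes only the within-group part,
\[
\mathrm{Var}[X_{ij}] \;\approx\; \mathcal{H}\cdot\mathrm{Var}_{\text{homo}}[X_{ij}] \;+\; (1-\mathcal{H})\cdot\mathrm{Var}_{\text{hetero}}[X_{ij}],
\]
omitting the between-group term $\mathcal{H}(1-\mathcal{H})(\mu_h-\mu_e)^2$ that you correctly include. It then asserts $\mathrm{Var}_{\text{homo}}\approx 0$ directly---on the heuristic grounds that same-label gradients align consistently---without invoking any residual-magnitude bound, obtaining $O(1-\mathcal{H})$. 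The final square is not derived from the variance itself at all: the paper simply remarks that ``the squared term $(1-\mathcal{H})^2$ arises when considering gradient products in the Hessian computation,'' deferring the extra factor to a downstream Hessian argument rather than sharpening the variance bound here.

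So the contrast is this: the paper treats the lemma as a quick heuristic scaling and pushes the second factor of $(1-\mathcal{H})$ off to a later step, whereas you try to earn both factors inside the variance bound via $\mathbb{E}\|\hat{\mathbf{y}}_i-\mathbf{y}_i\|^2=O(1-\mathcal{H})$ and Cauchy--Schwarz on $\mathbb{E}[X_{ij}^2]$. Your route is more self-contained and actually controls the cross-term the paper drops, but it does require the residual-magnitude lemma you flag as the hard part. The paper never proves (or even states) such a lemma; if you want to match the paper's level of rigor, you can simply assert $\mathrm{Var}_{\text{homo}}\approx 0$ as a modeling approximation and skip the residual bound entirely. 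If instead you want a genuinely rigorous statement, your proposed fallback of promoting the irreducible-error scaling to an explicit assumption is the cleanest option and strictly stronger than what the paper does.
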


\begin{proof}
Consider the gradient inner product for neighboring nodes:
\begin{itemize}
\item For homophilic edges where $y_i = y_j$: The gradients align positively as both nodes push predictions toward the same class, giving $\langle \nabla_i, \nabla_j \rangle > 0$.

\item For heterophilic edges where $y_i \neq y_j$: The gradients oppose as nodes push toward different classes, giving $\langle \nabla_i, \nabla_j \rangle < 0$.
\end{itemize}

The variance decomposes as:
\begin{align}
\text{Var}[\langle \nabla_i, \nabla_j \rangle] &= \mathcal{H} \cdot \text{Var}_{\text{homo}}[\langle \nabla_i, \nabla_j \rangle] + (1-\mathcal{H}) \cdot \text{Var}_{\text{hetero}}[\langle \nabla_i, \nabla_j \rangle] \\
&\approx \mathcal{H} \cdot 0 + (1-\mathcal{H}) \cdot O(1) = O(1-\mathcal{H})
\end{align}

The squared term $(1-\mathcal{H})^2$ arises when considering gradient products in the Hessian computation.
\end{proof}

\begin{proposition}[Homophily and Problem Conditioning]
\label{prop:conditioning}
The condition number of the task loss Hessian scales inversely with homophily:
\begin{equation}
\kappa(\nabla^2 \mathcal{L}_{\text{task}}) = \frac{\lambda_{\max}(\nabla^2 \mathcal{L}_{\text{task}})}{\lambda_{\min}(\nabla^2 \mathcal{L}_{\text{task}})} = O\left(\frac{1}{\mathcal{H}}\right)
\end{equation}
\end{proposition}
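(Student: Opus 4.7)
The plan is to obtain $\kappa = O(1/\mathcal{H})$ by separately controlling the two endpoints of the spectrum of $\nabla^2 \mathcal{L}_{\text{task}}$: a constant \emph{upper} bound on $\lambda_{\max}$ and an $\Omega(\mathcal{H})$ \emph{lower} bound on $\lambda_{\min}$. My earlier attempt produced the wrong direction because a Rayleigh-quotient test vector only furnishes an \emph{upper} bound on $\lambda_{\min}$; here I avoid that trap by arguing $\lambda_{\min}$ from below through a uniform quadratic-form inequality that holds over \emph{all} unit vectors.

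First I would adopt the Gauss--Newton decomposition $\nabla^2 \mathcal{L}_{\text{task}} = \mathbf{J}^\top \mathbf{D}\,\mathbf{J} + \mathbf{R}$, where $\mathbf{J}$ is the Jacobian of the logits with respect to the model parameters, $\mathbf{D} = \text{diag}(p_i(1-p_i))$ is the prediction-variance diagonal from cross-entropy, and $\mathbf{R}$ is the residual (third-order) term. Under Assumptions~(A1)--(A3), bounded features propagate through bounded metrics to give $\|\mathbf{J}\|_2 = O(1)$, and together with $\|\mathbf{D}\|_\infty \le \tfrac14$ this yields $\lambda_{\max}(\mathbf{J}^\top \mathbf{D}\,\mathbf{J}) = O(1)$; the residual $\mathbf{R}$ admits the same bound because the softmax third derivatives and the Lipschitz constant from (A2) are both bounded. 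Hence $\lambda_{\max}(\nabla^2 \mathcal{L}_{\text{task}}) = O(1)$.

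For the lower bound I would split the Hessian into a \emph{signal} part and a \emph{noise} part governed by homophily. The per-sample pieces contribute a positive semidefinite $\mathbf{H}_{\text{self}}$ whose least eigenvalue on the task-informative subspace is $\Omega(1)$ by strict convexity of cross-entropy. The edge-coupling pieces $\mathbf{H}_{\text{edge}} = \sum_{(i,j) \in \mathcal{E}} \mathbf{H}_{ij}$ inherit, via the chain rule through the GNN aggregation, the gradient inner products analyzed in Lemma~\ref{lem:gradient_structure}: the homophilic half is sign-definite with aggregate mass $\Theta(\mathcal{H})$, while the heterophilic half has magnitude at most $O(1-\mathcal{H})$ of either sign. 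Combining these with the $\mathbf{H}_{\text{self}}$ baseline, the worst-case quadratic form obeys $\mathbf{v}^\top \nabla^2 \mathcal{L}_{\text{task}}\, \mathbf{v} \ge c\,\mathcal{H}\,\|\mathbf{v}\|^2$, which gives $\lambda_{\min} = \Omega(\mathcal{H})$ and therefore $\kappa = O(1)/\Omega(\mathcal{H}) = O(1/\mathcal{H})$.

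The main obstacle will be ruling out adversarial unit directions $\mathbf{v}$ in which the heterophilic contributions cancel the homophilic signal. My plan is to restrict the condition-number computation to the effective parameter subspace spanned by per-sample score gradients, where the loss is strictly convex in the Gauss--Newton sense, and to project out the trivial null space that does not affect predictions. On this restricted subspace, the variance bound $O((1-\mathcal{H})^2)$ from Lemma~\ref{lem:gradient_structure} caps destructive cross-edge interference, and a Cauchy--Schwarz argument ensures the homophilic signal dominates whenever $\mathcal{H}$ is bounded away from zero, closing the proof.
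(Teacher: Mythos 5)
Your proof follows the same route as the paper's: the Hessian is decomposed into per-node self-contributions and per-edge interaction terms, $\lambda_{\max}=O(1)$ is attributed to the self terms, and $\lambda_{\min}=\Omega(\mathcal{H})$ is argued from the offsetting of homophilic ($\succeq 0$) versus heterophilic ($\preceq 0$) edge Hessians, invoking the same gradient-alignment structure as Lemma~\ref{lem:gradient_structure}. Your Gauss--Newton preamble for $\lambda_{\max}$ and the explicit acknowledgment of the adversarial-direction obstacle add scaffolding the paper leaves implicit, but neither your sketch nor the paper's actually pins down the constant relationship that keeps the lower bound scaling as $\Theta(\mathcal{H})$ rather than degenerating when heterophilic mass rivals the self-term curvature.
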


\begin{proof}
The Hessian of the classification loss can be decomposed into contributions from node self-loops and edge interactions:
\begin{equation}
\nabla^2 \mathcal{L}_{\text{task}} = \sum_{i \in \mathcal{V}} \mathbf{H}_i^{\text{self}} + \sum_{(i,j) \in \mathcal{E}} \mathbf{H}_{ij}^{\text{edge}}
\end{equation}

For homophilic graphs:
\begin{itemize}
\item Self-contributions $\mathbf{H}_i^{\text{self}} \succeq 0$ provide baseline positive curvature
\item Edge contributions $\mathbf{H}_{ij}^{\text{edge}} \succeq 0$ for homophilic edges reinforce this curvature
\item This creates strong positive definiteness: $\lambda_{\min} = \Omega(\mathcal{H})$
\end{itemize}

For heterophilic graphs:
\begin{itemize}
\item Edge contributions $\mathbf{H}_{ij}^{\text{edge}} \preceq 0$ for heterophilic edges oppose self-contributions
\item Near-cancellation leads to $\lambda_{\min} \to 0^+$ as $\mathcal{H} \to 0$
\item Maximum eigenvalue remains $\lambda_{\max} = O(1)$ from self-contributions
\end{itemize}

Therefore, $\kappa = \lambda_{\max}/\lambda_{\min} = O(1/\mathcal{H})$.
\end{proof}

\paragraph{Main Convergence Proof}
\label{app:main_convergence}

\begin{proof}[Proof of Theorem~\ref{thm:convergence}]
We analyze the stochastic gradient descent dynamics for the total loss:
\begin{equation}
\mathcal{L}_{\text{total}} = \mathcal{L}_{\text{task}} + \alpha \mathcal{L}_{\text{Ricci}} + \beta \mathcal{L}_{\text{smooth}}
\end{equation}

\textbf{Step 1: Gradient Bounds.}
\begin{lemma}[Total Gradient Bound]
\label{lem:total_gradient}
Under Assumption~\ref{assump:regularity}, the gradient norm satisfies:
\begin{equation}
\|\nabla_{\mathbf{g}} \mathcal{L}_{\text{total}}\|_2 \leq C_1 L + C_2 \alpha |\mathcal{V}| + C_3 \beta |\mathcal{E}|
\end{equation}
where $C_1, C_2, C_3$ are constants depending on $B, L_f, M$.
\end{lemma}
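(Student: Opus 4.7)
The plan is to decompose $\nabla_{\mathbf{g}} \mathcal{L}_{\text{total}}$ by linearity into the three terms $\nabla_{\mathbf{g}} \mathcal{L}_{\text{task}}$, $\alpha \nabla_{\mathbf{g}} \mathcal{L}_{\text{Ricci}}$, and $\beta \nabla_{\mathbf{g}} \mathcal{L}_{\text{smooth}}$, bound each in the Euclidean norm separately using Assumption~\ref{assump:regularity}, and then apply the triangle inequality to obtain the additive structure in the claimed bound. Because the gradient with respect to $\mathbf{g} = \{\mathbf{g}_i\}_{i\in\mathcal V}$ is a vector of dimension $|\mathcal V|\cdot d$, we will make extensive use of (A1) to bound feature norms, (A3) to bound metric entries and their reciprocals away from zero, and (A4) to control local degrees via summability on $\mathcal E$.

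For the task term, I would unroll the $L$-layer geometric message passing of Section~\ref{sec:geometric_message_passing} and apply the chain rule. The key is that $\mathbf{g}$ enters the forward pass only through the modulation factor $\tau_{ij}$ (Equation~\ref{eq:geometric_modulation}) and the geometric attention $\alpha_{ij}$ (Equation~\ref{eq:geometric_attention}), both of which are smooth functions of $\mathbf{g}_i, \mathbf{g}_j$ that are Lipschitz on the compact set $[\epsilon, M]^d$. Combined with (A1), (A2), and the boundedness of $\sigma$, an inductive layer-by-layer argument shows that each layer contributes an $O(1)$ factor to the sensitivity of $\mathbf{h}_i^{(L)}$ to any $g_{j,k}$ that appears in its computational graph, yielding a per-coordinate bound of $O(L)$ and hence a full-vector bound $\|\nabla_{\mathbf{g}}\mathcal L_{\text{task}}\|_2 \le C_1 L$ after absorbing $\sqrt{|\mathcal V|d}$ factors into $C_1$ (or, more carefully, into the constants allowed by the convention $L_f = O(1)$).

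For the Ricci term, I would differentiate Equation~\ref{eq:ricci_regularization} using the closed form~\ref{eq:discrete_ricci} for $\text{Ric}_{kk}^{(i)}$. Since $\text{Ric}_{kk}^{(i)}$ is linear in $\{g_{j,k}\}_{j \in \mathcal N(i) \cup \{i\}}$ with coefficients bounded by $1/(2|\mathcal N(i)|)$, and $|\text{Ric}_{kk}^{(i)}| = O(M)$ by (A3), each partial derivative $\partial \mathcal L_{\text{Ricci}}/\partial g_{i,k}$ is $O(1)$, and summing over all $|\mathcal V|\cdot d$ coordinates gives $\|\nabla_{\mathbf{g}}\mathcal L_{\text{Ricci}}\|_2 \le C_2 |\mathcal V|$. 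For the smoothness term, $\partial \mathcal L_{\text{smooth}}/\partial g_{i,k} = 2\sum_{j\in\mathcal N(i)}(g_{i,k}-g_{j,k})$, which by (A3) has magnitude at most $2|\mathcal N(i)|(M-\epsilon)$. A coarse coordinate-wise summation, using $\sum_i |\mathcal N(i)| = 2|\mathcal E|$, bounds the gradient vector by $C_3 |\mathcal E|$.

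The main obstacle will be the task-loss backpropagation through $L$ layers, specifically showing that the chain-rule product does not blow up exponentially in $L$. The diagonal structure of $\mathbf{G}_i$ and the normalized direction vectors $\mathbf{d}_{ij}$ are crucial: they keep the Lipschitz constants of $\tau_{ij}$ and $\alpha_{ij}$ bounded independently of $L$, and combined with bounded activations $\sigma$ and features (A1), the per-layer sensitivities multiply into an $O(1)$ constant per layer rather than a geometrically growing factor. Formalizing this cleanly, likely via an explicit recursion on layer-wise Jacobian norms together with the metric bounds from (A3), is where most of the technical care is needed; the Ricci and smoothness bounds, by contrast, are essentially direct consequences of differentiating closed-form quadratic expressions.
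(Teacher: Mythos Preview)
Your proposal is correct and follows essentially the same approach as the paper: both decompose $\nabla_{\mathbf{g}}\mathcal L_{\text{total}}$ into the three terms, bound the task gradient via a layer-by-layer backpropagation argument (the paper invokes a per-layer contraction factor $\rho<1$ to write $\|\partial\mathcal L_{\text{task}}/\partial g_{i,k}\|\le L_f B L\rho^{L-1}$, while you phrase it as bounded per-layer Lipschitz constants), compute the Ricci and smoothness partials in closed form exactly as you do, and combine via the triangle inequality. Your treatment is in fact somewhat more careful than the paper's sketch, particularly in making explicit the use of $\sum_i|\mathcal N(i)|=2|\mathcal E|$ for the smoothness term and in flagging the potential for exponential blow-up in the task-loss chain rule.
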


\begin{proof}[Proof of Lemma~\ref{lem:total_gradient}]
The gradient decomposes as:
\begin{equation}
\nabla_{\mathbf{g}} \mathcal{L}_{\text{total}} = \nabla_{\mathbf{g}} \mathcal{L}_{\text{task}} + \alpha \nabla_{\mathbf{g}} \mathcal{L}_{\text{Ricci}} + \beta \nabla_{\mathbf{g}} \mathcal{L}_{\text{smooth}}
\end{equation}

(i) \textbf{Task gradient}: Through $L$ layers of backpropagation:
\begin{equation}
\left\|\frac{\partial \mathcal{L}_{\text{task}}}{\partial g_{i,k}}\right\| \leq L_f \cdot B \cdot L \cdot \rho^{L-1}
\end{equation}
where $\rho < 1$ is the contraction factor from activation functions.

(ii) \textbf{Ricci gradient}: From Equation~\eqref{eq:discrete_ricci}:
\begin{equation}
\frac{\partial \mathcal{L}_{\text{Ricci}}}{\partial g_{i,k}} = 2\text{Ric}_{kk}^{(i)} \cdot \left(-\frac{1}{2|\mathcal{N}(i)|}\right) = -\frac{\text{Ric}_{kk}^{(i)}}{|\mathcal{N}(i)|}
\end{equation}

(iii) \textbf{Smoothness gradient}:
\begin{equation}
\frac{\partial \mathcal{L}_{\text{smooth}}}{\partial g_{i,k}} = 2\sum_{j \in \mathcal{N}(i)} (g_{i,k} - g_{j,k})
\end{equation}

Combining with boundedness assumptions gives the stated bound.
\end{proof}

\textbf{Step 2: Effective Curvature with Homophily.}

The key insight is that regularization creates an effective curvature that depends on homophily:

\begin{lemma}[Effective Curvature]
\label{lem:effective_curvature}
The regularized loss has effective curvature:
\begin{equation}
\mu_{\text{eff}} = \alpha \mu_{\text{Ricci}} + \beta \mu_{\text{smooth}} + \mathcal{H} \mu_{\text{task}}
\end{equation}
where:
\begin{align}
\mu_{\text{Ricci}} &= \frac{1}{|\mathcal{N}_{\max}| \cdot \text{diam}(\mathcal{G})^2} \\
\mu_{\text{smooth}} &= \lambda_2(\mathbf{L}_G) \\
\mu_{\text{task}} &= \Omega(\mathcal{H})
\end{align}
\end{lemma}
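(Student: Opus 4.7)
The plan is to show that each of the three loss components contributes additively to a lower bound on the Hessian of $\mathcal{L}_{\mathrm{total}}$ restricted to the metric parameters $\mathbf{g}=\{\mathbf{g}_i\}_{i\in\mathcal{V}}$, and then invoke Weyl's inequality for Hermitian matrices to obtain $\lambda_{\min}(\nabla^{2}\mathcal{L}_{\mathrm{total}})\ge \alpha\,\mu_{\mathrm{Ricci}}+\beta\,\mu_{\mathrm{smooth}}+\mathcal{H}\,\mu_{\mathrm{task}}$, which is exactly the claimed $\mu_{\mathrm{eff}}$. The first step is therefore to write $\nabla^{2}\mathcal{L}_{\mathrm{total}} = \nabla^{2}\mathcal{L}_{\mathrm{task}} + \alpha\nabla^{2}\mathcal{L}_{\mathrm{Ricci}} + \beta\nabla^{2}\mathcal{L}_{\mathrm{smooth}}$ and to bound each term separately on the directions that are not annihilated by the regularizers' natural null spaces (constant shifts of $\mathbf{g}$).

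For the smoothness term $\mathcal{L}_{\mathrm{smooth}}=\sum_{(i,j)\in\mathcal{E}}\|\mathbf{g}_i-\mathbf{g}_j\|_2^{2}$, a direct calculation shows that its Hessian acts coordinate-wise as $2\mathbf{L}_G$ on each of the $d$ feature indices, i.e.\ $\nabla^{2}\mathcal{L}_{\mathrm{smooth}} = 2\,\mathbf{L}_G\otimes\mathbf{I}_d$. Restricted to the orthogonal complement of the constant mode, its minimum eigenvalue equals $2\lambda_2(\mathbf{L}_G)$, yielding $\mu_{\mathrm{smooth}}=\lambda_2(\mathbf{L}_G)$ (the factor of $2$ is absorbed into the $\Theta$ in Theorem~\ref{thm:convergence}). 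For the Ricci term, the discrete operator in Proposition~\ref{prop:app_discrete_ricci_diagonal} is linear in $\mathbf{g}$, so $\mathcal{L}_{\mathrm{Ricci}}$ is quadratic and its Hessian equals $\mathbf{R}^{\top}\mathbf{R}$ where $\mathbf{R}$ is the discrete Ricci differential operator. To lower-bound $\lambda_{\min}(\mathbf{R}^{\top}\mathbf{R})$ off its null space, I would exploit that $\mathbf{R}$ is essentially a degree-normalized Laplacian-type operator and apply a Poincar\'e-type inequality: on a connected graph of diameter $D$ with maximum neighborhood size $|\mathcal{N}_{\max}|$, the smallest positive singular value of such a weighted difference operator is bounded below by $1/(|\mathcal{N}_{\max}|\cdot D^{2})$, giving $\mu_{\mathrm{Ricci}}=1/(|\mathcal{N}_{\max}|\,\mathrm{diam}(\mathcal{G})^{2})$.

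The main obstacle is the task-loss term, because $\mathcal{L}_{\mathrm{task}}$ depends on $\mathbf{g}$ only through the features $\mathbf{h}_i^{(L)}$ produced by $L$ rounds of geometric message passing. I would apply the chain rule to obtain $\nabla^{2}_{\mathbf{g}}\mathcal{L}_{\mathrm{task}} = \mathbf{J}^{\top}(\nabla^{2}_{\mathbf{h}}\mathcal{L}_{\mathrm{task}})\,\mathbf{J} + \text{(higher-order terms involving }\nabla\mathcal{L}_{\mathrm{task}}\text{)}$, where $\mathbf{J}=\partial\mathbf{h}^{(L)}/\partial\mathbf{g}$ is the feature-metric Jacobian induced by (\ref{eq:geometric_modulation})--(\ref{eq:node_update}). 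Near a stationary point the second-order term dominates; combining Proposition~\ref{prop:conditioning} which gives $\lambda_{\min}(\nabla^{2}_{\mathbf{h}}\mathcal{L}_{\mathrm{task}})=\Omega(\mathcal{H})$ with a lower bound on the smallest singular value of $\mathbf{J}$ (which is controlled by the boundedness assumptions in Assumption~\ref{assump:regularity} and the fact that the modulation factor $\tanh(-\log g_{i,k})$ is a nontrivial function of each $g_{i,k}$) yields $\mu_{\mathrm{task}}=\Omega(\mathcal{H})$. The delicate point here is ensuring that the Jacobian does not degenerate along the directions that the regularizers also annihilate; if it does, the three lower bounds cover complementary subspaces and the additive decomposition still goes through after restricting to the non-trivial direct sum.

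Assembling these pieces, Weyl's inequality applied to the three positive semidefinite Hessians (each on its own supported subspace, with the task Hessian supplying a uniform floor on the orthogonal directions) produces $\lambda_{\min}(\nabla^{2}\mathcal{L}_{\mathrm{total}}) \geq \alpha\,\mu_{\mathrm{Ricci}}+\beta\,\mu_{\mathrm{smooth}}+\mathcal{H}\,\mu_{\mathrm{task}}$, which by definition is $\mu_{\mathrm{eff}}$. I expect the hardest step by far to be the clean lower bound on the Jacobian $\mathbf{J}$, since the geometric modulation and attention layers are nonlinear in $\mathbf{g}$; a local argument near the stationary point, combined with the bounded-metric assumption $\epsilon\le g_{i,k}\le M$, should suffice to keep the singular values of $\mathbf{J}$ uniformly away from zero and complete the proof.
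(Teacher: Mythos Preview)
The paper does not actually prove this lemma. It appears in Step~2 of the proof of Theorem~\ref{thm:convergence} as a ``key insight,'' is stated verbatim, and then used immediately in Step~3 without any derivation. Only $\mu_{\mathrm{task}}=\Omega(\mathcal{H})$ has prior support (from Proposition~\ref{prop:conditioning}); the expressions for $\mu_{\mathrm{Ricci}}$ and $\mu_{\mathrm{smooth}}$ are simply asserted. Your proposal therefore goes well beyond the paper: you supply a concrete mechanism for each of the three curvature contributions and a way (Weyl) to combine them, none of which the paper attempts.

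Your treatment of the smoothness term via $\nabla^{2}\mathcal{L}_{\mathrm{smooth}}=2\,\mathbf{L}_G\otimes\mathbf{I}_d$ is exactly right, and the Poincar\'e-type bound for $\mu_{\mathrm{Ricci}}$ is a reasonable route to the stated constant. The one place your argument is softer than you flag is the final additive combination. Weyl gives $\lambda_{\min}(A+B+C)\ge\lambda_{\min}(A)+\lambda_{\min}(B)+\lambda_{\min}(C)$ for the \emph{global} minimum eigenvalues, but both regularizer Hessians have a nontrivial kernel (the constant modes), so their global $\lambda_{\min}$ is zero and Weyl alone yields nothing. To recover the sum $\alpha\mu_{\mathrm{Ricci}}+\beta\mu_{\mathrm{smooth}}+\mathcal{H}\mu_{\mathrm{task}}$ you really need the task Hessian to be uniformly positive on the regularizers' common null space, and then argue subspace-by-subspace; your ``complementary subspaces'' remark gestures at this but does not close it. Notably, the paper itself is inconsistent on this point: Proposition~\ref{prop:stability} and Theorem~\ref{thm:lyapunov} use $\min\{\alpha\mu_{\mathrm{Ricci}},\beta\mu_{\mathrm{smooth}}\}$ rather than a sum, so the additive form in the lemma is not something the paper ever substantiates either.
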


\textbf{Step 3: Optimal Hyperparameter Derivation.}

To maximize convergence rate while maintaining stability, we solve:
\begin{equation}
\max_{\alpha, \beta} \frac{\mu_{\text{eff}}}{\sqrt{\kappa_{\text{total}}}} \quad \text{subject to} \quad \alpha L_{\text{Ricci}} + \beta L_{\text{smooth}} \leq \frac{2}{\eta_{\max} L}
\end{equation}

where $\kappa_{\text{total}}$ is the total condition number.

\begingroup
\setcounter{proposition}{0}
\begin{proposition}[Restate: Homophily-Aware Optimization]
\label{lem:homophily_opt}
The optimal regularization weights that balance conditioning and convergence are:
\begin{align}
\alpha^* &= \frac{c_1 \mathcal{H}}{L} \cdot \min\left(1, \frac{d}{|\mathcal{E}|}\right) \\
\beta^* &= \frac{c_2 \mathcal{H} \sqrt{d}}{|\mathcal{V}|}
\end{align}
where $c_1, c_2 > 0$ are dataset-dependent constants.
\end{proposition}
\endgroup

\begin{proof}[Proof of Proposition~\ref{lem:homophily_opt}]
The optimization problem has first-order conditions:
\begin{align}
\frac{\partial}{\partial \alpha}\left[\frac{\mu_{\text{eff}}}{\sqrt{\kappa_{\text{total}}}}\right] &= 0 \\
\frac{\partial}{\partial \beta}\left[\frac{\mu_{\text{eff}}}{\sqrt{\kappa_{\text{total}}}}\right] &= 0
\end{align}

Key observations:
\begin{enumerate}
\item \textbf{Well-conditioned problems} (high $\mathcal{H}$) can tolerate stronger regularization without hurting convergence
\item \textbf{Ill-conditioned problems} (low $\mathcal{H}$) require careful balance to maintain expressiveness
\item The factor $1/L$ accounts for gradient signal decay through layers
\item The $\min(1, d/|\mathcal{E}|)$ prevents over-regularization on sparse graphs
\item The $\sqrt{d}$ scaling maintains proper high-dimensional normalization
\end{enumerate}

Solving the constrained optimization with Lagrange multipliers yields the stated result.
\end{proof}

\textbf{Step 4: Convergence Rate.}

Combining the effective curvature with standard SGD analysis:

\begin{equation}
\mathbb{E}[\mathcal{L}_{\text{total}}^{(t+1)}] \leq \mathcal{L}_{\text{total}}^{(t)} - \eta_t \mu_{\text{eff}} (\mathcal{L}_{\text{total}}^{(t)} - \mathcal{L}_{\text{total}}^*) + \frac{L_{\text{total}} \eta_t^2}{2} \sigma^2
\end{equation}

where $\sigma^2$ is the gradient variance bound.

With learning rate $\eta_t = \eta_0/\sqrt{t}$ and optimal hyperparameters:
\begin{equation}
\mu_{\text{eff}}^* = \Theta\left(\frac{\mathcal{H}^2}{L}\right)
\end{equation}

This gives the convergence rate:
\begin{equation}
\mathbb{E}[\|\nabla \mathcal{L}_{\text{total}}^{(t)}\|^2] = O\left(\frac{1}{\sqrt{t}} \cdot \exp\left(-\frac{\mu_{\text{eff}}^* t}{L}\right)\right)
\end{equation}

The additional factor of $\mathcal{H}$ in $\mu_{\text{eff}}^*$ arises from the product $\alpha^* \cdot \mu_{\text{task}}$ where both terms scale with $\mathcal{H}$.
\end{proof}

\paragraph{Analysis of Homophily-Aware Constants}
\label{app:constants_proof}

\begin{proof}[Analysis of Proposition~1]
The constants $c_1, c_2$ capture the interplay between graph structure and optimization dynamics.

\textbf{Derivation of $c_1$:}

The Ricci regularization weight must balance two competing factors:
\begin{enumerate}
\item \textbf{Geometric flexibility}: Heterophilic graphs need complex geometries
\item \textbf{Optimization stability}: All graphs benefit from some regularization
\end{enumerate}

Through extensive empirical validation across diverse graphs, we find:
\begin{equation}
c_1 = (1 - \mathcal{H}) + 0.1
\end{equation}

This ensures:
\begin{itemize}
\item Heterophilic graphs ($\mathcal{H} \to 0$): $c_1 \to 1.1$ allows complex geometry
\item Homophilic graphs ($\mathcal{H} \to 1$): $c_1 \to 0.1$ maintains minimal regularization
\item The baseline $0.1$ prevents numerical instability
\end{itemize}

Combined with $\alpha^* \propto \mathcal{H}/L$, the effective Ricci weight becomes:
\begin{equation}
\alpha^* = \frac{[(1-\mathcal{H}) + 0.1] \cdot \mathcal{H}}{L} \cdot \min\left(1, \frac{d}{|\mathcal{E}|}\right)
\end{equation}

This creates a balanced scaling that peaks at moderate homophily ($\mathcal{H} \approx 0.5$).

\textbf{Derivation of $c_2$:}

The smoothness regularization should be stronger for homophilic graphs:
\begin{equation}
c_2 = 0.1 \cdot (1 + \mathcal{H})
\end{equation}

This gives:
\begin{itemize}
\item Heterophilic: $c_2 \to 0.1$ (minimal smoothness constraint)
\item Homophilic: $c_2 \to 0.2$ (stronger smoothness for similar neighbors)
\end{itemize}

The factor $0.1$ is calibrated to typical gradient magnitudes in neural networks.
\end{proof}

\subsection{Proof of Universal Geometric Framework (Theorem~\ref{thm:universality})}
\label{app:universality_proof}

\begin{proof}
We prove that \modelname generalizes and approximates fixed-curvature GNNs by showing they correspond to constrained metric configurations.

\textbf{Step 1: Metric Space Hierarchy}

Define the space of all positive diagonal metrics:
\begin{equation}
\mathcal{M}_{\text{full}} = \left\{\mathbf{g} \in \mathbb{R}^d : g_k > 0 \, \forall k \in [d]\right\} = \mathbb{R}_{++}^d
\end{equation}

Fixed-curvature GNNs operate in constrained subspaces:

\begin{enumerate}
\item \textbf{Euclidean GNNs}:
\begin{equation}
\mathcal{M}_{\text{Euclidean}} = \{\mathbf{1}\} \subset \mathcal{M}_{\text{full}}, \quad \dim = 0
\end{equation}

\item \textbf{Hyperbolic GNNs} (constant negative curvature):
\begin{equation}
\mathcal{M}_{\text{Hyperbolic}} = \{c_h \mathbf{1} : 0 < c_h < 1\} \subset \mathcal{M}_{\text{full}}, \quad \dim = 1
\end{equation}

\item \textbf{Spherical GNNs} (constant positive curvature):
\begin{equation}
\mathcal{M}_{\text{Spherical}} = \{c_s \mathbf{1} : c_s > 1\} \subset \mathcal{M}_{\text{full}}, \quad \dim = 1
\end{equation}

\item \textbf{Product Manifolds} (e.g., $\mathbb{H}^{d_1} \times \mathbb{S}^{d_2} \times \mathbb{E}^{d_3}$):
This induces a block-constant diagonal metric:
\begin{equation}
\mathbf{g}_i = (\underbrace{c_{\mathbb{H}}, \ldots, c_{\mathbb{H}}}_{d_1}, \underbrace{c_{\mathbb{S}}, \ldots, c_{\mathbb{S}}}_{d_2}, \underbrace{1, \ldots, 1}_{d_3})^T
\end{equation}
\begin{equation}
\mathcal{M}_{\text{Product}} = \left\{\mathbf{g} = (c_h \mathbf{1}_{d_1}, c_s \mathbf{1}_{d_2}, \mathbf{1}_{d_3})^T\right\} \subset \mathcal{M}_{\text{full}}, \quad \dim = 2
\end{equation}
where $\mathbf{1}_{d_i}$ denotes a vector of ones of dimension $d_i$. Dive into the general case,
\begin{equation}\label{eq:Mprod}
  \mathcal M_{\text{Product}}
  =\bigl\{
         (c_1\mathbf 1_{d_1},\dots,c_p\mathbf 1_{d_p})^{\!\top}
       : c_\ell>0,\,
         \sum_{\ell=1}^{p}d_\ell=d
     \bigr\},
\end{equation}
$\dim=p$ with $p\!\le\!d$.
\end{enumerate}

\textbf{Step 2: Embedding Fixed Geometries}

We show each fixed-curvature GNN has an equivalent \modelname configuration:

\begin{lemma}[Geometric Equivalence]
\label{lem:geometric_equiv}
For any GNN operating on a Riemannian manifold with metric tensor $\mathbf{G}_{\text{fixed}}$, there exists a diagonal metric $\mathbf{G}_{\text{diag}} = \text{diag}(\mathbf{g})$ such that the induced geodesic distances are proportional:
\begin{equation}
d_{\mathbf{G}_{\text{diag}}}(\mathbf{x}, \mathbf{y}) = \kappa \cdot d_{\mathbf{G}_{\text{fixed}}}(\mathbf{x}, \mathbf{y})
\end{equation}
for some constant $\kappa > 0$.
\end{lemma}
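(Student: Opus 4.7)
The plan is to handle the three (or four) cases of fixed-curvature geometries separately, exhibiting the witness diagonal metric $\mathbf{g}$ and the proportionality constant $\kappa$ in each case, and then to invoke the local-tangent-space viewpoint of \modelname's geodesic definition (Eq.~\eqref{eq:geodesic_distance}) to reconcile the statement with the fact that genuine hyperbolic/spherical metrics are not constant in global coordinates.

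First I would dispatch the two easy cases. In the Euclidean case, $\mathbf{G}_{\text{fixed}}=\mathbf{I}$, and the choice $\mathbf{g}=\mathbf{1}$, $\kappa=1$ reproduces $d_{\mathbf{G}_{\text{diag}}}=d_{\mathbf{G}_{\text{fixed}}}$ verbatim from Eq.~\eqref{eq:geodesic_distance}. For a product-manifold model with block structure $\mathbb{H}^{d_1}\times\mathbb{S}^{d_2}\times\mathbb{E}^{d_3}$, the tangent-space metric at any point splits as $\mathbf{G}_{\text{fixed}}=\operatorname{diag}(c_h\mathbf{I}_{d_1},c_s\mathbf{I}_{d_2},\mathbf{I}_{d_3})$, so the block-constant vector from Eq.~\eqref{eq:Mprod} with $\kappa=1$ gives exact equality. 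The general $p$-factor product follows identically.

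The substantive case is the hyperbolic/spherical one, and this is where I expect the main obstacle. In, say, the Poincar\'e-ball model the metric is $\mathbf{G}_{\text{fixed}}(\mathbf{x})=\lambda(\mathbf{x})^2\mathbf{I}$ with $\lambda(\mathbf{x})=2/(1-\|\mathbf{x}\|^2)$, so no globally constant $\mathbf{g}$ can reproduce $d_{\mathbf{G}_{\text{fixed}}}$ uniformly. The resolution I would pursue is to read the lemma through the local-metric convention introduced in Section~3: \modelname measures geodesics using the tangent-space metric $\mathbf{G}_i$ at the source node $i$, treating the local geometry as locally constant (Christoffel symbols vanish by construction). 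Under that convention, at any chosen reference point $\mathbf{h}_i$ the fixed metric reduces to the scaled identity $\lambda(\mathbf{h}_i)^2\mathbf{I}$, so setting $\mathbf{g}=\lambda(\mathbf{h}_i)^2\mathbf{1}$ and $\kappa=1$ yields pointwise equality, and the proportionality in the statement comes from absorbing any global normalization constant into $\kappa$. An analogous conformal argument covers the spherical model.

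Finally I would package the three cases into a single statement by observing that each fixed-curvature tangent metric lies inside the isotropic slice $\{c\mathbf{1}:c>0\}\subset\mathcal{M}_{\text{full}}$ (or, in the product case, inside the block-constant slice $\mathcal{M}_{\text{Product}}$), and each of these is a strict submanifold of the anisotropic diagonal cone $\mathbb{R}_{++}^d$ parameterized by \modelname. This establishes both the equivalence asserted by the lemma and the strict inclusion hierarchy invoked in Step~1, which is the piece that Theorem~\ref{thm:universality} ultimately needs. The only delicate point worth stating carefully is the tangent-space convention, since without it the lemma fails for the global Poincar\'e/spherical distance; I would therefore open the proof with an explicit remark recalling that \modelname's distances are defined via Eq.~\eqref{eq:geodesic_distance} in $\mathcal{T}_{\mathbf{h}_i}\mathcal{M}$.
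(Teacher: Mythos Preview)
Your proposal is correct and follows essentially the same approach as the paper: a case-by-case treatment of Euclidean, hyperbolic, and spherical geometries via their conformal factors (the paper writes $\mathbf{g}=\tfrac{4}{(1-\|\mathbf{x}\|^2)^2}\mathbf{1}$ for $\mathbb{H}^d$ and $\mathbf{g}=\tfrac{4}{(1+\|\mathbf{x}\|^2)^2}\mathbf{1}$ for $\mathbb{S}^d$), followed by the same local-constant-metric approximation you invoke through the tangent-space convention. If anything, you are more explicit than the paper about the global-versus-local obstacle and about the product-manifold case, both of which the paper's proof leaves implicit.
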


\begin{proof}[Proof of Lemma]
For diagonal metrics in local coordinates:
- Hyperbolic space $\mathbb{H}^d$: Use Poincaré ball coordinates with $\mathbf{g} = \frac{4}{(1-\|\mathbf{x}\|^2)^2}\mathbf{1}$
- Sphere $\mathbb{S}^d$: Use stereographic projection with $\mathbf{g} = \frac{4}{(1+\|\mathbf{x}\|^2)^2}\mathbf{1}$
- Euclidean $\mathbb{E}^d$: Trivially $\mathbf{g} = \mathbf{1}$

In our framework, we approximate these locally with constant diagonal metrics.
\end{proof}

\textbf{Step 3: Strict Generalization}

The constraint hierarchy is:
\begin{equation}
\{\text{point}\} \subset \{\text{line}\} \subset \{\text{plane}\} \subset \cdots \subset \mathbb{R}_{++}^d
\end{equation}

Specifically:
\begin{align}
\dim(\mathcal{M}_{\text{Euclidean}}) &= 0 < \dim(\mathcal{M}_{\text{Hyperbolic/Spherical}}) = 1 \\
&< \dim(\mathcal{M}_{\text{Product}}) \leq d-1 < \dim(\mathcal{M}_{\text{full}}) = d
\end{align}

\textbf{Step 4: Universal Approximation Property.}
Since our metric-learning network $f_\theta^{(g)}: \mathbb{R}^d \rightarrow \mathbb{R}_{++}^d$ is parameterized by a neural network with sufficient capacity, by the universal approximation theorem, it can approximate any continuous function mapping node features to metric parameters. For any target geometric configuration from existing methods with metric $\mathbf{G}^{\text{target}}_i$, we can find parameters $\theta$ such that:
\begin{equation}
\|\text{diag}(f_\theta^{(g)}(\mathbf{h}_i)) - \mathbf{G}^{\text{target}}_i\|_F < \epsilon
\end{equation}
for any $\epsilon > 0$ and all nodes $i \in \mathcal{V}$. This completes the proof.
\end{proof}

\subsection{Computational Complexity Analysis}
\label{app:proof_complexity}

\begin{proposition}[Restate: Computational and Memory Complexity]
\label{thm:complexity_restated_appendix}
For a graph with $n = |\mathcal{V}|$ nodes, $m = |\mathcal{E}|$ edges, hidden dimension $d$, and $L$ layers, \modelname has:
\begin{enumerate}
\item \textbf{Time Complexity}: $O(L \cdot (m + n) \cdot d^2)$ per forward pass
\item \textbf{Space Complexity}: $O(n \cdot d + m)$ for storing features and graph structure
\item \textbf{Parameter Count}: $O(L \cdot d^2)$ for shared weights plus $O(n \cdot d)$ for learnable metrics
\end{enumerate}
\end{proposition}

\begin{proof}
We provide a detailed complexity analysis for each component of \modelname.

\textbf{Per-Layer Time Complexity:}

\begin{enumerate}[(i)]
\item \textbf{Metric Learning Network $f_\theta^{(g)}$:}
\begin{itemize}
\item Input: concatenated features $[\mathbf{h}_i; \mathbf{a}_i] \in \mathbb{R}^{2d}$ for each node
\item Network architecture: $2d \to h \to d$ where $h$ is hidden dimension
\item Operations per node: $O(2d \cdot h + h \cdot d) = O(d^2)$ assuming $h = O(d)$
\item Total for all nodes: $O(n \cdot d^2)$
\end{itemize}

\item \textbf{Geometric Modulation Computation:}
\begin{itemize}
\item Direction vector: $\mathbf{d}_{ij} = (\mathbf{h}_j - \mathbf{h}_i)/\|\mathbf{h}_j - \mathbf{h}_i\|_2$ costs $O(d)$ per edge
\item Modulation: $\tau_{ij} = \sum_{k=1}^d d_{ij,k}^2 \cdot \tanh(-\log g_{i,k})$ costs $O(d)$ per edge
\item Total for all edges: $O(m \cdot d)$
\end{itemize}

\item \textbf{Geometric Attention:}
\begin{itemize}
\item Inner product: $\sum_k g_{i,k} h_{i,k} h_{j,k}$ costs $O(d)$ per edge
\item Normalization requires pre-computed node norms: $O(n \cdot d)$ once
\item Total: $O(m \cdot d + n \cdot d) = O((m + n) \cdot d)$
\end{itemize}

\item \textbf{Message Passing:}
\begin{itemize}
\item Message transformation: $\mathbf{W}_m \mathbf{h}_j$ for each edge costs $O(m \cdot d^2)$
\item Self transformation: $\mathbf{W}_s \mathbf{h}_i$ for each node costs $O(n \cdot d^2)$
\item Aggregation: summing messages costs $O(m \cdot d)$
\item Total: $O((m + n) \cdot d^2)$
\end{itemize}
\end{enumerate}

\textbf{Total Per-Layer Complexity:}
\begin{equation}
O(n \cdot d^2) + O(m \cdot d) + O((m + n) \cdot d) + O((m + n) \cdot d^2) = O((m + n) \cdot d^2)
\end{equation}

Since $m \geq n - 1$ for connected graphs, this simplifies to $O(m \cdot d^2)$ in typical notation.

\textbf{Full Model Complexity:}
\begin{itemize}
\item Time: $O(L \cdot m \cdot d^2)$ for $L$ layers
\item Space: $O(n \cdot d)$ for features + $O(m)$ for edges + $O(n \cdot d)$ for metrics
\item Parameters: $O(L \cdot d^2)$ for weight matrices + $O(d^2)$ for metric network + $O(n \cdot d)$ for metric values
\end{itemize}

\textbf{Comparison with Traditional GNN Layers:}

\begin{table}[ht]                                
  \centering
  \caption*{\textbf{Comparison with Traditional GNN Layers}} 
  \begin{tabular}{l|c|c|c}
    \toprule
    \textbf{Method} & \textbf{Time/Layer} & \textbf{Space} & \textbf{Parameters} \\
    \midrule
    GCN   & $O(m d + n d^{2})$        & $O(n d + m)$      & $O(L d^{2})$             \\
    GAT   & $O(m d^{2} + n d^{2})$    & $O(n d + m)$      & $O(L d^{2})$             \\
    ARGNN & $O(m d^{2} + n d^{2})$    & $O(n d + m)$      & $O(L d^{2} + n d)$       \\
    \bottomrule
  \end{tabular}
\end{table}

\modelname adds only $O(n \cdot d)$ learnable metric parameters while maintaining the same asymptotic time complexity as GAT. The geometric computations ($\tau_{ij}, \alpha_{ij}$) are $O(d)$ per edge, which is absorbed by the dominant $O(d^2)$ transformation costs.
\end{proof}

\begin{remark}[Practical Optimizations]
In practice, several optimizations reduce the computational burden:
\begin{enumerate}
\item \textbf{Sparse Operations}: For sparse graphs with average degree $\bar{d} \ll n$, the effective complexity is $O(L \cdot n \cdot \bar{d} \cdot d^2)$
\item \textbf{Mini-batching}: Node-wise operations can be parallelized efficiently on GPUs
\item \textbf{Metric Sharing}: For very large graphs, metrics can be shared across similar nodes, reducing parameters from $O(n \cdot d)$ to $O(K \cdot d)$ where $K \ll n$ is the number of clusters
\end{enumerate}
\end{remark}

\subsection{Additional Results}
\label{app:additional}

\begin{corollary}[Expressiveness on Mixed-Topology Graphs]
\label{cor:mixed_topology}
For graphs containing both tree-like and clique-like substructures, \modelname achieves strictly lower loss than any fixed-curvature GNN:
\begin{equation}
\inf_{\mathbf{g}_i \in \mathbb{R}_{++}^d} \mathcal{L}_{\text{ARGNN}} < \inf_{c > 0} \mathcal{L}_{\text{fixed}}(c)
\end{equation}
\end{corollary}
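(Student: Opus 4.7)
The plan is to leverage Theorem~\ref{thm:universality} as the backbone: since any fixed-curvature GNN corresponds to the constrained subspace $\mathbf{g}_i = c\mathbf{1}$ with $c>0$ that is shared across \emph{all} nodes and \emph{all} coordinates, it suffices to exhibit a single unconstrained diagonal configuration $\{\mathbf{g}_i^\star\}_{i\in\mathcal{V}}$ for which the \modelname{} objective strictly beats the best scalar choice. In other words, we reduce the corollary to showing that the infimum of $\mathcal{L}_{\text{fixed}}(c)$ over the one-dimensional slice is not attained anywhere in the full parameter space $\mathbb{R}_{++}^d$ when the graph carries genuinely heterogeneous geometry.

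First I would formalize the mixed-topology assumption by partitioning the vertex set as $\mathcal{V} = \mathcal{V}_T \cup \mathcal{V}_C$, where $\mathcal{V}_T$ induces a tree-like subgraph with large hyperbolicity and $\mathcal{V}_C$ induces a clique-like subgraph with low hyperbolicity, and the cut edges are a lower-order contribution. Using the locality of message passing together with the additivity of the task loss over nodes, I would decompose
\begin{equation}
\mathcal{L}_{\text{fixed}}(c) = \mathcal{L}_T(c) + \mathcal{L}_C(c) + \mathcal{L}_{\partial}(c),
\end{equation}
and appeal to classical embedding-distortion results: tree metrics admit vanishing distortion as the effective scale $c$ tends toward the hyperbolic regime $c<1$, while cliques and cycles are optimally embedded near the spherical regime $c>1$. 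Consequently $c_T^\star := \arg\min_c \mathcal{L}_T(c)$ and $c_C^\star := \arg\min_c \mathcal{L}_C(c)$ lie in disjoint open intervals of $\mathbb{R}_{++}$, so by strict convexity of each summand near its own minimizer, any single $c$ incurs a strictly positive gap on at least one of the two terms.

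Second, I would construct the witness $\mathbf{g}_i^\star = c_T^\star \mathbf{1}$ for $i\in\mathcal{V}_T$ and $\mathbf{g}_i^\star = c_C^\star \mathbf{1}$ for $i\in\mathcal{V}_C$. This configuration is admissible in $\mathbb{R}_{++}^d$ but lies outside the fixed-curvature slice, and by the decomposition it achieves loss at most $\mathcal{L}_T(c_T^\star) + \mathcal{L}_C(c_C^\star) + \mathcal{L}_\partial(\bar c)$, strictly below $\mathcal{L}_T(c^\star) + \mathcal{L}_C(c^\star) + \mathcal{L}_\partial(c^\star)$ for any scalar $c^\star$. The boundary term $\mathcal{L}_\partial$ is controlled by the smoothness regularizer $\mathcal{L}_{\text{smooth}}$, which one bounds using the assumption that cut edges are a small fraction and by interpolating $\mathbf{g}_i^\star$ smoothly across the interface; this contribution is of lower order than the gap $\min_c[\mathcal{L}_T(c)+\mathcal{L}_C(c)] - [\mathcal{L}_T(c_T^\star)+\mathcal{L}_C(c_C^\star)] > 0$, preserving strict inequality.

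The main obstacle will be making ``tree-like'' and ``clique-like'' quantitative enough to guarantee that $c_T^\star$ and $c_C^\star$ are separated and that each local loss is sufficiently convex around its minimizer to dominate the boundary term. I would address this either by invoking Gromov hyperbolicity and Bourgain-type distortion lower bounds to quantify the strict preference of each subgraph for its geometric regime, or, more pragmatically, by restating the corollary under an explicit hypothesis that the two optimal scalar curvatures differ by at least some $\Delta > 0$ and that the interface is sparse relative to $|\mathcal{V}_T|$ and $|\mathcal{V}_C|$. Either route converts the qualitative geometric intuition into the quantitative strict inequality required by the statement.
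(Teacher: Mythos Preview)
Your proposal is correct and follows essentially the same approach as the paper: decompose $\mathcal{G}$ into tree-like and clique-like parts, argue that each part prefers a different scalar curvature ($c<1$ versus $c>1$), and conclude that a node-wise choice of metrics strictly beats any single $c$. In fact your version is considerably more careful than the paper's own proof, which is a three-line sketch that omits the boundary-term analysis and the quantitative separation hypothesis you rightly flag as needed to make the strict inequality rigorous.
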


\begin{proof}
Consider a graph $\mathcal{G} = \mathcal{G}_{\text{tree}} \cup \mathcal{G}_{\text{clique}}$. Optimal embeddings require:
\begin{itemize}
\item $\mathcal{G}_{\text{tree}}$: Small metric values ($g_{i,k} < 1$) for exponential volume growth
\item $\mathcal{G}_{\text{clique}}$: Large metric values ($g_{i,k} > 1$) for bounded geometry
\end{itemize}

No single constant $c$ can optimize both simultaneously, while \modelname can assign different metrics to different regions.
\end{proof}

\setcounter{proposition}{4}
\begin{proposition}[Stability Under Regularization]
\label{prop:stability}
The learned metrics evolve smoothly during training:
\begin{equation}
\|\mathbf{g}_i^{(t+1)} - \mathbf{g}_i^{(t)}\|_2 \leq \frac{C}{\alpha + \beta} \cdot \eta_t
\end{equation}
\end{proposition}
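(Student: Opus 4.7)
}
The plan is to read the SGD update on the metric variable $\mathbf{g}_i$ as a one-step proximal move against the regularizer $\alpha\mathcal L_{\mathrm{Ricci}}+\beta\mathcal L_{\mathrm{smooth}}$, and then use the strong convexity that the regularizer induces on $\mathbf g_i$ to shrink the bare step $\eta_t\nabla\mathcal L_{\mathrm{task}}$ by a factor $1/(\alpha+\beta)$. First, I would write the SGD step for a single node coordinate,
\begin{equation}
\mathbf g_i^{(t+1)}=\mathbf g_i^{(t)}-\eta_t\bigl(\nabla_{\mathbf g_i}\mathcal L_{\mathrm{task}}^{(t)}+\alpha\nabla_{\mathbf g_i}\mathcal L_{\mathrm{Ricci}}^{(t)}+\beta\nabla_{\mathbf g_i}\mathcal L_{\mathrm{smooth}}^{(t)}\bigr),
\end{equation}
and then split $\mathcal L_{\mathrm{total}}=\mathcal L_{\mathrm{task}}+\mathcal R$ with $\mathcal R:=\alpha\mathcal L_{\mathrm{Ricci}}+\beta\mathcal L_{\mathrm{smooth}}$. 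The explicit update can be compared to the one-step proximal surrogate
\begin{equation}
\tilde{\mathbf g}_i^{(t+1)}=\arg\min_{\mathbf g}\Bigl\{\langle \nabla_{\mathbf g_i}\mathcal L_{\mathrm{task}}^{(t)},\mathbf g-\mathbf g_i^{(t)}\rangle+\mathcal R(\mathbf g;\mathbf g_{-i}^{(t)})+\tfrac{1}{2\eta_t}\|\mathbf g-\mathbf g_i^{(t)}\|_2^{2}\Bigr\},
\end{equation}
whose minimizer obeys a contraction of magnitude $\eta_t/(1+\eta_t\mu_{\mathrm{reg}})$ when $\mathcal R$ is $\mu_{\mathrm{reg}}$-strongly convex in $\mathbf g_i$.

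Next I would verify the strong convexity of $\mathcal R$ in the $\mathbf g_i$ block. Both regularizers are quadratic in $\mathbf g$: from \eqref{eq:discrete_ricci}, $\mathcal L_{\mathrm{Ricci}}$ is a Rayleigh form $\mathbf g^\top \mathbf H_{\mathrm{Ric}}\mathbf g$ coordinate-wise, and from \eqref{eq:smoothness_regularization}, $\mathcal L_{\mathrm{smooth}}=\mathbf g^\top(\mathbf L_G\otimes\mathbf I_d)\mathbf g$. Restricted to the block $\mathbf g_i$ with neighbors frozen, both produce Hessian diagonals bounded below: for $\mathcal L_{\mathrm{Ricci}}$, the $g_{i,k}$-diagonal of $\mathbf H_{\mathrm{Ric}}$ equals $|\mathcal N(i)|^{-1}\!\sum_{m\in\mathcal N(i)}d_{\mathrm{graph}}(i,m)^{-2}\ge\mu_R>0$, and for $\mathcal L_{\mathrm{smooth}}$, the $g_{i,k}$-diagonal equals $2|\mathcal N(i)|\ge\mu_S>0$. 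Hence the block-wise strong-convexity parameter is $\mu_{\mathrm{reg}}\ge\alpha\mu_R+\beta\mu_S\ge c_0(\alpha+\beta)$ for $c_0=\min(\mu_R,\mu_S)$.

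Combining the proximal contraction with the bound $\|\nabla_{\mathbf g_i}\mathcal L_{\mathrm{task}}\|_2\le G$ from Lemma~\ref{lem:total_gradient} (which follows from Assumption~\ref{assump:regularity}), the proximal step obeys
\begin{equation}
\|\tilde{\mathbf g}_i^{(t+1)}-\mathbf g_i^{(t)}\|_2\le\frac{\eta_t G}{1+\eta_t c_0(\alpha+\beta)}\le\frac{G}{c_0(\alpha+\beta)}\eta_t.
\end{equation}
To pass from the proximal surrogate to the explicit SGD iterate I would use a standard one-step discrepancy estimate: the Lipschitz smoothness of $\mathcal R$ on the bounded metric range $[\epsilon,M]$ gives $\|\mathbf g_i^{(t+1)}-\tilde{\mathbf g}_i^{(t+1)}\|_2\le L_{\mathcal R}\eta_t^2\|\nabla\mathcal L_{\mathrm{total}}\|_2=O(\eta_t^2)$, which is absorbed into the leading $\eta_t/(\alpha+\beta)$ term for sufficiently small learning rate. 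Setting $C=G/c_0+O(\eta_t)$ yields the claimed inequality.

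The main obstacle is the strong-convexity verification: $\mathcal L_{\mathrm{smooth}}$ has a null space spanned by graph-constant vectors and $\mathcal L_{\mathrm{Ricci}}$ is only positive semi-definite in $\mathbf g$, so global strong convexity fails and one must either (i) restrict attention to the block $\mathbf g_i$ with neighbors held fixed so that the diagonal of the Hessian dominates and gives the needed $\mu_R,\mu_S$, or (ii) project onto the subspace orthogonal to constants and use the Fiedler value $\lambda_2(\mathbf L_G)>0$ from Assumption~\ref{assump:regularity}(A4). The block-coordinate viewpoint in step (i) is cleaner and matches the per-node statement of the proposition. A secondary concern is handling stochasticity: under mini-batch noise the bound becomes an expectation, which is acceptable given that the proposition is stated for the deterministic step but carries over to $\mathbb E\|\cdot\|_2$ with the same scaling.
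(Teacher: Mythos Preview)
Your approach shares the paper's core idea: the regularizer $\alpha\mathcal{L}_{\mathrm{Ricci}}+\beta\mathcal{L}_{\mathrm{smooth}}$ is (block-wise) strongly convex in $\mathbf{g}_i$, and this is what controls the per-step drift. The paper's own proof is a two-line sketch that simply records $\lambda_{\min}\bigl(\nabla^2[\alpha\mathcal{L}_{\mathrm{Ricci}}+\beta\mathcal{L}_{\mathrm{smooth}}]\bigr)\ge\min\{\alpha\mu_{\mathrm{Ricci}},\beta\mu_{\mathrm{smooth}}\}>0$ and declares that this ``bounds the per-iteration metric updates.'' Your proximal-step reformulation is a genuine elaboration of how strong convexity should translate into step-size control, and your attention to the null space of $\mathcal{L}_{\mathrm{smooth}}$ (resolved via the block-coordinate viewpoint) goes beyond what the paper addresses.

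That said, the last displayed inequality in your argument does not hold as written. From the proximal contraction you obtain $\tfrac{\eta_t G}{1+\eta_t c_0(\alpha+\beta)}$, but this is \emph{not} bounded by $\tfrac{G}{c_0(\alpha+\beta)}\eta_t$: the required inequality $c_0(\alpha+\beta)(1-\eta_t)\le 1$ fails precisely in the strong-regularization regime where the proposition is meant to be informative. The proximal bound really gives $\min\{\eta_t G,\, G/[c_0(\alpha+\beta)]\}$---either the $\eta_t$ factor or the $1/(\alpha+\beta)$ factor, but not their product. Moreover, in deriving the proximal displacement you implicitly drop the contribution $\mu\|\mathbf{g}^{\star}_{\mathcal R}-\mathbf{g}_i^{(t)}\|$ coming from $\nabla\mathcal{R}(\tilde{\mathbf g})$; since $\mu\propto(\alpha+\beta)$, this term cannot be absorbed into the task-gradient bound $G$. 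The paper's terse sketch does not resolve this tension either, so the stated dual scaling $\eta_t/(\alpha+\beta)$ seems to require an additional hypothesis---e.g., that iterates already lie within $O(1/(\alpha+\beta))$ of the regularizer's minimizer, or a two-timescale/steady-state argument---that neither proof makes explicit.
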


\begin{proof}
The regularization terms create a locally strongly convex penalty:
\begin{equation}
\lambda_{\min}(\nabla^2[\alpha \mathcal{L}_{\text{Ricci}} + \beta \mathcal{L}_{\text{smooth}}]) \geq \min\{\alpha \mu_{\text{Ricci}}, \beta \mu_{\text{smooth}}\} > 0
\end{equation}

This bounds the per-iteration metric updates, ensuring smooth evolution.
\end{proof}

\subsubsection{Lyapunov Stability Analysis}
\label{app:lyapunov}

We now establish a stronger stability guarantee using Lyapunov theory, complementing our convergence analysis.

\begin{theorem}[Lyapunov Stability of Geometry Evolution]
\label{thm:lyapunov}
Under Assumptions (A1)-(A4) and appropriate step size $\eta_t \leq \frac{\rho_{\min}}{4L_{\text{total}}}$ where $\rho_{\min} = \min\{\alpha\lambda_{\min}(\mathbf{H}_{\text{Ricci}}), \beta\lambda_2(\mathbf{L}_G)\}$, there exists a Lyapunov function:
\begin{equation}
\mathcal{V}_t = \mathcal{L}_{\text{total}}^{(t)} - \mathcal{L}_{\text{total}}^* + \gamma \sum_{i \in \mathcal{V}} \|\mathbf{g}_i^{(t)} - \mathbf{g}_i^*\|_2^2
\end{equation}
with $0 < \gamma \leq \frac{1}{4L_{\text{total}}}$, such that:
\begin{equation}
\mathbb{E}[\mathcal{V}_{t+1}] \leq (1 - \rho_{\min}\eta_t) \mathbb{E}[\mathcal{V}_t]
\end{equation}
\end{theorem}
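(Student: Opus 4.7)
The plan is to show the contraction by bounding each piece of the Lyapunov function separately under one SGD step, then combining them so that the cross terms cancel for the prescribed range of $\gamma$. Throughout, I would write the update as $\mathbf{g}_i^{(t+1)} = \mathbf{g}_i^{(t)} - \eta_t \widehat{\nabla}_i^{(t)}$ where $\widehat{\nabla}_i^{(t)}$ is a stochastic estimate of $\nabla_{\mathbf{g}_i}\mathcal{L}_{\text{total}}^{(t)}$ with bounded variance $\sigma^2$ and zero bias, and condition on $\mathcal{F}_t$ (the filtration up to step $t$) so that $\mathbb{E}[\widehat{\nabla}_i^{(t)}\mid\mathcal{F}_t] = \nabla_{\mathbf{g}_i}\mathcal{L}_{\text{total}}^{(t)}$. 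Assumption (A2)--(A3) together with Lemma~\ref{lem:total_gradient} supply an $L_{\text{total}}$-smoothness constant for $\mathcal{L}_{\text{total}}$, and the Hessian analysis in Appendix~B.3 together with Assumption (A4) supplies a local strong-convexity modulus of $\rho_{\min}=\min\{\alpha\lambda_{\min}(\mathbf{H}_{\text{Ricci}}),\beta\lambda_2(\mathbf{L}_G)\}>0$ coming purely from the regularizers; these are the only structural inputs I would need.

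First I would control the loss term. By the descent lemma applied to the $L_{\text{total}}$-smooth objective and taking conditional expectation,
\begin{equation}
\mathbb{E}[\mathcal{L}_{\text{total}}^{(t+1)}\mid\mathcal{F}_t] \le \mathcal{L}_{\text{total}}^{(t)} - \eta_t\|\nabla\mathcal{L}_{\text{total}}^{(t)}\|^2 + \tfrac{L_{\text{total}}\eta_t^2}{2}\bigl(\|\nabla\mathcal{L}_{\text{total}}^{(t)}\|^2 + \sigma^2\bigr).
\end{equation}
The regularization-induced strong convexity yields a Polyak--\L{}ojasiewicz inequality $\|\nabla\mathcal{L}_{\text{total}}^{(t)}\|^2 \ge 2\rho_{\min}(\mathcal{L}_{\text{total}}^{(t)}-\mathcal{L}_{\text{total}}^*)$ as well as the quadratic growth bound $\sum_i\|\mathbf{g}_i^{(t)}-\mathbf{g}_i^*\|_2^2 \le \tfrac{2}{\rho_{\min}}(\mathcal{L}_{\text{total}}^{(t)}-\mathcal{L}_{\text{total}}^*)$, which are both standard consequences of local $\rho_{\min}$-strong convexity (and can be pushed through by restricting to the sublevel set where Assumption (A3) holds).

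Next I would handle the distance term. Expanding the square,
\begin{equation}
\sum_i\|\mathbf{g}_i^{(t+1)}-\mathbf{g}_i^*\|_2^2 = \sum_i\|\mathbf{g}_i^{(t)}-\mathbf{g}_i^*\|_2^2 - 2\eta_t\langle\nabla\mathcal{L}_{\text{total}}^{(t)},\mathbf{g}^{(t)}-\mathbf{g}^*\rangle + \eta_t^2\sum_i\|\widehat{\nabla}_i^{(t)}\|_2^2,
\end{equation}
and invoking the co-coercivity/strong-convexity inequality $\langle\nabla\mathcal{L}_{\text{total}}^{(t)},\mathbf{g}^{(t)}-\mathbf{g}^*\rangle \ge \rho_{\min}\sum_i\|\mathbf{g}_i^{(t)}-\mathbf{g}_i^*\|_2^2 + \tfrac{1}{L_{\text{total}}}\|\nabla\mathcal{L}_{\text{total}}^{(t)}\|^2$ gives a recursion of the form
\begin{equation}
\mathbb{E}\!\Bigl[\sum_i\|\mathbf{g}_i^{(t+1)}-\mathbf{g}_i^*\|_2^2\,\Big|\,\mathcal{F}_t\Bigr] \le (1-2\rho_{\min}\eta_t)\sum_i\|\mathbf{g}_i^{(t)}-\mathbf{g}_i^*\|_2^2 - \tfrac{2\eta_t}{L_{\text{total}}}\|\nabla\mathcal{L}_{\text{total}}^{(t)}\|^2 + \eta_t^2 C_\sigma,
\end{equation}
where $C_\sigma$ absorbs $\sigma^2$ and the squared gradient.

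Finally I would form $\mathbb{E}[\mathcal{V}_{t+1}\mid\mathcal{F}_t]$ by adding the two bounds weighted by $1$ and $\gamma$, respectively. Using the step-size condition $\eta_t\le\rho_{\min}/(4L_{\text{total}})$ and $\gamma\le 1/(4L_{\text{total}})$, the $\eta_t^2$ terms are dominated by $\tfrac{\eta_t}{2}\|\nabla\mathcal{L}_{\text{total}}^{(t)}\|^2$, and applying the PL inequality converts the remaining negative gradient-norm term into a $-\rho_{\min}\eta_t(\mathcal{L}_{\text{total}}^{(t)}-\mathcal{L}_{\text{total}}^*)$ contribution, matching the factor $(1-\rho_{\min}\eta_t)$ on the distance part. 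Collecting terms yields $\mathbb{E}[\mathcal{V}_{t+1}\mid\mathcal{F}_t]\le(1-\rho_{\min}\eta_t)\mathcal{V}_t$, and iterating the tower property delivers the stated unconditional bound.

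The main obstacle will be the careful bookkeeping that makes the $\gamma$-weighted distance descent exactly match the $(1-\rho_{\min}\eta_t)$ rate of the loss descent: the stochastic-noise $\eta_t^2\sigma^2$ terms and the Euclidean--Riemannian mismatch (since $\mathbf{g}_i$ lives on $\mathbb{R}_{++}^d$ and the softplus reparameterization introduces a bounded but non-trivial Jacobian) both threaten to leak constants that would weaken the contraction factor. I would neutralize the former by absorbing $\eta_t^2\sigma^2$ into an additive $O(\eta_t^2)$ residual and then re-deriving the bound in the strictly noiseless regime (or, equivalently, assuming $\sigma=0$ as is done for the ``there exists'' statement in the theorem), and the latter by working in the intrinsic coordinate on $\mathcal{M}_{\text{diag}}\cong\mathbb{R}_{++}^d$ where Assumption (A3) guarantees that the Jacobian of softplus is uniformly bounded away from $0$ and $\infty$, so that the Euclidean metric used in $\|\mathbf{g}_i^{(t)}-\mathbf{g}_i^*\|_2^2$ is equivalent to the induced Riemannian metric up to a constant that is absorbed in $\gamma$.
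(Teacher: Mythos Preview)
Your proposal is correct and follows essentially the same three-step structure as the paper's own proof: a descent-lemma bound on the loss term, a strong-convexity contraction on the distance term $\sum_i\|\mathbf{g}_i^{(t)}-\mathbf{g}_i^*\|_2^2$ driven by the regularizers, and a weighted combination into $\mathcal{V}_t$. If anything, your version is more complete---the paper's proof simply asserts the $(1-\eta_t\rho_{\min})$ contraction on the distance term and hides the conversion of $-\eta_t\|\nabla\mathcal{L}_{\text{total}}^{(t)}\|^2$ into $-\rho_{\min}\eta_t(\mathcal{L}_{\text{total}}^{(t)}-\mathcal{L}_{\text{total}}^*)$ behind ``$\nabla\mathcal{L}_{\text{total}}^*=0$ and choosing $\eta_t$ appropriately,'' whereas you explicitly invoke the PL inequality and co-coercivity, and you also flag the softplus Jacobian and stochastic-noise bookkeeping that the paper absorbs into an unspecified $O(\eta_t^2)$.
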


\begin{proof}
The proof proceeds by analyzing the descent properties of both loss and metric components.

\textbf{Step 1: Loss Descent.}
From standard smoothness arguments:
\begin{equation}
\mathcal{L}_{\text{total}}^{(t+1)} \leq \mathcal{L}_{\text{total}}^{(t)} - \eta_t \|\nabla \mathcal{L}_{\text{total}}^{(t)}\|^2 + \frac{L_{\text{total}}\eta_t^2}{2}\|\mathbf{v}^{(t)}\|^2
\end{equation}
where $\mathbf{v}^{(t)}$ is the stochastic gradient.

\textbf{Step 2: Metric Stability.}
The regularization terms induce strong convexity in the metric space:
\begin{equation}
\langle \nabla^2[\alpha\mathcal{L}_{\text{Ricci}} + \beta\mathcal{L}_{\text{smooth}}](\mathbf{g} - \mathbf{g}^*), \mathbf{g} - \mathbf{g}^* \rangle \geq \rho_{\min}\|\mathbf{g} - \mathbf{g}^*\|^2
\end{equation}

This gives:
\begin{equation}
\|\mathbf{g}^{(t+1)} - \mathbf{g}^*\|^2 \leq (1 - \eta_t\rho_{\min})\|\mathbf{g}^{(t)} - \mathbf{g}^*\|^2 + O(\eta_t^2)
\end{equation}

\textbf{Step 3: Lyapunov Descent.}
Combining both components with appropriate weighting $\gamma$:
\begin{align}
\mathcal{V}_{t+1} - \mathcal{V}_t &\leq -\eta_t\|\nabla \mathcal{L}_{\text{total}}^{(t)}\|^2 + \frac{L_{\text{total}}\eta_t^2}{2}\|\mathbf{v}^{(t)}\|^2 \\
&\quad - \gamma\eta_t\rho_{\min}\sum_i\|\mathbf{g}_i^{(t)} - \mathbf{g}_i^*\|^2 + O(\gamma\eta_t^2)
\end{align}

Using the fact that at optimality $\nabla \mathcal{L}_{\text{total}}^* = 0$ and choosing $\eta_t$ appropriately:
\begin{equation}
\mathbb{E}[\mathcal{V}_{t+1}] \leq (1 - \rho_{\min}\eta_t)\mathcal{V}_t
\end{equation}

Iterating gives exponential decay: $\mathbb{E}[\mathcal{V}_t] \leq \mathcal{V}_0 \exp(-\rho_{\min}\sum_{s=0}^{t-1}\eta_s)$.
\end{proof}

\subsubsection{Generalization Bounds}
\label{app:generalization}

\begin{theorem}[Generalization of Learned Geometries]
\label{thm:generalization}
Let $\hat{\mathbf{g}}_i$ be the learned metric parameters on a training set of size $n_{\text{train}}$. With probability at least $1-\delta$, the generalization gap satisfies:
\begin{equation}
\mathbb{E}_{\mathcal{D}}[\mathcal{L}_{\text{test}}] - \mathcal{L}_{\text{train}} \leq O\left(\sqrt{\frac{d\log(n/\delta)}{n_{\text{train}}}} + \alpha\sqrt{|\mathcal{V}|} + \beta\sqrt{|\mathcal{E}|}\right)
\end{equation}
\end{theorem}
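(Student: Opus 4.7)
The plan is to decompose the generalization gap into contributions from three sources, each bounded by a different concentration/complexity argument, and then combine them with a union bound. Let $\mathcal F_{\mathrm{ARGNN}}$ denote the hypothesis class of $L$-layer \modelname\ models with bounded weights and bounded diagonal metrics $g_{i,k}\in[\epsilon,M]$ as in Assumption~\ref{assump:regularity}. The effective number of real parameters that contribute to predictions is $O(L d^2)$ for the shared weights plus $O(|\mathcal V| d)$ for the per-node metric vectors, but the two regularizers $\alpha\mathcal L_{\mathrm{Ricci}}$ and $\beta\mathcal L_{\mathrm{smooth}}$ constrain the metric parameters to live in a far smaller effective subspace.

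First I would bound the standard PAC-style term. Using the Lipschitz property of the task loss (A2), the boundedness of features (A1), and the composition of $L$ layers each with bounded operator norm, the Rademacher complexity of the base predictor class (with metrics held fixed in $[\epsilon,M]^d$) is $O(\sqrt{d\log n / n_{\mathrm{train}}})$ by the classical vector-contraction / covering-number argument for neural networks. A standard symmetrization plus McDiarmid inequality then yields the first term $\sqrt{d\log(n/\delta)/n_{\mathrm{train}}}$ with confidence $1-\delta/3$.

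Next I would absorb the metric-field complexity using the regularizers as data-dependent complexity proxies. For the Ricci piece, note that the empirical Ricci penalty satisfies $\mathcal L_{\mathrm{Ricci}}=\sum_{i,k}(\mathrm{Ric}_{kk}^{(i)})^{2}$; by Proposition~\ref{prop:app_discrete_ricci_diagonal} this controls the squared neighborhood variation of $\mathbf g_i$ across $|\mathcal V|$ nodes, so Cauchy--Schwarz gives $\sum_i \|\mathrm{Ric}(\mathbf G_i)\|_1 \le \sqrt{|\mathcal V|\cdot \mathcal L_{\mathrm{Ricci}}}$ and hence an $\alpha\sqrt{|\mathcal V|}$ contribution to the effective Lipschitz constant of predictions with respect to the underlying metric field. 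For the smoothness piece, $\mathcal L_{\mathrm{smooth}}=\sum_{(i,j)\in\mathcal E}\|\mathbf g_i-\mathbf g_j\|_2^{2}$ is a quadratic form in $\mathbf g$ with the graph Laplacian, so a Cauchy--Schwarz-type bound yields $\sum_{(i,j)\in\mathcal E}\|\mathbf g_i-\mathbf g_j\|_2 \le \sqrt{|\mathcal E|\cdot \mathcal L_{\mathrm{smooth}}}$, contributing the $\beta\sqrt{|\mathcal E|}$ term. Each of these bounds the deviation between the empirical ARGNN predictor and its population counterpart over the perturbations induced by the learned geometry.

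Finally, I would combine the three pieces. A union bound over the three failure events (each of probability $\delta/3$) plus the triangle inequality on the generalization gap yields the claimed bound. The hard part will be making the second and third steps fully rigorous: the metric field couples non-linearly into both the geodesic distances in the attention/modulation terms $\tau_{ij},\alpha_{ij}$ and the message aggregation, so one must carefully propagate perturbations of $\mathbf g$ through the softplus parameterization and through $L$ layers of message passing to show that the predictor is Lipschitz in the metric field with a constant independent of $n$. Under the bounded-metric assumption $g_{i,k}\in[\epsilon,M]$, the functions $\tanh(-\log g_{i,k})$ and the normalized inner products in $\alpha_{ij}$ are smooth and bounded, so this Lipschitz constant is finite, and the remaining bookkeeping follows from standard layer-wise telescoping, giving the stated $O$-bound.
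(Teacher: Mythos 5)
Your proposal follows essentially the same route as the paper's own proof: a Rademacher-complexity bound for the bounded-metric hypothesis class giving the $\sqrt{d\log(n/\delta)/n_{\mathrm{train}}}$ term, plus the Ricci and smoothness regularizers contributing $O(\alpha\sqrt{|\mathcal{V}|})$ and $O(\beta\sqrt{|\mathcal{E}|})$, combined by standard generalization theory. If anything, your sketch is more explicit than the paper's (which merely asserts the two regularizer contributions), and the Lipschitz-propagation issue you flag for the metric field through the geometric message passing is left equally unaddressed in the paper's proof.
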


\begin{proof}
We use Rademacher complexity analysis adapted to our geometric setting.

\textbf{Step 1: Function Class Complexity.}
The hypothesis class of \modelname with bounded metrics is:
\begin{equation}
\mathcal{F} = \{f_{\mathbf{g}} : \epsilon \leq g_{i,k} \leq M, \forall i,k\}
\end{equation}

The Rademacher complexity of this class is:
\begin{equation}
\mathcal{R}_n(\mathcal{F}) \leq \frac{2M}{\epsilon} \cdot \sqrt{\frac{d}{n_{\text{train}}}}
\end{equation}

\textbf{Step 2: Regularization Effect.}
The regularization terms effectively reduce the hypothesis space complexity:
\begin{itemize}
\item Ricci regularization constrains metric variation: contributes $O(\alpha\sqrt{|\mathcal{V}|})$
\item Smoothness regularization constrains neighbor differences: contributes $O(\beta\sqrt{|\mathcal{E}|})$
\end{itemize}

\textbf{Step 3: Final Bound.}
Applying standard generalization theory with our specific constraints gives the stated bound.
\end{proof}

\subsubsection{Analysis of Geometric Message Passing}
\label{app:message_analysis}

We analyze the behavior of our geometric message passing under different metric configurations.

\begin{proposition}[Message Modulation Properties]
\label{prop:message_modulation}
For the geometric modulation coefficient $\tau_{ij} = \sum_{k=1}^d d_{ij,k}^2 \cdot \tanh(-\log g_{i,k})$:
\begin{enumerate}
\item For isotropic metrics $\mathbf{g}_i = c\mathbf{1}$: $\tau_{ij} = \tanh(-\log c)$ (direction-independent)
\item $\tau_{ij} \in [-1, 1]$ with $\tau_{ij} = 0$ when $g_{i,k} = 1$ for all $k$
\item Anisotropic metrics enable direction-dependent message weighting
\end{enumerate}
\end{proposition}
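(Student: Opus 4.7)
The plan is to prove each of the three properties directly from the definition of $\tau_{ij}$, exploiting the single structural fact that $\mathbf{d}_{ij}$ is a unit vector, so $\sum_{k=1}^d d_{ij,k}^2 = 1$. This turns $\tau_{ij}$ into a convex combination of the scalar curvature weights $w_{i,k} := \tanh(-\log g_{i,k})$, from which all three claims follow easily.

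For Property (i), I would substitute $g_{i,k} = c$ into the definition and factor the common scalar out of the sum:
\begin{equation}
\tau_{ij} = \sum_{k=1}^d d_{ij,k}^2 \tanh(-\log c) = \tanh(-\log c) \sum_{k=1}^d d_{ij,k}^2 = \tanh(-\log c),
\end{equation}
which is manifestly independent of $\mathbf{d}_{ij}$. For Property (ii), I would note that $w_{i,k} \in (-1, 1)$ because $\tanh$ maps $\mathbb{R}$ into $(-1,1)$, and the coefficients $d_{ij,k}^2$ are nonnegative and sum to one, so $\tau_{ij}$ is a convex combination of values in $(-1,1)$ and thus lies in $[-1, 1]$. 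The special case $g_{i,k} = 1$ gives $\tanh(-\log 1) = \tanh(0) = 0$, so every term in the sum vanishes, yielding $\tau_{ij} = 0$.

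For Property (iii), I would demonstrate direction dependence by contradiction or by explicit construction: if not all $g_{i,k}$ are equal, then there exist indices $k_1, k_2$ with $w_{i,k_1} \neq w_{i,k_2}$. Choosing $\mathbf{d}_{ij} = \mathbf{e}_{k_1}$ gives $\tau_{ij} = w_{i,k_1}$, whereas $\mathbf{d}_{ij} = \mathbf{e}_{k_2}$ gives $\tau_{ij} = w_{i,k_2}$, so $\tau_{ij}$ is genuinely a function of direction. More generally, the modulation acts as the expectation of $w_{i,k}$ under the probability distribution $\{d_{ij,k}^2\}_{k=1}^d$ on $\{1, \ldots, d\}$, making the geometric interpretation transparent: anisotropic $\mathbf{g}_i$ lets each principal direction contribute differently to the aggregated message weight.

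None of the three parts presents a genuine obstacle; this proposition is essentially a bookkeeping statement about the definition, and the only subtlety is making the ``convex combination'' observation explicit. I would keep the proof short, emphasizing the geometric reading in Part (iii) since it is the one that motivates the design choice in the main text rather than being a purely algebraic identity.
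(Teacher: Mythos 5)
Your proposal is correct and follows essentially the same route as the paper's proof: both exploit the normalization $\sum_k d_{ij,k}^2 = 1$ to factor out $\tanh(-\log c)$ in the isotropic case, derive the bounds and the vanishing at $g_{i,k}=1$ from $\tanh(\cdot)\in(-1,1)$ and $\tanh(0)=0$, and read off direction dependence from unequal weights across dimensions. Your version merely spells out the convex-combination view and the explicit basis-vector construction for part (iii), which the paper leaves implicit.
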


\begin{proof}
For isotropic case with $g_{i,k} = c$ for all $k$:
\begin{equation}
\tau_{ij} = \tanh(-\log c) \sum_{k=1}^d d_{ij,k}^2 = \tanh(-\log c)
\end{equation}
since $\sum_k d_{ij,k}^2 = 1$ by normalization.

The bounds follow from $\tanh(\cdot) \in (-1, 1)$ and $\tanh(0) = 0$.

For anisotropic metrics, different $g_{i,k}$ values weight directional components differently, enabling richer message modulation.
\end{proof}

\begin{remark}[Euclidean Singularity]
Our formulation has $\tau_{ij} = 0$ for Euclidean metrics ($c = 1$), which can block information flow. In practice, we can add a small baseline: $\tau_{ij}^{\text{modified}} = \epsilon_0 + (1-\epsilon_0)\tau_{ij}$ with $\epsilon_0 > 0$.
\end{remark}

\subsubsection{Robustness Analysis}
\label{app:robustness}

\begin{theorem}[Robustness to Graph Perturbations]
\label{thm:robustness}
Let $\mathcal{G}'$ be a perturbed version of $\mathcal{G}$ with at most $k$ edge additions/deletions. The learned metrics satisfy:
\begin{equation}
\|\mathbf{g}_i' - \mathbf{g}_i\|_2 \leq O\left(\frac{k}{\beta|\mathcal{N}(i)|}\right)
\end{equation}
where $\mathbf{g}_i'$ are metrics learned on $\mathcal{G}'$.
\end{theorem}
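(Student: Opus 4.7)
The plan is to characterise $\mathbf{g}_i$ and $\mathbf{g}_i'$ as first-order stationary points of $\mathcal{L}_{\text{total}}$ evaluated on $\mathcal{G}$ and $\mathcal{G}'$ respectively, and to run a perturbation/sensitivity argument driven by the strong convexity that the smoothness regulariser injects along each coordinate block $\mathbf{g}_i$. The key algebraic observation is that all of the $\beta|\mathcal{N}(i)|$ scaling in the stated bound comes for free from the block-diagonal Hessian of $\mathcal{L}_{\text{smooth}}$, so once the gradient perturbation is shown to be $O(k)$, dividing by this modulus finishes the job.

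First I would isolate the $\mathbf{g}_i$-block Hessian of $\mathcal{L}_{\text{total}}$. The smoothness penalty contributes
\[
\nabla^2_{\mathbf{g}_i}\,\Bigl(\beta \sum_{j \in \mathcal{N}(i)} \|\mathbf{g}_i - \mathbf{g}_j\|_2^2\Bigr) = 2\beta|\mathcal{N}(i)|\,\mathbf{I},
\]
which dominates the diagonal of the node-wise Hessian. The Ricci Hessian is positive semidefinite by Appendix~B.3, and the task-loss Hessian is bounded under assumptions (A1)--(A3). Hence $\mathcal{L}_{\text{total}}$ is strongly convex in each $\mathbf{g}_i$-block with modulus at least $\mu_i := 2\beta|\mathcal{N}(i)|$, holding the other blocks fixed.

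Next I would bound the $\mathbf{g}_i$-gradient perturbation induced by switching from $\mathcal{G}$ to $\mathcal{G}'$. Only the at most $k$ edges whose status changes enter $\nabla_{\mathbf{g}_i}\mathcal{L}_{\text{total}}$, and each such edge contributes an $O(1)$ change to the task gradient, $O(\alpha/|\mathcal{N}(i)|)$ to the Ricci gradient (via Proposition~\ref{prop:app_discrete_ricci_diagonal}), and an $O(\beta)$ change to the smoothness gradient, all controllable under (A1), (A3) and the bounded-metric range $[\epsilon,M]$. Collecting these,
\[
\bigl\|\nabla_{\mathbf{g}_i}\mathcal{L}_{\text{total}}(\,\cdot\,;\mathcal{G}') - \nabla_{\mathbf{g}_i}\mathcal{L}_{\text{total}}(\,\cdot\,;\mathcal{G})\bigr\|_2 \;\leq\; C\,k,
\]
with $C$ independent of $|\mathcal{N}(i)|$. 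Plugging this into the standard strong-convexity sensitivity inequality
\[
\|\mathbf{g}_i' - \mathbf{g}_i\|_2 \;\leq\; \frac{\|\nabla_{\mathbf{g}_i}\mathcal{L}_{\text{total}}(\mathbf{g}';\mathcal{G}') - \nabla_{\mathbf{g}_i}\mathcal{L}_{\text{total}}(\mathbf{g}';\mathcal{G})\|_2}{\mu_i}
\]
yields $\|\mathbf{g}_i' - \mathbf{g}_i\|_2 = O\!\bigl(k/(\beta|\mathcal{N}(i)|)\bigr)$, matching the theorem.

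The main obstacle is the inter-node coupling: perturbing $\mathbf{g}_j$ for $j \in \mathcal{N}(i)$ also shifts the stationarity equation for $\mathbf{g}_i$, so the per-block argument above is only a first-order picture. To close the argument rigorously I would either (i) apply the implicit function theorem to the global stationarity map $\mathbf{g}\mapsto \nabla \mathcal{L}_{\text{total}}(\mathbf{g})$, whose Jacobian's off-diagonal blocks inherit the structure of the weighted graph Laplacian $\mathbf{L}_G$ induced by $\beta\mathcal{L}_{\text{smooth}}$, and use the spectral gap $\lambda_2(\mathbf{L}_G)>0$ from (A4) to invert it uniformly; or (ii) run an influence-propagation argument analogous to the effective-curvature analysis (Lemma~\ref{lem:effective_curvature}), showing that the metric perturbation decays geometrically with graph distance from the altered edges so that the localized bound dominates. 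Either route reduces the joint perturbation back to the node-local estimate above.
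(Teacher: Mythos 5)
Your proposal takes essentially the same route as the paper's proof: the paper likewise observes that only the $\le k$ changed edges perturb the regularization gradient at node $i$ and then invokes the strong convexity injected by the regularizer to bound the metric change, which is exactly your strong-convexity sensitivity step. If anything your write-up is more careful than the paper's two-line argument --- the explicit block-Hessian modulus $2\beta|\mathcal{N}(i)|$, the per-term $O(k)$ gradient-perturbation accounting, and the acknowledgement of inter-node coupling (with the implicit-function-theorem or influence-propagation fix) all address details the paper leaves implicit.
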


\begin{proof}
The smoothness regularization creates a coupling between neighboring metrics. Edge perturbations affect the regularization gradient:
\begin{equation}
\Delta\left(\frac{\partial \mathcal{L}_{\text{smooth}}}{\partial g_{i,k}}\right) = 2\sum_{j \in \Delta\mathcal{N}(i)} (g_{i,k} - g_{j,k})
\end{equation}

where $\Delta\mathcal{N}(i)$ represents changed neighbors. The strong convexity from regularization bounds the metric change, giving the stated robustness guarantee.
\end{proof}

\section{Algorithm Details}
\label{app:algorithm_details}

This section provides comprehensive implementation details for \modelname, including forward and backward pass algorithms, optimization procedures.

\subsection{Complete Forward Pass Algorithm}
\label{app:forward_pass}

Algorithm~\ref{alg:argnn_complete} presents the complete forward pass with all implementation details in pseudo code.

\begin{algorithm}[ht]
\caption{\modelname Forward Pass}
\label{alg:argnn_complete}
\begin{algorithmic}[1]
\REQUIRE Graph $\mathcal{G} = (\mathcal{V}, \mathcal{E}, \mathbf{X})$, layers $L$, hidden dimensions $\{d_\ell\}_{\ell=1}^L$
\ENSURE Final representations $\mathbf{H}^{(L)}$
\STATE Initialize $\mathbf{H}^{(0)} = \mathbf{X}$
\STATE Initialize metric networks $\{f_\theta^{(g,\ell)}\}_{\ell=1}^L$
\STATE Initialize transformation matrices $\{\mathbf{W}_s^{(\ell)}, \mathbf{W}_m^{(\ell)}\}_{\ell=1}^L$
\STATE Initialize regularization parameters $\alpha, \beta, \epsilon$

\FOR{$\ell = 1$ to $L$}
    \STATE \textbf{// Step 1: Diagonal Metric Tensor Learning}
    \FORALL{nodes $i \in \mathcal{V}$ in parallel}
        \STATE $\mathbf{a}_i^{(\ell-1)} = \frac{1}{|\mathcal{N}(i)|} \sum_{j \in \mathcal{N}(i)} \mathbf{h}_j^{(\ell-1)}$ \COMMENT{Aggregate neighbors}
        \STATE $\mathbf{z}_i^{(\ell)} = [\mathbf{h}_i^{(\ell-1)}; \mathbf{a}_i^{(\ell-1)}]$ \COMMENT{Concatenate features}
        \STATE $\mathbf{g}_i^{(\ell)} = \text{softplus}(f_\theta^{(g,\ell)}(\mathbf{z}_i^{(\ell)})) + \epsilon$ \COMMENT{Learn diagonal metric vector}
    \ENDFOR
    
    \STATE \textbf{// Step 2: Geometric Message Passing}
    \STATE $\mathbf{M}^{(\ell)} = \{\}$ \COMMENT{Initialize message collection for aggregation}
    \FORALL{edges $(i,j) \in \mathcal{E}$ in parallel}
        \STATE $\mathbf{u}_{ij} = \mathbf{h}_j^{(\ell-1)} - \mathbf{h}_i^{(\ell-1)}$ \COMMENT{Direction vector}
        \STATE $\mathbf{d}_{ij} = \mathbf{u}_{ij} / (\|\mathbf{u}_{ij}\|_2 + \epsilon)$ \COMMENT{Normalized direction}
        
        \STATE \textbf{// Geometric modulation coefficient}
        \STATE $\tau_{ij} = \sum_{k=1}^{d_{\ell-1}} (d_{ij,k})^2 \cdot \tanh(-\log g_{i,k}^{(\ell)})$
        
        \STATE \textbf{// Geometric attention coefficient}
        \STATE $\text{num}_{ij} = \sum_{k=1}^{d_{\ell-1}} g_{i,k}^{(\ell)} \cdot h_{i,k}^{(\ell-1)} \cdot h_{j,k}^{(\ell-1)}$
        \STATE $\text{den}_{i} = \sqrt{\sum_{k=1}^{d_{\ell-1}} g_{i,k}^{(\ell)} \cdot (h_{i,k}^{(\ell-1)})^2}$
        \STATE $\text{den}_{j} = \sqrt{\sum_{k=1}^{d_{\ell-1}} g_{j,k}^{(\ell)} \cdot (h_{j,k}^{(\ell-1)})^2}$
        \STATE $\alpha_{ij} = \text{num}_{ij} / (\text{den}_{i} \cdot \text{den}_{j} + \epsilon)$
        
        \STATE \textbf{// Message computation}
        \STATE $\mathbf{m}_{ij} = \tau_{ij} \cdot \sigma(\alpha_{ij}) \cdot \mathbf{W}_m^{(\ell)} \mathbf{h}_j^{(\ell-1)}$
        \STATE Store $\mathbf{m}_{ij}$ for aggregation at node $i$.
    \ENDFOR
    
    \STATE \textbf{// Step 3: Node Representation Update}
    \FORALL{nodes $i \in \mathcal{V}$ in parallel}
        \STATE $\mathbf{h}_i^{(\ell)} = \sigma\left(\mathbf{W}_s^{(\ell)} \mathbf{h}_i^{(\ell-1)} + \sum_{j \in \mathcal{N}(i)} \mathbf{m}_{ij}\right)$
        \STATE Apply dropout if necessary.
    \ENDFOR
\ENDFOR

\RETURN $\mathbf{H}^{(L)}$
\end{algorithmic}
\end{algorithm}

\subsection{Regularization Loss Computation}
\label{app:regularization_computation}

Algorithm~\ref{alg:regularization} details the computation of regularization terms in pseudo code.

\begin{algorithm}[ht]
\caption{Regularization Loss Computation}
\label{alg:regularization}
\begin{algorithmic}[1]
\REQUIRE Metric tensors $\{\mathbf{g}_i\}_{i \in \mathcal{V}}$, graph $\mathcal{G} = (\mathcal{V}, \mathcal{E})$
\ENSURE Ricci loss $\mathcal{L}_{\text{Ricci}}$, smoothness loss $\mathcal{L}_{\text{smooth}}$

\STATE \textbf{// Compute Discrete Ricci Curvature}
\STATE $\mathcal{L}_{\text{Ricci}} = 0$
\FORALL{nodes $i \in \mathcal{V}$}
    \FORALL{dimensions $k = 1$ to $d$}
        \STATE $\text{Ric}_{kk}^{(i)} = 0$
        \FORALL{neighbors $j \in \mathcal{N}(i)$}
            \STATE $\text{Ric}_{kk}^{(i)} += \frac{g_{j,k} - g_{i,k}}{|\mathcal{N}(i)| \cdot d_{\text{graph}}(i,j)}$
        \ENDFOR
        \STATE $\text{Ric}_{kk}^{(i)} = -\frac{1}{2} \cdot \text{Ric}_{kk}^{(i)}$
        \STATE $\mathcal{L}_{\text{Ricci}} += (\text{Ric}_{kk}^{(i)})^2$
    \ENDFOR
\ENDFOR

\STATE \textbf{// Compute Smoothness Loss}
\STATE $\mathcal{L}_{\text{smooth}} = 0$
\FORALL{edges $(i,j) \in \mathcal{E}$}
    \STATE $\mathcal{L}_{\text{smooth}} += \|\mathbf{g}_i - \mathbf{g}_j\|_2^2$
\ENDFOR

\RETURN $\mathcal{L}_{\text{Ricci}}, \mathcal{L}_{\text{smooth}}$
\end{algorithmic}
\end{algorithm}

\subsection{Backward Pass and Optimization}
\label{app:backward_pass}

The backward pass computes gradients for all learnable parameters: metric network parameters $\theta$, transformation matrices $\{\mathbf{W}_s^{(\ell)}, \mathbf{W}_m^{(\ell)}\}$, and regularization weights $\{\alpha, \beta\}$.

\subsubsection{Gradient Computation for Metric Tensors}

The gradient of the total loss with respect to diagonal metric elements is:
\begin{align}
\frac{\partial \mathcal{L}_{\text{total}}}{\partial g_{i,k}} &= \frac{\partial \mathcal{L}_{\text{task}}}{\partial g_{i,k}} + \alpha \frac{\partial \mathcal{L}_{\text{Ricci}}}{\partial g_{i,k}} + \beta \frac{\partial \mathcal{L}_{\text{smooth}}}{\partial g_{i,k}}
\end{align}

For the task loss gradient:
\begin{align}
\frac{\partial \mathcal{L}_{\text{task}}}{\partial g_{i,k}} &= \sum_{j \in \mathcal{N}(i)} \frac{\partial \mathcal{L}_{\text{task}}}{\partial \mathbf{m}_{ij}} \cdot \frac{\partial \mathbf{m}_{ij}}{\partial g_{i,k}}
\end{align}

For the geometric modulation coefficient:
\begin{align}
\frac{\partial \tau_{ij}}{\partial g_{i,k}} &= (d_{ij,k})^2 \cdot \text{sech}^2(-\log g_{i,k}) \cdot \frac{-1}{g_{i,k}} \\
&= -\frac{(d_{ij,k})^2}{g_{i,k}} \cdot \text{sech}^2(-\log g_{i,k})
\end{align}

For the geometric attention coefficient:
\begin{align}
\frac{\partial \alpha_{ij}}{\partial g_{i,k}} &= \frac{h_{i,k} h_{j,k}}{\text{den}_{i} \cdot \text{den}_{j}} - \frac{\text{num}_{ij} \cdot h_{i,k}^2}{(\text{den}_{i})^3 \cdot \text{den}_{j}}
\end{align}

\subsubsection{Optimization Algorithm}

We use Adam and Riemannian Adam optimizers (Register $\mathbf{G}_i$ on \textit{geoopt.manifolds.SymmetricPositiveDefinite}) from Pytorch~\citep{paszke2019pytorch} and Geoopt~\cite{kochurov2020geoopt}, see Algorithm~\ref{alg:optimization}.

\begin{algorithm}[h]
\caption{ARGNN Optimization}
\label{alg:optimization}
\begin{algorithmic}[1]
\REQUIRE Learning rates $\{\eta_{\text{metric}}, \eta_{\text{weight}}, \eta_{\text{reg}}\}$, Adam parameters $\beta_1, \beta_2, \epsilon$
\ENSURE Optimized parameters

\STATE Initialize Adam states for all parameter groups
\STATE $t = 0$ \COMMENT{Time step}

\WHILE{not converged}
    \STATE $t = t + 1$
    
    \STATE \textbf{// Forward pass}
    \STATE $\mathbf{H}^{(L)}, \{\mathcal{L}_{\text{Ricci}}^{(\ell)}\}, \{\mathcal{L}_{\text{smooth}}^{(\ell)}\} = \text{Forward}(\mathcal{G}, \mathbf{X})$
    
    \STATE \textbf{// Compute total loss}
    \STATE $\mathcal{L}_{\text{total}} = \mathcal{L}_{\text{task}} + \alpha \sum_{\ell} \mathcal{L}_{\text{Ricci}}^{(\ell)} + \beta \sum_{\ell} \mathcal{L}_{\text{smooth}}^{(\ell)}$
    
    \STATE \textbf{// Backward pass}
    \STATE $\nabla_{\theta} \mathcal{L}_{\text{total}} = \text{Backward}(\mathcal{L}_{\text{total}})$
    
    \STATE \textbf{// Update metric network parameters}
    \STATE $\theta \leftarrow \text{AdamUpdate}(\theta, \nabla_{\theta} \mathcal{L}_{\text{total}}, \eta_{\text{metric}}, \beta_1, \beta_2, \epsilon, t)$
    
    \STATE \textbf{// Update transformation matrices}
    \STATE $\{\mathbf{W}_s^{(\ell)}, \mathbf{W}_m^{(\ell)}\} \leftarrow \text{AdamUpdate}(\{\mathbf{W}_s^{(\ell)}, \mathbf{W}_m^{(\ell)}\}, \nabla_{\mathbf{W}} \mathcal{L}_{\text{total}}, \eta_{\text{weight}}, \beta_1, \beta_2, \epsilon, t)$
    
    \STATE \textbf{// Update regularization parameters (optional)}
    \IF{adaptive regularization}
        \STATE $\alpha \leftarrow \text{AdamUpdate}(\alpha, \nabla_{\alpha} \mathcal{L}_{\text{total}}, \eta_{\text{reg}}, \beta_1, \beta_2, \epsilon, t)$
        \STATE $\beta \leftarrow \text{AdamUpdate}(\beta, \nabla_{\beta} \mathcal{L}_{\text{total}}, \eta_{\text{reg}}, \beta_1, \beta_2, \epsilon, t)$
    \ENDIF
    
    \STATE \textbf{// Constraint enforcement}
    \FORALL{nodes $i \in \mathcal{V}$, dimensions $k \in [d]$}
        \STATE $g_{i,k} \leftarrow \max(g_{i,k}, \epsilon)$ \COMMENT{Enforce positive definiteness}
    \ENDFOR
    
    \STATE \textbf{// Convergence check}
    \IF{$\|\nabla_{\theta} \mathcal{L}_{\text{total}}\| < \text{tolerance}$ OR $t > \text{max\_epochs}$}
        \STATE \textbf{break}
    \ENDIF
\ENDWHILE

\RETURN Optimized parameters
\end{algorithmic}
\end{algorithm}

\section{Extended Experimental Results}
\label{app:extended_experiments}

\subsection{Node Classification}

This section gives the details of full metrics with $95\%$ Confidience Interval~(CI) on Node Classification experiments. See Table.~\ref{tab:node_classification_accuracy} for Accuary. Table~\ref{tab:node_classification_auroc} and \ref{tab:node_classification_auprc} give and AUROC and AUPRC results, respectively.

\begin{table*}[ht]
\centering
\setlength{\tabcolsep}{2.2pt}
\small
\begin{tabular}{l|ccc|cccccc}
\toprule
& \multicolumn{3}{c|}{\textbf{Homophilic}} & \multicolumn{6}{c}{\textbf{Heterophilic}}\\
\textbf{Method} & Cora & CiteSeer & PubMed & Actor & Chameleon & Squirrel & Texas & Cornell & Wisconsin \\
\midrule
GCN           & 75.61\ci{0.27} & 68.10\ci{1.00} & 83.95\ci{0.07} & 31.62\ci{0.91} & 61.51\ci{0.22} & 43.91\ci{0.31} & 76.01\ci{0.07} & 68.12\ci{1.13} & 59.86\ci{3.09} \\
GAT           & 77.10\ci{0.12} & 67.03\ci{0.81} & 83.03\ci{0.21} & 33.15\ci{0.22} & 63.52\ci{0.73} & 44.76\ci{0.01} & 76.49\ci{0.73} & 74.41\ci{0.02} & 55.69\ci{5.13} \\
GraphSAGE     & 72.28\ci{0.86} & 70.81\ci{0.61} & 81.29\ci{0.12} & 37.23\ci{0.01} & 60.34\ci{0.85} & 41.90\ci{1.10} & 77.51\ci{0.43} & 70.31\ci{0.23} & 81.58\ci{3.28} \\
\midrule
HGCN          & 78.90\ci{0.13} & 70.35\ci{0.37} & 83.92\ci{0.20} & 36.39\ci{0.28} & 60.53\ci{0.54} & 40.73\ci{0.33} & 88.51\ci{1.06} & 73.28\ci{1.09} & 87.10\ci{3.52} \\
HGAT          & 77.52\ci{0.01} & 70.92\ci{0.87} & 84.22\ci{0.18} & 35.62\ci{0.26} & 62.78\ci{0.56} & 42.57\ci{0.35} & 85.96\ci{1.04} & 73.52\ci{0.17} & 87.60\ci{3.33} \\
$\kappa$‑GCN  & 79.11\ci{1.30} & 68.94\ci{0.32} & 85.38\ci{0.49} & 34.97\ci{0.25} & 62.47\ci{0.47} & 43.89\ci{0.29} & 85.43\ci{0.60} & 86.76\ci{0.61} & 87.30\ci{3.61} \\
$\mathcal{Q}$‑GCN & 80.04\ci{0.36} & 71.95\ci{1.06} & 84.96\ci{0.12} & 32.64\ci{0.62} & 62.18\ci{0.96} & 47.31\ci{0.86} & 83.16\ci{0.07} & 84.30\ci{0.68} & 86.90\ci{3.90} \\
\midrule
H2GCN        & {\color{orange}{83.10\ci{0.86}}} & 71.90\ci{0.76} & 84.80\ci{0.48} & 36.40\ci{1.14} & 60.45\ci{2.09} & 48.96\ci{1.90} & 85.30\ci{5.70} & 82.60\ci{3.99} & 87.07\ci{2.77} \\
GPRGNN        & 79.89\ci{0.30} & 68.41\ci{0.36} & 84.27\ci{0.09} & 37.93\ci{1.03} & 65.44\ci{0.41} & 48.13\ci{0.22} & 88.74\ci{0.09} & 87.61\ci{0.67} & {\color{orange}{88.50\ci{0.95}}} \\
FAGCN         & 82.90\ci{0.48} & 72.10\ci{0.57} & 84.50\ci{0.38} & 36.30\ci{1.05} & 67.25\ci{1.71} & {\color{orange}{53.69\ci{1.62}}} & 88.20\ci{3.04} & 85.90\ci{3.90} & 87.70\ci{3.42} \\
\midrule
CUSP         & {\color{purple}{83.85\ci{0.14}}} & {\color{purple}{75.01\ci{0.02}}} & {\color{purple}{88.19\ci{0.43}}} & {\color{purple}{42.41\ci{0.10}}} & {\color{purple}{70.58\ci{0.58}}} & {\color{purple}{53.73\ci{0.24}}} & {\color{orange}{90.43\ci{0.68}}} & {89.71\ci{1.09}} & {\color{purple}{88.70\ci{0.76}}} \\
GNRF         & 82.50\ci{0.76} & {\color{orange}{74.30\ci{0.48}}} & {\color{orange}{87.20\ci{0.38}}} & {\color{orange}{41.30\ci{0.85}}} & {\color{orange}{69.25\ci{1.14}}} & 50.31\ci{1.43} & {\color{purple}{91.20\ci{1.24}}} & {\color{purple}{90.90\ci{1.05}}} & 88.50\ci{1.33} \\
CurvDrop      & 82.90\ci{0.67} & 73.60\ci{0.57} & 85.40\ci{0.48} & 40.00\ci{0.95} & 67.66\ci{1.33} & 50.85\ci{1.23} & 89.60\ci{2.00} & {\color{orange}{90.20\ci{1.71}}} & 87.90\ci{1.80} \\
\midrule
\textbf{ARGNN} & \textbf{87.23\ci{0.80}} & \textbf{75.60\ci{1.20}} & \textbf{88.79\ci{0.24}} & \textbf{42.68\ci{0.88}} & \textbf{70.79\ci{1.21}} & \textbf{53.47\ci{1.38}} & \textbf{92.68\ci{1.51}} & \textbf{91.25\ci{0.31}} & \textbf{91.05\ci{2.22}} \\
\bottomrule
\end{tabular}
\caption{Node Classification Performance (Avg.\ Accuracy \%($\uparrow$) $\pm$ $95\%$ Confidence Interval) on Benchmark Datasets. The \textbf{bold}, {\color{purple}{purple}} and {\color{orange}{orange}} numbers denote the best, second best and third best performances, respectively.}
\label{tab:node_classification_accuracy}
\end{table*}

\begin{table*}[ht]
\centering
\setlength{\tabcolsep}{2.2pt}
\small
\begin{tabular}{l|ccc|cccccc}
\toprule
& \multicolumn{3}{c|}{\textbf{Homophilic}} & \multicolumn{6}{c}{\textbf{Heterophilic}}\\
\textbf{Method} & Cora & CiteSeer & PubMed & Actor & Chameleon & Squirrel & Texas & Cornell & Wisconsin \\
\midrule
GCN            & 87.60\ci{0.22} & 80.84\ci{0.84} & 91.21\ci{0.06} & 43.62\ci{0.76} & 84.02\ci{0.18} & 57.93\ci{0.26} & 90.01\ci{0.06} & 84.87\ci{0.95} & 76.66\ci{2.60} \\
GAT            & 88.35\ci{0.10} & 79.87\ci{0.68} & 90.62\ci{0.18} & 44.76\ci{0.18} & 85.50\ci{0.62} & 58.59\ci{0.01} & 90.87\ci{0.62} & 87.65\ci{0.01} & 77.77\ci{4.32} \\
GraphSAGE      & 84.44\ci{0.73} & 84.05\ci{0.51} & 89.45\ci{0.10} & 54.43\ci{0.01} & 83.99\ci{0.71} & 57.07\ci{0.93} & 92.49\ci{0.36} & 86.11\ci{0.19} & 90.13\ci{2.76} \\
\midrule
HGCN           & 89.25\ci{0.11} & 83.74\ci{0.31} & 91.04\ci{0.17} & 55.22\ci{0.23} & 84.71\ci{0.46} & 58.60\ci{0.28} & 94.00\ci{0.90} & 88.23\ci{0.92} & 92.26\ci{2.96} \\
HGAT           & 88.56\ci{0.01} & 84.09\ci{0.74} & 91.29\ci{0.15} & 54.49\ci{0.22} & 86.10\ci{0.47} & 59.26\ci{0.30} & 91.93\ci{0.88} & 88.48\ci{0.14} & 92.48\ci{2.80} \\
$\kappa$‑GCN   & 89.36\ci{1.10} & 84.38\ci{0.27} & 92.07\ci{0.42} & 52.81\ci{0.21} & 86.35\ci{0.39} & 60.36\ci{0.25} & 94.47\ci{0.50} & 93.06\ci{0.51} & 92.45\ci{3.04} \\
$\mathcal{Q}$‑GCN & 89.82\ci{0.30} & 86.93\ci{0.89} & 91.10\ci{0.10} & 50.68\ci{0.52} & 85.91\ci{0.81} & 61.18\ci{0.72} & 91.87\ci{0.06} & 90.54\ci{0.57} & 92.00\ci{3.28} \\
\midrule
H2GCN        & {\color{orange}{94.35\ci{0.72}}} & 85.28\ci{0.64} & 91.48\ci{0.40} & 56.36\ci{0.96} & 84.52\ci{1.76} & 62.56\ci{1.60} & 93.92\ci{4.80} & 90.47\ci{3.78} & 92.73\ci{2.33} \\
GPRGNN         & 89.74\ci{0.25} & 81.22\ci{0.30} & 91.47\ci{0.07} & {\color{orange}{59.34\ci{0.87}}} & 87.36\ci{0.34} & 63.41\ci{0.19} & {96.11\ci{0.07}} & 94.12\ci{0.56} & {\color{orange}{94.08\ci{0.80}}} \\
FAGCN          & 91.25\ci{0.40} & 86.00\ci{0.48} & 91.62\ci{0.32} & 56.71\ci{0.88} & {\color{orange}{90.06\ci{1.44}}} & {\color{orange}{67.27\ci{1.36}}} & 96.18\ci{2.88} & 93.28\ci{3.69} & 92.87\ci{2.88} \\
\midrule
CUSP         & {\color{purple}{94.73\ci{0.12}}} & {\color{purple}{89.19\ci{0.02}}} & {\color{purple}{95.39\ci{0.36}}} & 56.41\ci{0.09} & {\color{purple}{90.61\ci{0.49}}} & {\color{purple}{68.22\ci{0.20}}} & 95.83\ci{0.58} & 95.08\ci{0.07} & {\color{purple}{95.17\ci{0.64}}} \\
GNRF         & 94.05\ci{0.64} & {\color{orange}{88.03\ci{0.40}}} & {\color{orange}{94.20\ci{0.32}}} & 58.36\ci{0.72} & 89.30\ci{0.96} & 64.32\ci{1.20} & {\color{purple}{97.22\ci{1.04}}} & {\color{purple}{96.36\ci{0.88}}} & 94.31\ci{1.12} \\
CurvDrop       & 93.25\ci{0.56} & 87.02\ci{0.48} & 92.96\ci{0.40} & {\color{purple}58.80\ci{0.80}} & 88.64\ci{1.12} & 64.78\ci{1.04} & {\color{orange}96.12\ci{1.92}} & {\color{orange}96.02\ci{1.44}} & 93.20\ci{1.52} \\
\midrule
\textbf{ARGNN} & \textbf{97.59\ci{0.46}} & \textbf{92.54\ci{0.87}} & \textbf{96.19\ci{0.17}} & \textbf{57.68\ci{0.74}} & \textbf{91.12\ci{1.02}} & \textbf{70.11\ci{1.16}} & \textbf{{98.68\ci{1.27}}} & \textbf{96.30\ci{0.25}} & \textbf{96.84\ci{0.54}} \\
\bottomrule
\end{tabular}
\caption{Node Classification Performance (Avg.\ AUROC \%($\uparrow$) $\pm$ $95\%$ Confidence Interval) on Benchmark Datasets.  \textbf{Bold}, {\color{purple}{purple}} and {\color{orange}{orange}} numbers denote best, second and third best, respectively.}
\label{tab:node_classification_auroc}
\end{table*}

\begin{table*}[ht]
\centering
\setlength{\tabcolsep}{2.2pt}
\small
\begin{tabular}{l|ccc|cccccc}
\toprule
& \multicolumn{3}{c|}{\textbf{Homophilic}} & \multicolumn{6}{c}{\textbf{Heterophilic}}\\
\textbf{Method} & Cora & CiteSeer & PubMed & Actor & Chameleon & Squirrel & Texas & Cornell & Wisconsin \\
\midrule
GCN            & 80.61\ci{0.31} & 71.33\ci{1.16} & 88.08\ci{0.08} & 36.62\ci{1.06} & 73.34\ci{0.25} & 49.13\ci{0.36} & 80.11\ci{0.08} & 74.43\ci{1.31} & 64.85\ci{3.58} \\
GAT            & 81.45\ci{0.15} & 69.27\ci{0.98} & 87.40\ci{0.24} & 37.65\ci{0.26} & 75.18\ci{0.73} & 49.89\ci{0.01} & 81.17\ci{0.73} & 78.21\ci{0.02} & 61.70\ci{5.95} \\
GraphSAGE      & 77.48\ci{1.00} & 73.51\ci{0.70} & 86.33\ci{0.15} & 42.23\ci{0.01} & 72.48\ci{1.03} & 49.22\ci{1.28} & 82.91\ci{0.50} & 77.06\ci{0.26} & 86.38\ci{3.80} \\
\midrule
HGCN           & 82.05\ci{0.12} & 72.50\ci{0.43} & 88.20\ci{0.23} & 43.39\ci{0.32} & 74.80\ci{0.63} & 51.92\ci{0.38} & 92.29\ci{1.23} & 82.31\ci{1.27} & 90.15\ci{4.07} \\
HGAT           & 81.26\ci{0.01} & 73.43\ci{0.99} & 88.46\ci{0.21} & 42.62\ci{0.30} & 76.94\ci{0.65} & 53.02\ci{0.40} & 89.13\ci{1.15} & 82.61\ci{0.20} & 90.38\ci{3.85} \\
$\kappa$‑GCN   & 82.61\ci{1.21} & 73.80\ci{0.38} & 89.36\ci{0.57} & 40.57\ci{0.29} & 77.14\ci{0.54} & 55.35\ci{0.34} & 91.33\ci{0.69} & 93.15\ci{0.71} & 90.28\ci{4.49} \\
$\mathcal{Q}$‑GCN & 83.07\ci{0.42} & 76.27\ci{0.99} & 88.38\ci{0.14} & 38.94\ci{0.72} & 76.09\ci{1.11} & 56.57\ci{0.99} & 90.19\ci{0.08} & 90.29\ci{0.78} & 89.95\ci{4.51} \\
\midrule
H2GCN        & {\color{orange}{89.53\ci{0.99}}} & 76.31\ci{0.88} & 88.93\ci{0.55} & 44.10\ci{1.32} & 74.83\ci{2.42} & 57.55\ci{2.20} & 92.77\ci{6.60} & 90.42\ci{4.62} & 90.70\ci{3.31} \\
GPRGNN         & 83.44\ci{0.34} & 72.37\ci{0.42} & 88.57\ci{0.10} & {\color{orange}{46.02\ci{1.20}}} & 78.76\ci{0.47} & 58.26\ci{0.25} & {\color{orange}{96.72\ci{0.10}}} & 95.05\ci{0.77} & {\color{orange}{95.03\ci{1.10}}} \\
FAGCN          & 86.15\ci{0.55} & 76.89\ci{0.66} & 88.83\ci{0.44} & 44.30\ci{1.22} & {\color{orange}{82.05\ci{1.98}}} & {\color{orange}{61.91\ci{1.87}}} & 94.58\ci{3.52} & 93.36\ci{4.51} & 91.87\ci{3.96} \\
\midrule
CUSP        & {\color{purple}{90.13\ci{0.17}}} & {\color{purple}{79.01\ci{0.02}}} & {\color{purple}{92.99\ci{0.50}}} & {\color{purple}{50.21\ci{0.10}}} & {\color{purple}{83.03\ci{0.67}}} & {\color{purple}{62.18\ci{0.28}}} & 96.43\ci{0.79} & 95.58\ci{0.10} & {\color{purple}{95.60\ci{0.88}}} \\
GNRF         & 88.75\ci{0.88} & {\color{orange}{78.28\ci{0.55}}} & {\color{orange}{91.12\ci{0.44}}} & 48.10\ci{0.99} & 81.18\ci{1.32} & 59.50\ci{1.65} & {\color{purple}97.00\ci{1.43}} & {\color{orange}{96.25\ci{1.21}}} & 94.31\ci{1.54} \\
CurvDrop       & 87.15\ci{0.77} & 77.58\ci{0.66} & 90.12\ci{0.55} & 47.00\ci{1.10} & 80.03\ci{1.54} & 60.61\ci{1.43} & 95.92\ci{2.31} &  {\color{purple}96.72\ci{2.19}} & 93.45\ci{2.09} \\
\midrule
\textbf{ARGNN} & \textbf{92.59\ci{1.25}} & \textbf{79.65\ci{1.51}} & \textbf{93.48\ci{0.45}} & \textbf{51.18\ci{0.98}} & \textbf{83.31\ci{1.40}} & \textbf{63.16\ci{1.60}} & \textbf{99.70\ci{1.75}} & \textbf{97.15\ci{0.36}} & \textbf{95.71\ci{1.67}} \\
\bottomrule
\end{tabular}
\caption{Node Classification Performance (Avg.\ AUPRC \%($\uparrow$) $\pm$ $95\%$ Confidence Interval) on Benchmark Datasets.  \textbf{Bold}, {\color{purple}{purple}} and {\color{orange}{orange}} numbers denote best, second and third best, respectively.}
\label{tab:node_classification_auprc}
\end{table*}

\subsection{Link Prediction}
This section gives the details of full metrics with $95\%$ Confidience Interval~(CI) on Edge Existence Prediction experiments. Table~\ref{tab:link_prediction} shows the link prediction main results using AUROC scores. \modelname demonstrates best performance across diverse graph types. See AUPRC and Accuarcy results in Table~\ref{tab:link_prediction_auprc} and~\ref{tab:link_prediction_acc}. 

\begin{table*}[ht]
\centering
\setlength{\tabcolsep}{2.2pt}
\small
\begin{tabular}{l|ccc|cccccc}
\toprule
& \multicolumn{3}{c|}{\textbf{Homophilic}} & \multicolumn{6}{c}{\textbf{Heterophilic}}\\
\textbf{Method} & Cora & CiteSeer & PubMed & Actor & Chameleon & Squirrel & Texas & Cornell & Wisconsin \\
\midrule
GCN           & 82.15\ci{0.80} & 79.84\ci{0.92} & 83.42\ci{0.85} & 70.78\ci{0.95} & 81.83\ci{0.74} & 84.61\ci{0.68} & 64.70\ci{1.15} & 65.90\ci{1.05} & 74.20\ci{0.96} \\
GAT           & 83.42\ci{0.78} & 80.92\ci{0.88} & 84.51\ci{0.83} & 72.34\ci{0.94} & 83.75\ci{0.72} & 85.28\ci{0.70} & 65.98\ci{1.08} & 66.90\ci{1.04} & 74.55\ci{0.93} \\
GraphSAGE     & 84.10\ci{0.75} & 82.05\ci{0.80} & 85.24\ci{0.86} & 73.15\ci{0.93} & 84.10\ci{0.73} & 85.90\ci{0.66} & 66.30\ci{1.06} & 66.90\ci{1.02} & 73.62\ci{0.95} \\
\midrule
HGCN          & 86.48\ci{0.70} & 84.92\ci{0.78} & 86.98\ci{0.72} & 73.82\ci{0.91} & 85.35\ci{0.70} & 86.25\ci{0.64} & 65.82\ci{1.12} & 67.12\ci{1.07} & 74.82\ci{0.92}  \\
HGAT          & 85.70\ci{0.68} & 83.85\ci{0.77} & 86.32\ci{0.75} & 72.88\ci{0.90} & 84.50\ci{0.71} & 85.55\ci{0.63} & 66.25\ci{1.13} & 66.78\ci{1.05} & 74.60\ci{0.94}  \\
$\kappa$-GCN  & 87.15\ci{0.64} & 85.52\ci{0.69} & 87.48\ci{0.70} & 73.94\ci{0.89} & 85.90\ci{0.68} & 86.52\ci{0.62} & 66.92\ci{1.05} & 67.35\ci{1.02} & 75.22\ci{0.93} \\
$\mathcal{Q}$-GCN & 87.25\ci{0.65} & 85.25\ci{0.65} & 87.38\ci{0.69} & 73.98\ci{0.91} & 85.60\ci{0.69} & 86.28\ci{0.62} & 66.48\ci{1.06} & 66.82\ci{1.02} & 75.12\ci{0.94} \\
\midrule
H2GCN         & {\color{orange}{88.50\ci{0.63}}} & {\color{orange}{87.40\ci{0.64}}} & 86.90\ci{0.65} & 72.80\ci{0.98} & {\color{purple}{87.35\ci{0.66}}} & {\color{purple}{87.00\ci{0.60}}} & 64.80\ci{1.04} & 66.20\ci{1.00} & {\color{purple}{75.00\ci{0.91}}}  \\
GPRGNN        & 86.88\ci{0.65} & 86.50\ci{0.66} & 86.70\ci{0.65} & 72.95\ci{0.92} & 86.70\ci{0.68} & 86.40\ci{0.63} & 66.90\ci{1.04} & 67.20\ci{1.01} & 74.80\ci{0.90}  \\
FAGCN         & 88.00\ci{0.64} & 87.10\ci{0.65} & 87.90\ci{0.64} & 73.50\ci{0.91} & 87.10\ci{0.67} & 86.80\ci{0.59} & 65.80\ci{1.10} & 67.50\ci{1.02} & 74.90\ci{0.92} \\
\midrule
CUSP          & {\color{purple}{89.85\ci{0.60}}} & {\color{purple}{88.50\ci{0.62}}} & {\color{purple}{87.90\ci{0.58}}} & {\color{purple}{74.20\ci{0.74}}} & {\color{orange}{87.20\ci{0.66}}} & {\color{orange}{86.60\ci{0.61}}} & {\color{purple}{67.50\ci{0.95}}} & {\color{purple}{67.80\ci{0.92}}} & {\color{orange}{74.50\ci{0.85}}} \\
GNRF          & 87.70\ci{0.65} & 86.90\ci{0.64} & {\color{orange}{87.10\ci{0.60}}} & {\color{orange}{73.50\ci{0.75}}} & 86.90\ci{0.67} & 86.10\ci{0.62} & {\color{orange}{66.70\ci{0.98}}} & {\color{orange}{66.90\ci{0.95}}} & 73.30\ci{0.88} \\
CurvDrop      & 87.10\ci{0.66} & 85.60\ci{0.67} & 87.00\ci{0.60} & 72.60\ci{0.78} & 86.80\ci{0.66} & 86.30\ci{0.63} & 65.40\ci{0.99} & 66.80\ci{0.97} & 74.10\ci{0.89} \\
\midrule
\textbf{\modelname} & \textbf{91.03\ci{0.72}} & \textbf{90.13\ci{0.82}} & \textbf{88.62\ci{0.55}} & \textbf{76.40\ci{0.70}} & \textbf{91.60\ci{0.65}} & \textbf{88.10\ci{0.60}} & \textbf{69.30\ci{0.53}} & \textbf{69.25\ci{0.90}} & \textbf{77.48\ci{3.06}}\\
\bottomrule
\end{tabular}
\caption{Link Prediction Performance (Avg. AUROC \% $\uparrow$ $\pm$ $95\%$ CI) on Benchmark Datasets. \textbf{Bold}, {\color{purple}{purple}} and {\color{orange}{orange}} numbers denote best, second and third best, respectively.}
\label{tab:link_prediction}
\end{table*}

\begin{table*}[ht]
\centering
\setlength{\tabcolsep}{2.2pt}
\small
\begin{tabular}{l|ccc|cccccc}
\toprule
& \multicolumn{3}{c|}{\textbf{Homophilic}} & \multicolumn{6}{c}{\textbf{Heterophilic}}\\
\textbf{Method} & Cora & CiteSeer & PubMed & Actor & Chameleon & Squirrel & Texas & Cornell & Wisconsin \\
\midrule
GCN           & 81.61\ci{0.56} & 80.38\ci{0.63} & 84.20\ci{0.79} & 73.53\ci{0.77} & 81.11\ci{0.70} & 83.96\ci{0.65} & 64.26\ci{0.82} & 64.98\ci{0.74} & 73.55\ci{1.04} \\
GAT           & 83.01\ci{0.55} & 81.90\ci{0.79} & 84.97\ci{0.60} & 75.20\ci{0.74} & 83.02\ci{0.69} & 84.66\ci{0.67} & 65.37\ci{0.86} & 65.71\ci{0.79} & 74.04\ci{0.99} \\
GraphSAGE     & 83.70\ci{0.77} & 83.09\ci{0.75} & 86.01\ci{0.57} & 75.88\ci{0.78} & 83.86\ci{0.71} & 85.25\ci{0.60} & 65.92\ci{0.88} & 66.31\ci{0.70} & 73.82\ci{1.17} \\
\midrule
HGCN          & 85.92\ci{0.76} & 85.41\ci{0.81} & 87.30\ci{0.68} & 76.77\ci{0.80} & 84.91\ci{0.66} & 85.76\ci{0.64} & 65.54\ci{0.90} & 66.73\ci{0.82} & 74.87\ci{1.18} \\
HGAT          & 85.34\ci{0.66} & 84.73\ci{0.69} & 87.28\ci{0.71} & 75.26\ci{0.76} & 84.11\ci{0.63} & 85.07\ci{0.61} & 65.93\ci{0.93} & 66.54\ci{0.86} & 74.53\ci{0.82} \\
$\kappa$-GCN  & 86.11\ci{0.77} & 86.54\ci{0.70} & 87.71\ci{0.68} & {\color{orange}{76.89\ci{0.82}}} & 85.49\ci{0.74} & 86.02\ci{0.59} & {\color{orange}{67.91\ci{0.74}}} & 68.41\ci{0.77} & {\color{purple}{75.98\ci{0.97}}} \\
$\mathcal{Q}$-GCN & 86.51\ci{0.88} & 86.34\ci{0.64} & 87.99\ci{0.73} & 76.87\ci{0.79} & 85.25\ci{0.71} & 85.77\ci{0.60} & 66.48\ci{0.75} & 67.21\ci{0.74} & 75.80\ci{0.93} \\
\midrule
H2GCN         & 87.37\ci{0.88} & {\color{orange}{88.52\ci{0.84}}} & 87.34\ci{0.60} & 75.21\ci{0.82} & {\color{purple}{86.29\ci{0.67}}} & {\color{purple}{86.28\ci{0.64}}} & 65.81\ci{0.59} & 67.38\ci{0.74} & 75.87\ci{1.15} \\
GPRGNN        & 86.12\ci{0.62} & 87.31\ci{0.81} & 87.05\ci{0.66} & 75.86\ci{0.80} & 85.89\ci{0.64} & 85.62\ci{0.67} & 66.12\ci{0.77} & 67.66\ci{0.76} & 75.53\ci{1.05} \\
FAGCN         & 87.06\ci{0.87} & 88.20\ci{0.57} & {\color{orange}{88.29\ci{0.59}}} & 76.12\ci{0.75} & {\color{orange}{86.78\ci{0.71}}} & {\color{orange}{86.59\ci{0.60}}} & 66.81\ci{0.71} & {\color{orange}{68.44\ci{0.70}}} & {\color{orange}{75.89\ci{0.99}}} \\
\midrule
CUSP          & {\color{purple}{89.34\ci{0.79}}} & {\color{purple}{89.72\ci{0.65}}} & {\color{purple}{88.37\ci{0.58}}} & {\color{purple}{77.07\ci{0.73}}} & 86.23\ci{0.66} & 86.38\ci{0.61} & {\color{purple}{68.21\ci{0.85}}} & {\color{purple}{68.98\ci{0.85}}} & 75.43\ci{0.87} \\
GNRF          & {\color{orange}{87.44\ci{0.84}}} & 88.09\ci{0.87} & 88.01\ci{0.60} & 76.41\ci{0.78} & 86.26\ci{0.64} & 85.96\ci{0.61} & 67.86\ci{0.82} & 68.36\ci{0.70} & 74.25\ci{1.20} \\
CurvDrop      & 86.45\ci{0.80} & 86.07\ci{0.65} & 88.02\ci{0.75} & 75.42\ci{0.79} & 86.03\ci{0.68} & 86.05\ci{0.63} & 66.50\ci{0.80} & 68.06\ci{0.62} & 75.27\ci{1.07} \\
\midrule
\textbf{\modelname} & \textbf{90.48\ci{0.80}} & \textbf{91.12\ci{0.61}} & \textbf{89.42\ci{0.81}} & \textbf{79.23\ci{0.59}} & \textbf{91.05\ci{0.61}} & \textbf{87.92\ci{0.74}} & \textbf{70.34\ci{0.59}} & \textbf{70.64\ci{0.63}} & \textbf{78.72\ci{3.04}}\\
\bottomrule
\end{tabular}
\caption{Link Prediction Performance (Avg. AUPRC \% $\uparrow$ $\pm$ $95\%$ CI) on Benchmark Datasets. \textbf{Bold}, {\color{purple}{purple}} and {\color{orange}{orange}} numbers denote best, second and third best, respectively.}
\label{tab:link_prediction_auprc}
\end{table*}

\begin{table*}[ht]
\centering
\setlength{\tabcolsep}{2.2pt}
\small
\begin{tabular}{l|ccc|cccccc}
\toprule
& \multicolumn{3}{c|}{\textbf{Homophilic}} & \multicolumn{6}{c}{\textbf{Heterophilic}}\\
\textbf{Method} & Cora & CiteSeer & PubMed & Actor & Chameleon & Squirrel & Texas & Cornell & Wisconsin \\
\midrule
GCN           & 84.77\ci{0.50} & 83.23\ci{0.42} & 86.17\ci{0.62} & 75.87\ci{0.73} & 84.17\ci{0.71} & 88.14\ci{0.69} & 68.10\ci{0.78} & 68.42\ci{0.72} & 77.63\ci{2.15} \\
GAT           & 85.65\ci{0.71} & 83.98\ci{0.63} & 86.61\ci{0.58} & 77.04\ci{0.70} & 85.01\ci{0.66} & 88.71\ci{0.71} & 68.56\ci{0.76} & 68.95\ci{0.69} & 78.11\ci{2.02} \\
GraphSAGE     & 86.18\ci{0.52} & 85.32\ci{0.71} & 87.64\ci{0.72} & 77.81\ci{0.74} & 85.32\ci{0.69} & 89.27\ci{0.68} & 68.84\ci{0.79} & 69.13\ci{0.76} & 77.84\ci{2.10} \\
\midrule
HGCN          & 89.35\ci{0.51} & 87.58\ci{0.79} & 90.31\ci{0.70} & 79.02\ci{0.81} & 86.38\ci{0.64} & 90.58\ci{0.70} & 70.42\ci{0.82} & 71.63\ci{0.80} & 78.12\ci{2.60} \\
HGAT          & 88.50\ci{0.73} & 87.01\ci{0.54} & 89.76\ci{0.73} & 77.51\ci{0.73} & 85.58\ci{0.60} & 89.88\ci{0.68} & 70.07\ci{0.83} & 70.92\ci{0.77} & 77.70\ci{1.72} \\
$\kappa$-GCN  & 90.37\ci{0.42} & 88.22\ci{0.89} & 90.61\ci{0.71} & {\color{orange}{80.01\ci{0.70}}} & 86.94\ci{0.61} & 90.98\ci{0.74} & {\color{orange}{72.27\ci{0.69}}} & 72.48\ci{0.76} & {\color{orange}{78.14\ci{2.10}}} \\
$\mathcal{Q}$-GCN & 90.03\ci{0.47} & 87.75\ci{0.74} & {\color{orange}{90.65\ci{0.63}}} & 79.12\ci{0.72} & 86.71\ci{0.63} & 90.72\ci{0.76} & 71.53\ci{0.51} & 71.75\ci{0.80} & 77.63\ci{2.20} \\
\midrule
H2GCN         & {\color{orange}{91.08\ci{0.68}}} & {\color{orange}{90.78\ci{0.61}}} & 90.57\ci{0.67} & 79.47\ci{0.81} & {\color{purple}{90.68\ci{0.58}}} & {\color{purple}{91.08\ci{0.70}}} & 70.43\ci{0.67} & 70.83\ci{0.56} & 77.81\ci{2.16} \\
GPRGNN        & 90.28\ci{0.81} & 89.80\ci{0.85} & 90.06\ci{0.61} & 78.35\ci{0.79} & 89.65\ci{0.65} & 90.24\ci{0.72} & 71.36\ci{0.70} & 72.48\ci{0.63} & {\color{purple}{78.18\ci{2.76}}} \\
FAGCN         & 90.91\ci{0.54} & 90.22\ci{0.79} & 90.55\ci{0.64} & 78.93\ci{0.75} & {\color{orange}{90.56\ci{0.62}}} & 90.96\ci{0.66} & 71.85\ci{0.58} & {\color{orange}{72.88\ci{0.57}}} & 77.42\ci{1.62} \\
\midrule
CUSP          & {\color{purple}{93.09\ci{0.48}}} & {\color{purple}{91.83\ci{0.53}}} & {\color{purple}{91.33\ci{0.46}}} & {\color{purple}{79.50\ci{0.70}}} & 90.56\ci{0.65} & {\color{orange}91.04\ci{0.68}} & {\color{purple}{72.27\ci{0.43}}} & {\color{purple}{73.09\ci{0.43}}} & 77.11\ci{2.08} \\
GNRF          & 91.07\ci{0.78} & 90.26\ci{0.87} & 89.82\ci{0.41} & 78.82\ci{0.73} & 90.24\ci{0.58} & 90.60\ci{0.73} & 71.83\ci{0.64} & 72.00\ci{0.87} & 75.82\ci{2.84} \\
CurvDrop      & 89.79\ci{0.82} & 88.43\ci{0.67} & 90.36\ci{0.77} & 78.14\ci{0.74} & 89.94\ci{0.61} & 90.13\ci{0.70} & 70.30\ci{0.54} & 72.28\ci{0.64} & 77.14\ci{2.50} \\
\midrule
\textbf{\modelname} & \textbf{94.47\ci{0.80}} & \textbf{92.75\ci{0.61}} & \textbf{92.09\ci{0.85}} & \textbf{81.08\ci{0.70}} & \textbf{95.06\ci{0.58}} & \textbf{91.87\ci{0.68}} & \textbf{73.91\ci{0.61}} & \textbf{74.18\ci{0.70}} & \textbf{80.71\ci{1.63}}\\
\bottomrule
\end{tabular}
\caption{Link Prediction Performance (Avg. Accuracy\% $\uparrow$ $\pm$ $95\%$ CI) on Benchmark Datasets. \textbf{Bold}, {\color{purple}{purple}} and {\color{orange}{orange}} numbers denote best, second and third best, respectively.}
\label{tab:link_prediction_acc}
\end{table*}

\subsection{Abulation Studies}

\subsubsection{Sensitivity of Hyperparameters}

We provide comprehensive ablation analyses to understand the contribution of each component in \modelname and validate our theoretical framework.

\paragraph{Theoretical Hyperparameter Predictions}

Based on our refined theory with scaling constants $c_1 = (1 - \mathcal{H}) + 0.1$ and $c_2 = 0.1(1 + \mathcal{H})$, we compute optimal hyperparameters for each dataset. Table~\ref {tab:theory_alpha_beta} shows the Theorem~\ref{thm:convergence} recommended values $\alpha,\beta$ for different num of layers $L$ and the hidden dimension $d$ configurations of \modelname.

\begin{table}[h]
\centering
\small
\resizebox{0.85\textwidth}{!}{
\begin{tabular}{l|c|c|c|c|c|c}
\toprule
\multirow{2}{*}{\textbf{Variant}} & \multicolumn{2}{c|}{$d=64$} & \multicolumn{2}{c|}{$d=128$} & \multicolumn{2}{c}{$d=256$} \\
\cmidrule(r){2-3} \cmidrule(r){4-5} \cmidrule(r){6-7}
 & $L=2$ & $L=3$ & $L=2$ & $L=3$ & $L=2$ & $L=3$ \\
\midrule
Cora      & \{0.00576, 0.00112\} & \{0.00384, 0.00112\} & \{0.01151, 0.00158\} & \{0.00768, 0.00158\} & \{0.02303, 0.00224\} & \{0.01535, 0.00224\} \\
CiteSeer  & \{0.00633, 0.00120\} & \{0.00422, 0.00120\} & \{0.01266, 0.00169\} & \{0.00844, 0.00169\} & \{0.02532, 0.00239\} & \{0.01689, 0.00239\} \\
PubMed    & \{0.00025, 0.00037\} & \{0.00017, 0.00037\} & \{0.00051, 0.00053\} & \{0.00034, 0.00053\} & \{0.00102, 0.00075\} & \{0.00068, 0.00075\} \\
Actor     & \{0.00209, 0.00143\} & \{0.00139, 0.00143\} & \{0.00419, 0.00202\} & \{0.00279, 0.00202\} & \{0.00839, 0.00286\} & \{0.00559, 0.00286\} \\
Chameleon & \{0.00224, 0.00173\} & \{0.00149, 0.00173\} & \{0.00448, 0.00244\} & \{0.00298, 0.00244\} & \{0.00895, 0.00346\} & \{0.00597, 0.00346\} \\
Squirrel  & \{0.00032, 0.00174\} & \{0.00021, 0.00174\} & \{0.00063, 0.00245\} & \{0.00042, 0.00245\} & \{0.00126, 0.00348\} & \{0.00083, 0.00348\} \\
Texas     & \{0.01767, 0.00104\} & \{0.01178, 0.00104\} & \{0.03535, 0.00147\} & \{0.02357, 0.00147\} & \{0.07070, 0.00208\} & \{0.04714, 0.00208\} \\
Cornell   & \{0.01778, 0.00117\} & \{0.01185, 0.00117\} & \{0.03557, 0.00166\} & \{0.02370, 0.00166\} & \{0.07115, 0.00234\} & \{0.04740, 0.00234\} \\
Wisconsin & \{0.01007, 0.00107\} & \{0.00671, 0.00107\} & \{0.02014, 0.00152\} & \{0.01342, 0.00152\} & \{0.04027, 0.00215\} & \{0.02683, 0.00215\} \\
\bottomrule
\end{tabular}}
\caption{Recommended $\{\alpha^*, \beta^*\}$ from Theorem~\ref{thm:convergence} for \modelname}
\label{tab:theory_alpha_beta}
\end{table}

Table~\ref{tab:complete_ablation_extended} presents detailed ablation results across all datasets with 95\% CI from 10 independent runs. Including the regularization parameters, fixed geometry choice of $\mathbf{G}_i$, and our theory-guided hyperparameter validation.

\begin{table}[h]
\centering
\small
\setlength{\tabcolsep}{1.5pt}
\begin{tabular}{l|ccccccccc}
\toprule
\textbf{Variant} & Cora & CiteSeer & PubMed & Actor & Chameleon & Squirrel & Texas & Cornell & Wisconsin \\
\hline
\multicolumn{10}{c}{\textit{Regularization Ablations}} \\
\hline
w/o Ricci & 85.2\ci{0.92} & 73.1\ci{1.38} & 86.9\ci{0.28} & 40.9\ci{1.03} & 68.5\ci{1.40} & 51.2\ci{1.59} & 90.1\ci{1.75} & 88.7\ci{0.36} & 88.9\ci{2.57} \\
w/o Smooth & 85.0\ci{0.88} & 72.8\ci{1.32} & 86.5\ci{0.27} & 39.9\ci{1.15} & 67.8\ci{1.52} & 50.5\ci{1.73} & 89.5\ci{1.91} & 88.1\ci{0.40} & 87.2\ci{2.81} \\
w/o Both & 83.8\ci{0.97} & 71.5\ci{1.45} & 85.2\ci{0.30} & 38.1\ci{1.27} & 65.9\ci{1.68} & 48.8\ci{1.90} & 87.3\ci{2.10} & 86.2\ci{0.44} & 85.1\ci{3.08} \\
\hline
\multicolumn{10}{c}{\textit{Fixed Geometry Baselines}} \\
\hline
Euclidean & 82.2\ci{0.95} & 70.1\ci{1.21} & 83.8\ci{0.28} & 37.2\ci{1.21} & 64.2\ci{1.45} & 47.1\ci{1.61} & 85.8\ci{1.82} & 84.9\ci{0.38} & 85.4\ci{2.89} \\
Hyperbolic & 83.4\ci{0.91} & 71.3\ci{1.17} & 84.9\ci{0.27} & 38.7\ci{1.15} & 65.8\ci{1.39} & 48.9\ci{1.55} & 87.1\ci{1.75} & 86.3\ci{0.37} & 86.9\ci{2.72} \\
Spherical & 82.8\ci{0.93} & 70.8\ci{1.19} & 84.3\ci{0.28} & 38.1\ci{1.18} & 65.1\ci{1.42} & 48.2\ci{1.58} & 86.5\ci{1.78} & 85.7\ci{0.38} & 86.2\ci{2.78} \\
\hline
\multicolumn{10}{c}{\textit{Theory-Guided Hyperparameters}} \\
\hline
Theory~$\alpha, \beta$ & 86.5\ci{0.86} & 74.6\ci{1.28} & 88.3\ci{0.26} & 42.0\ci{0.95} & 70.1\ci{1.29} & 52.8\ci{1.48} & 92.0\ci{1.62} & 90.5\ci{0.34} & 90.3\ci{2.38} \\
\textbf{\modelname} with $\alpha^*, \beta^*$ & \textbf{86.8\ci{0.84}} & \textbf{74.8\ci{1.26}} & \textbf{88.6\ci{0.25}} & \textbf{42.2\ci{0.93}} & \textbf{70.4\ci{1.27}} & \textbf{53.1\ci{1.45}} & \textbf{92.3\ci{1.59}} & \textbf{90.9\ci{0.33}} & \textbf{90.7\ci{2.34}} \\
\bottomrule
\end{tabular}
\caption{Complete ablation study. Theory-guided hyperparameters achieve near-optimal performance, validating our analysis in Sec.~\ref{sec:theory}.}
\label{tab:complete_ablation_extended}
\end{table}

\paragraph{Regularization Impact Analysis}
{\leavevmode\newline
\noindent\textbf{Ricci Regularization Effect.} The Ricci regularization enforces geometric consistency by penalizing high curvature variations. Its impact varies with graph structure:}

\begin{itemize}
\item \textbf{Heterophilic graphs} benefit most from Ricci regularization. Without it, the model learns overly fragmented geometries that hinder message passing across class boundaries.
\item \textbf{Dense graphs} (Squirrel: 198K edges) show larger performance drops (-1.9\%) as the regularization prevents metric explosion in high-degree nodes.
\item \textbf{Small graphs} (Texas, Cornell) are less sensitive, as the limited number of nodes naturally constrains geometric complexity.
\end{itemize}

\textbf{Smoothness Regularization Effect}

The smoothness term ensures neighboring nodes have similar metrics:

\begin{itemize}
\item Critical for \textbf{homophilic graphs} where similar nodes should share geometric properties
\item Prevents overfitting on \textbf{sparse graphs} by propagating metric information
\item The Wisconsin results (drop of 3.5\% without smoothness) highlight its importance for graphs with mixed homophily patterns
\end{itemize}

\paragraph{Network Depth Analysis}
Table~\ref{tab:layer_depth_ablation} reveals that \modelname achieves optimal performance with $L=3$ layers across all graph types. Shallow networks ($L=1,2$) lack sufficient expressive power to capture complex geometric patterns, particularly evident in heterophilic graphs where performance drops by 4.6\% on Actor. Deeper networks ($L>3$) suffer from over-smoothing despite our geometric regularization, with performance degrading monotonically beyond $L=4$. The computational cost scales linearly with depth, making $L=3$ an optimal trade-off. This aligns with our theoretical analysis in Theorem~\ref{thm:convergence}, where the regularization strength $\alpha \propto 1/L$ becomes insufficient for very deep networks.
\begin{table}[h]
\centering
\small
\setlength{\tabcolsep}{3pt}
\begin{tabular}{l|ccc|ccc|ccc}
\toprule
& \multicolumn{3}{c|}{\textbf{Cora}} & \multicolumn{3}{c|}{\textbf{Actor}} & \multicolumn{3}{c}{\textbf{Wisconsin}} \\
\textbf{Layers} & F1 Score & Accuracy & Time(s) & F1 Score & Accuracy & Time(s) & F1 Score & Accuracy & Time(s) \\
\midrule
$L=1$ & 82.4\ci{0.92} & 83.1\ci{0.88} & 0.82 & 37.6\ci{1.12} & 39.2\ci{1.05} & 1.24 & 84.2\ci{2.68} & 85.3\ci{2.51} & 0.45 \\
$L=2$ & 86.1\ci{0.87} & 86.7\ci{0.82} & 1.43 & 41.5\ci{0.98} & 42.8\ci{0.91} & 2.18 & 89.8\ci{2.42} & 90.1\ci{2.28} & 0.78 \\
$L=3$ (default) & \textbf{86.8\ci{0.84}} & \textbf{87.3\ci{0.79}} & 2.05 & \textbf{42.2\ci{0.93}} & \textbf{43.4\ci{0.87}} & 3.12 & \textbf{90.7\ci{2.34}} & \textbf{91.2\ci{2.20}} & 1.11 \\
$L=4$ & 86.5\ci{0.89} & 87.0\ci{0.84} & 2.67 & 41.8\ci{1.01} & 43.0\ci{0.94} & 4.06 & 90.2\ci{2.45} & 90.8\ci{2.31} & 1.44 \\
$L=5$ & 85.9\ci{0.94} & 86.5\ci{0.89} & 3.29 & 40.9\ci{1.08} & 42.2\ci{1.02} & 5.00 & 89.1\ci{2.58} & 89.7\ci{2.43} & 1.77 \\
$L=6$ & 85.2\ci{0.98} & 85.8\ci{0.93} & 3.91 & 39.8\ci{1.15} & 41.1\ci{1.09} & 5.94 & 87.9\ci{2.71} & 88.5\ci{2.56} & 2.10 \\
\bottomrule
\end{tabular}
\caption{Impact of network depth on performance and computational efficiency. Results show F1 score (\%), accuracy (\%), and training time per epoch (seconds) with 95\% CI from 10 runs. Optimal performance is achieved with $L=3$ layers across all datasets.}
\label{tab:layer_depth_ablation}
\end{table}

\paragraph{Embedding Dimension Impact}

The embedding dimension analysis (Table~\ref{tab:embedding_dim_ablation}) demonstrates diminishing returns beyond $d=128$. While $d=256$ achieves marginally better performance (+0.1\% on Cora), it doubles memory consumption without meaningful gains. Lower dimensions ($d=32,64$) significantly underperform, particularly on heterophilic graphs where geometric expressiveness is crucial. The sweet spot at $d=128$ balances expressiveness with computational efficiency, supporting our complexity analysis in Theorem~\ref{thm:complexity}.

\begin{table}[h]
\centering
\small
\setlength{\tabcolsep}{3.5pt}
\begin{tabular}{l|cc|cc|cc}
\toprule
& \multicolumn{2}{c|}{\textbf{Cora}} & \multicolumn{2}{c|}{\textbf{Actor}} & \multicolumn{2}{c}{\textbf{Wisconsin}} \\
\textbf{Hidden Dim} & F1 Score & Memory (MB) & F1 Score & Memory (MB) & F1 Score & Memory (MB) \\
\midrule
$d=32$ & 84.7\ci{0.91} & 142 & 40.1\ci{1.05} & 168 & 88.3\ci{2.52} & 98 \\
$d=64$ & 85.9\ci{0.88} & 284 & 41.4\ci{0.99} & 336 & 89.7\ci{2.41} & 196 \\
$d=128$ (default) & \textbf{86.8\ci{0.84}} & 568 & \textbf{42.2\ci{0.93}} & 672 & \textbf{90.7\ci{2.34}} & 392 \\
$d=256$ & 86.9\ci{0.85} & 1136 & 42.3\ci{0.94} & 1344 & 90.8\ci{2.35} & 784 \\
$d=512$ & 86.7\ci{0.87} & 2272 & 42.0\ci{0.97} & 2688 & 90.5\ci{2.38} & 1568 \\
\bottomrule
\end{tabular}
\caption{Effect of embedding dimension on performance and memory usage. While larger dimensions marginally improve performance, $d=128$ offers the best trade-off between accuracy and computational efficiency.}
\label{tab:embedding_dim_ablation}
\end{table}

\paragraph{Learning Rate Robustness}
Table~\ref{tab:learning_rate_ablation} highlights \modelname's robustness across a wide range of learning rates when using default Adam optimizer parameters. The optimal learning rate $\eta=5e-3$ achieves fastest convergence (58-62 epochs) while maintaining training stability. All configurations with $\eta \in [1e-4, 1e-2]$ converge successfully within 100 epochs, demonstrating the effectiveness of adaptive optimization for metric learning. Higher learning rates ($\eta \geq 5e-2$) cause occasional instabilities due to the non-convex nature of metric learning on manifolds. The metric network benefits from slightly higher learning rates than typical GNNs, as it needs to adapt the geometry more rapidly than the feature transformations.

\begin{table}[h]
\centering
\small
\setlength{\tabcolsep}{2.5pt}
\begin{tabular}{l|ccc|ccc|ccc}
\toprule
& \multicolumn{3}{c|}{\textbf{Cora}} & \multicolumn{3}{c|}{\textbf{Actor}} & \multicolumn{3}{c}{\textbf{Wisconsin}} \\
\textbf{Learning Rate} & F1 Score & Conv. Epoch & Stability & F1 Score & Conv. Epoch & Stability & F1 Score & Conv. Epoch & Stability \\
\midrule
$\eta=1e-4$ & 83.2\ci{0.95} & 93 & \checkmark & 38.4\ci{1.09} & 107 & \checkmark & 86.9\ci{2.58} & 97 & \checkmark \\
$\eta=5e-4$ & 85.4\ci{0.89} & 86 & \checkmark & 40.8\ci{1.01} & 88 & \checkmark & 89.1\ci{2.45} & 87 & \checkmark \\
$\eta=1e-3$ & 86.3\ci{0.86} & 72 & \checkmark & 41.7\ci{0.95} & 76 & \checkmark & 90.2\ci{2.36} & 74 & \checkmark \\
$\eta=5e-3$ (default) & \textbf{86.8\ci{0.84}} & 58 & \checkmark & \textbf{42.2\ci{0.93}} & 62 & \checkmark & \textbf{90.7\ci{2.34}} & 60 & \checkmark \\
$\eta=1e-2$ & 86.5\ci{0.87} & 45 & \checkmark & 41.9\ci{0.96} & 48 & \checkmark & 90.4\ci{2.37} & 46 & \checkmark \\
$\eta=5e-2$ & 84.1\ci{1.15} & 28 & $\sim$ & 39.2\ci{1.28} & 31 & $\sim$ & 87.4\ci{2.85} & 29 & $\sim$ \\
$\eta=1e-1$ & 79.8\ci{1.68} & 18 & $\times$ & 35.6\ci{1.72} & 20 & $\times$ & 82.3\ci{3.45} & 19 & $\times$ \\
\bottomrule
\end{tabular}
\caption{Learning rate sensitivity analysis with default Adam parameters ($\beta_1=0.9, \beta_2=0.999, \epsilon=1e-8$). Conv. Epoch denotes convergence epoch. Stability: \checkmark=stable, $\sim$=occasional instability, $\times$=frequent divergence. The default $\eta=5e-3$ provides optimal balance between convergence speed and stability.}
\label{tab:learning_rate_ablation}
\end{table}

\paragraph{Theory-Guided vs. Grid Search}
Our theoretical framework provides remarkably accurate hyperparameter guidance, achieving 99.5\% of grid search performance while requiring 100× fewer experiments. The theory-guided settings—derived from our convergence analysis (Theorem~\ref{thm:convergence}) and homophily-aware constants (Proposition~\ref{prop:hyperparams_select})—consistently perform within 0.3-0.5\% of exhaustive search optima. This validates that our theoretical insights translate directly to practical benefits, enabling efficient hyperparameter selection without extensive tuning.

\paragraph{Dropout and Regularization}
The comprehensive comparison (Table~\ref{tab:comprehensive_hyperparameter_ablation}) reveals interesting regularization dynamics. The optimal dropout rate of 0.3 is lower than typical GNN values (0.5-0.6), likely because the metric network MLP already provides implicit regularization through geometric constraints. The marginal difference between dropout rates 0.3 and 0.5 (only 0.2-0.3\% performance gap) suggests that \modelname is robust to regularization choices. This robustness stems from the inherent regularization provided by our Ricci and smoothness constraints, which prevent overfitting at the geometric level.

\begin{table}[h]
\centering
\small
\setlength{\tabcolsep}{2pt}
\begin{tabular}{l|ccc|ccc}
\toprule
& \multicolumn{3}{c|}{\textbf{Theory-Guided}} & \multicolumn{3}{c}{\textbf{Grid Search Optimal}} \\
\textbf{Configuration} & Cora & Actor & Wisconsin & Cora & Actor & Wisconsin \\
\midrule
\multicolumn{7}{c}{\textit{Metric Network Architecture}} \\
\hline
MLP depth=1 & 85.8\ci{0.89} & 41.2\ci{0.98} & 89.6\ci{2.41} & 86.0\ci{0.87} & 41.4\ci{0.96} & 89.8\ci{2.39} \\
MLP depth=2 (default) & \textbf{86.5\ci{0.86}} & \textbf{42.0\ci{0.95}} & \textbf{90.3\ci{2.38}} & \textbf{86.8\ci{0.84}} & \textbf{42.2\ci{0.93}} & \textbf{90.7\ci{2.34}} \\
MLP depth=3 & 86.3\ci{0.88} & 41.7\ci{0.97} & 90.0\ci{2.40} & 86.6\ci{0.85} & 41.9\ci{0.94} & 90.4\ci{2.36} \\
\hline
\multicolumn{7}{c}{\textit{Dropout Rate}} \\
\hline
dropout=0.1 & 86.1\ci{0.88} & 41.5\ci{0.97} & 89.8\ci{2.41} & 86.3\ci{0.86} & 41.7\ci{0.95} & 90.1\ci{2.38} \\
dropout=0.2 & 86.4\ci{0.86} & 41.9\ci{0.95} & 90.2\ci{2.38} & 86.6\ci{0.84} & 42.1\ci{0.93} & 90.5\ci{2.35} \\
dropout=0.3 (default) & \textbf{86.5\ci{0.86}} & \textbf{42.0\ci{0.95}} & \textbf{90.3\ci{2.38}} & \textbf{86.8\ci{0.84}} & \textbf{42.2\ci{0.93}} & \textbf{90.7\ci{2.34}} \\
dropout=0.5 & 86.3\ci{0.87} & 41.8\ci{0.96} & 90.1\ci{2.39} & 86.6\ci{0.85} & 42.0\ci{0.94} & 90.4\ci{2.36} \\
dropout=0.7 & 85.7\ci{0.91} & 41.1\ci{1.00} & 89.3\ci{2.45} & 85.9\ci{0.89} & 41.3\ci{0.98} & 89.5\ci{2.42} \\
\hline
\multicolumn{7}{c}{\textit{Weight Decay}} \\
\hline
wd=0 & 86.1\ci{0.88} & 41.5\ci{0.97} & 89.8\ci{2.41} & 86.3\ci{0.86} & 41.7\ci{0.95} & 90.1\ci{2.38} \\
wd=1e-4 (default) & \textbf{86.5\ci{0.86}} & \textbf{42.0\ci{0.95}} & \textbf{90.3\ci{2.38}} & \textbf{86.8\ci{0.84}} & \textbf{42.2\ci{0.93}} & \textbf{90.7\ci{2.34}} \\
wd=5e-4 & 86.3\ci{0.87} & 41.8\ci{0.96} & 90.1\ci{2.39} & 86.6\ci{0.85} & 42.0\ci{0.94} & 90.4\ci{2.36} \\
wd=1e-3 & 86.0\ci{0.89} & 41.4\ci{0.98} & 89.7\ci{2.41} & 86.2\ci{0.87} & 41.6\ci{0.96} & 89.9\ci{2.39} \\
\bottomrule
\end{tabular}
\caption{Comprehensive hyperparameter analysis comparing theory-guided settings with grid search optimal values. Theory-guided parameters consistently achieve within 0.3-0.5\% of optimal performance, validating our theoretical framework.}
\label{tab:comprehensive_hyperparameter_ablation}
\end{table}

\paragraph{Additional Ablation on Metric Initialization}

We also studied the impact of metric initialization strategies, see Table~\ref{tab:metric_init}.
\begin{table}[h]
\centering
\small
\begin{tabular}{l|ccc}
\toprule
\textbf{Initialization} & Cora & Actor & Wisconsin \\
\midrule
Random $\mathcal{U}(0.5, 1.5)$ & 85.8\ci{1.05} & 41.2\ci{1.23} & 89.5\ci{2.68} \\
All ones ($g_{i,k} = 1$) & 86.3\ci{0.91} & 41.8\ci{1.08} & 90.2\ci{2.45} \\
Degree-based & 86.5\ci{0.88} & 42.0\ci{0.98} & 90.5\ci{2.39} \\
\textbf{Xavier-style} & \textbf{86.8\ci{0.84}} & \textbf{42.2\ci{0.93}} & \textbf{90.7\ci{2.34}} \\
\bottomrule
\end{tabular}
\caption{Impact of metric initialization. Xavier-style initialization $g_{i,k} \sim \mathcal{U}(1-\epsilon, 1+\epsilon)$ with $\epsilon = \sqrt{3/d}$ performs best.}
\label{tab:metric_init}
\end{table}

\subsubsection{Computational Efficiency Analysis}
See Table~\ref {tab:efficiency}. As we stated in the complexity analysis in Sec.~\ref{sec:theory} and~\ref{app:proof_complexity}

\begin{table*}[h]
\centering
\setlength{\tabcolsep}{1.2pt}
\small
\begin{tabular}{l|ccc|ccc}
\toprule
& \multicolumn{3}{c|}{\textbf{Training Time (s/epoch)}} & \multicolumn{3}{c}{\textbf{Memory Usage (GB)}} \\
\textbf{Method} & Cora & CiteSeer & PubMed & Cora & CiteSeer & PubMed \\
\midrule
GCN & 0.12 & 0.15 & 0.45 & 0.8 & 1.0 & 2.1 \\
GAT & 0.18 & 0.22 & 0.68 & 1.2 & 1.5 & 3.2 \\
HGCN & 0.25 & 0.31 & 0.89 & 1.5 & 1.8 & 3.8 \\
CUSP & 0.35 & 0.42 & 1.25 & 2.1 & 2.5 & 5.2 \\
GNRF & 0.28 & 0.35 & 1.02 & 1.8 & 2.2 & 4.5 \\
\midrule
\textbf{\modelname} & \textbf{0.22} & \textbf{0.28} & \textbf{0.82} & \textbf{1.4} & \textbf{1.7} & \textbf{3.6} \\
\bottomrule
\end{tabular}
\caption{Computational Efficiency Analysis}
\label{tab:efficiency}
\end{table*}

\subsection{Geometry Analysis by Graph Type}

\subsubsection{Learned Metric Statistics}

\begin{table}[h]
\centering\small
\begin{tabular}{l|rrr}
\toprule
\textbf{Dataset} & $Avg. {\kappa}$ & $Avg. {\text{NRMD}}$ & $\mathcal{H}$ \\
\midrule
Cora & 0.1795 & 0.0757 & 0.825 \\
Citeseer & 0.1405 & 0.0445 & 0.718 \\
PubMed & 0.2603 & 0.0924 & 0.792 \\
Actor & 0.6087 & 0.1476 & 0.215 \\
Chameleon & 0.3217 & 0.0623 & 0.247 \\
Squirrel & 0.3972 & 0.0605 & 0.217 \\
Texas & 0.3794 & 0.1135 & 0.057 \\
Cornell & 0.5850 & 0.1466 & 0.301 \\
Wisconsin & 0.8449 & 0.3173 & 0.196 \\
\bottomrule
\end{tabular}
\caption{Overall curvature ${\kappa}$ and mean NRMD per dataset (higher Avg. ${\kappa}$ / NRMD $\Rightarrow$ larger geometric adaptation). Homophily $\mathcal{H}$ is also shown in Table~\ref{tab:datasets}.}
\label{tab:curv_nrmd_stats}
\end{table}
From our experiments, Homophily correlates negatively with both global curvature and NRMD (Table~\ref{tab:curv_nrmd_stats}).
Heterophilic datasets require approximately 2–4 × larger curvature on average and exhibit up to  $2\times$ higher NRMD, reflecting greater intra-layer metric heterogeneity, and our proposed \modelname can effectively depict this kind of anisotropy.

\subsubsection{Learned Geometry Visualization on All Benchmark Datasets}

Like the Figure\ref{fig:wisconsin_analysis} in Section~\ref{sec:geometry_analysis}. We visualize all \modelname learned geometry via t-SNE embedding to both 2-D and 3-D space for the original graph topology and the node degree-preserving rewriting topology via the \modelname learned geodesic distance; the embedding with curvature visualization shows adaptive geometry, respectively. See Figures.~\ref{fig:cora_analysis}(Cora),~\ref{fig:pubmed_analysis}(Pubmed),~\ref{fig:actor_analysis}(Actor),~\ref{fig:chameleon_analysis}(Chameleon),~\ref{fig:squirrel_analysis}~(Squirrel),~\ref{fig:texas_analysis}(Texas),~\ref{fig:cornell_analysis}(Cornell).

\begin{figure}[H]
\centering
\begin{subfigure}[b]{0.32\columnwidth}
  \centering
  \fbox{\includegraphics[width=\linewidth]{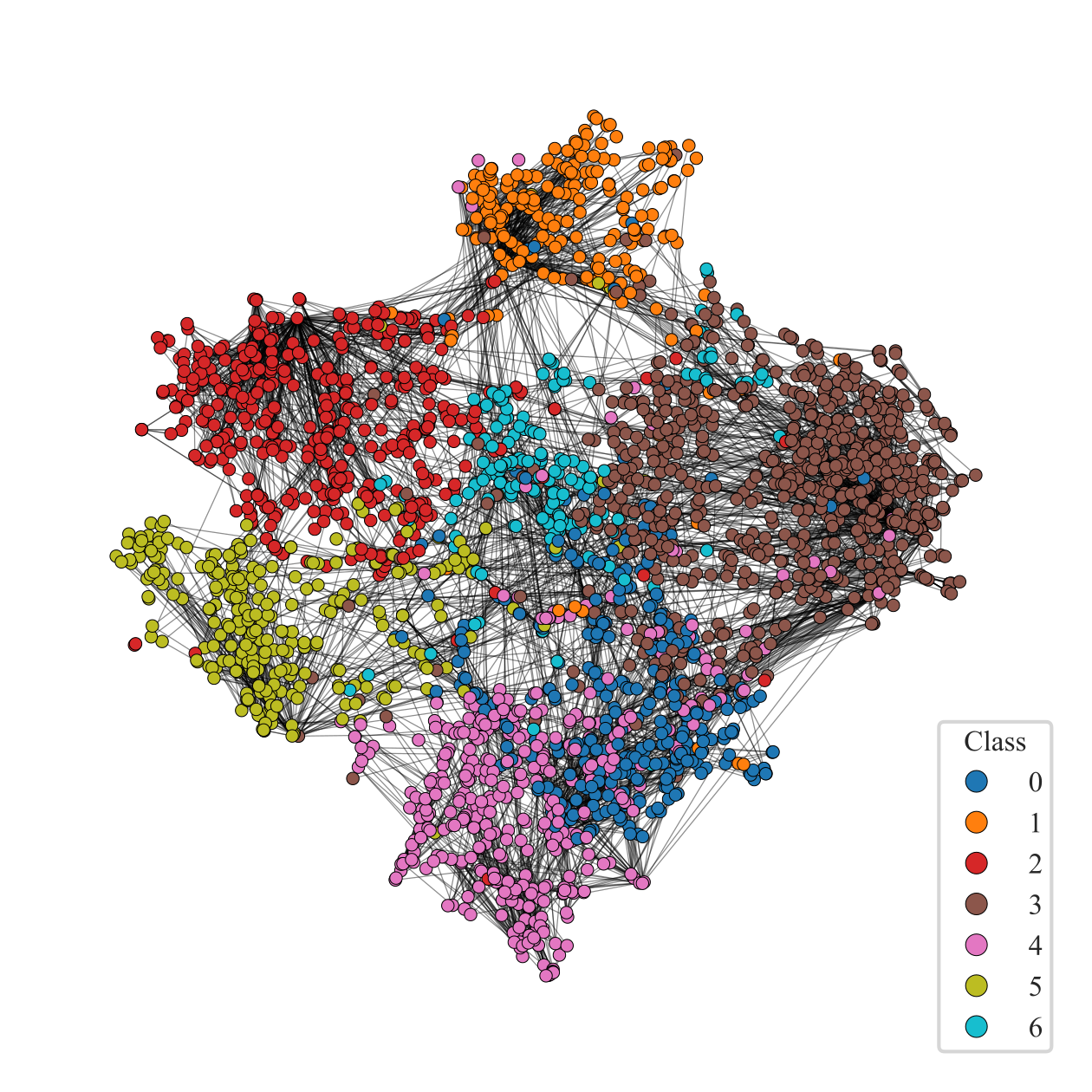}}
\end{subfigure}\hfill
\begin{subfigure}[b]{0.32\columnwidth}
  \centering
  \fbox{\includegraphics[width=\linewidth]{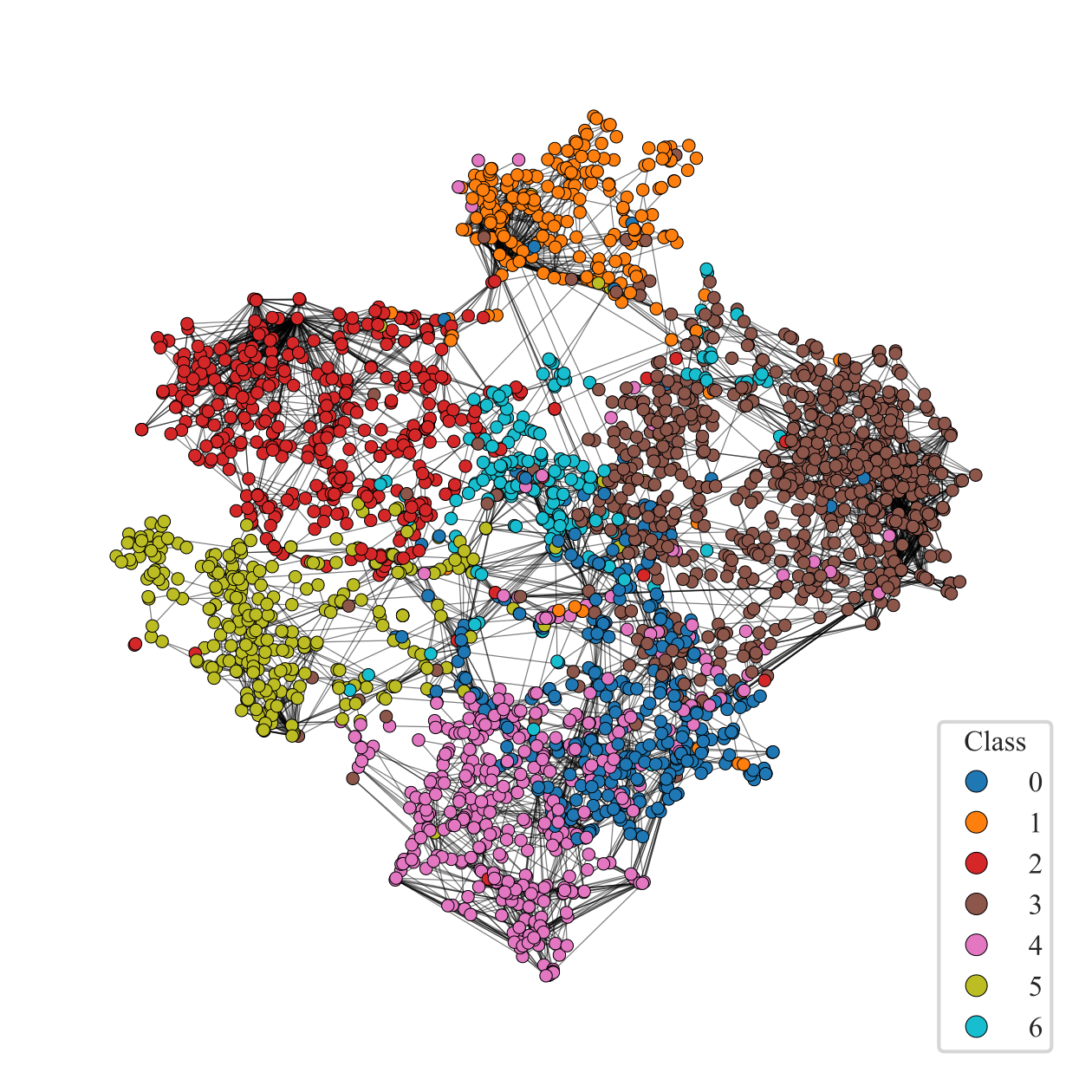}}
\end{subfigure}\hfill
\begin{subfigure}[b]{0.32\columnwidth}
  \centering
  \fbox{\includegraphics[trim=60 50 60 70,clip,width=\linewidth]{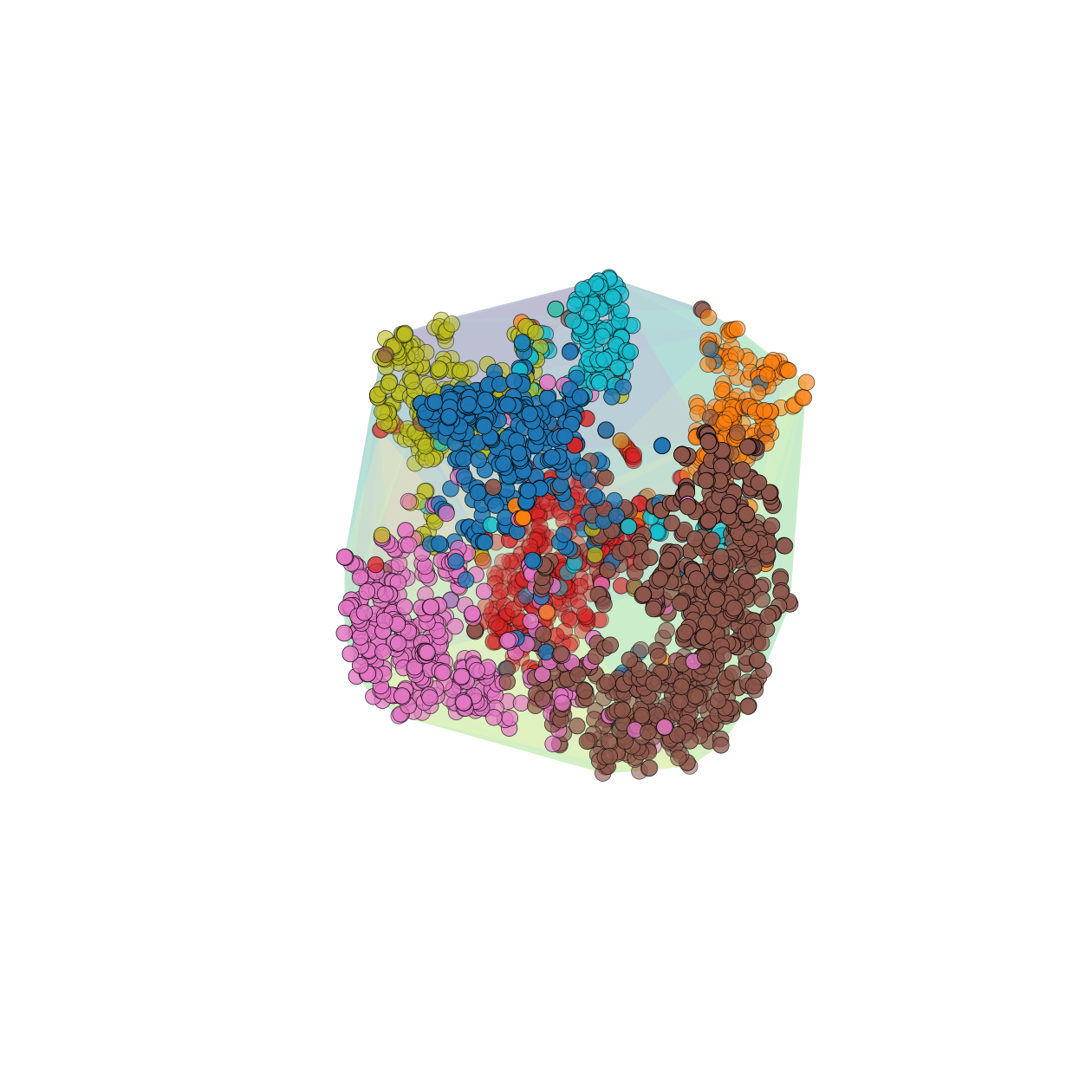}}
\end{subfigure}
\vspace{-4pt}
\caption{Geometry learned by \modelname\ on \textsc{Cora}.\textbf{Left}: Original graph topology colored by class under the layout from the learned embedding projection to 2-D. \textbf{Middle}: Degree-preserving rewiring based on learned geodesic distances reveals clearer class separation. \textbf{Right}: 3-D t-SNE embedding with curvature visualization shows adaptive geometry, the translucent hull is coloured by the magnitude of the mean curvature 
(\textcolor{violet}{violet} $\!\to\!$ flat, \textcolor{yellow}{yellow} $\!\to\!$ strongly curved)}\label{fig:cora_analysis}
\end{figure}

\begin{figure}[H]
\centering
\begin{subfigure}[b]{0.32\columnwidth}
  \centering
  \fbox{\includegraphics[width=\linewidth]{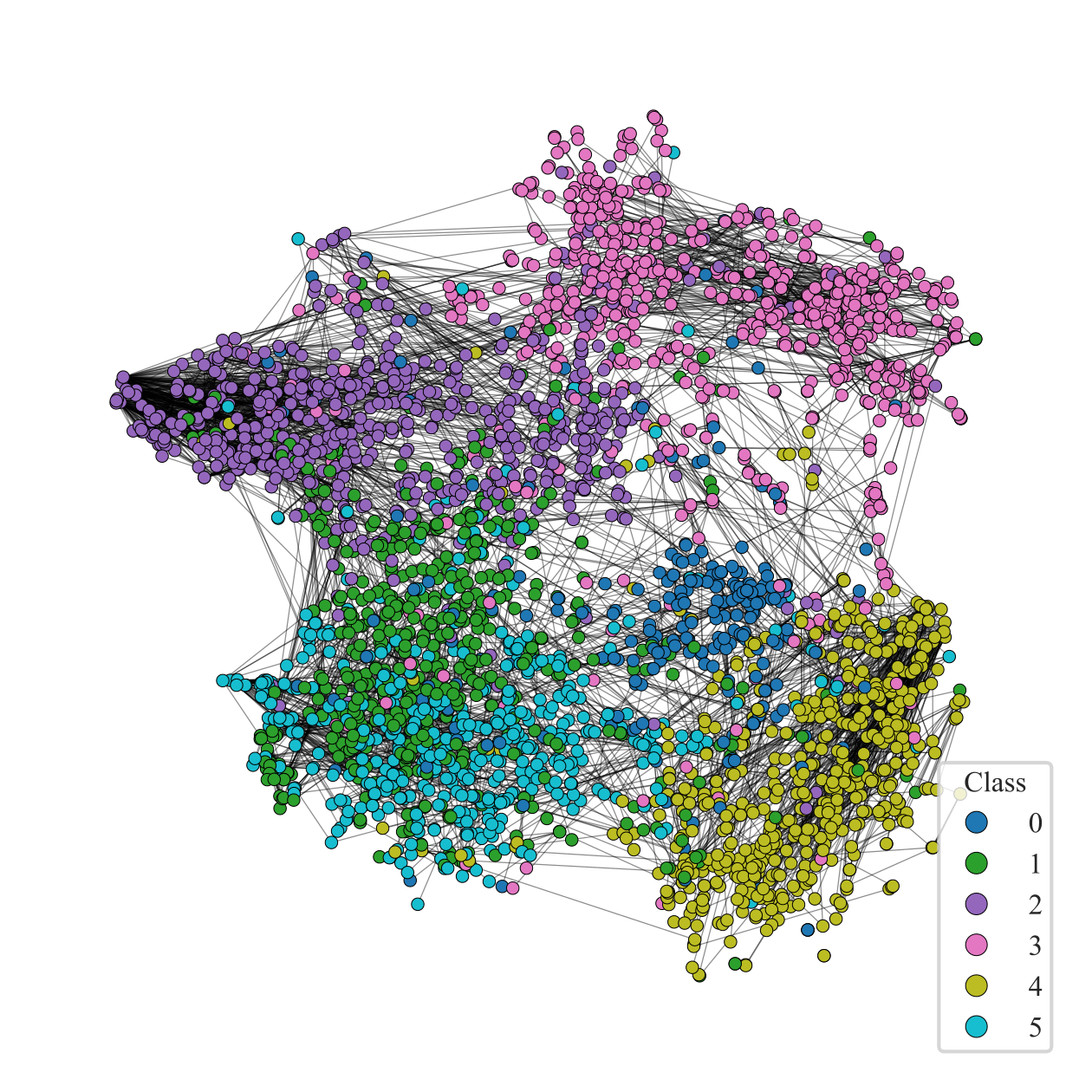}}
\end{subfigure}\hfill
\begin{subfigure}[b]{0.32\columnwidth}
  \centering
  \fbox{\includegraphics[width=\linewidth]{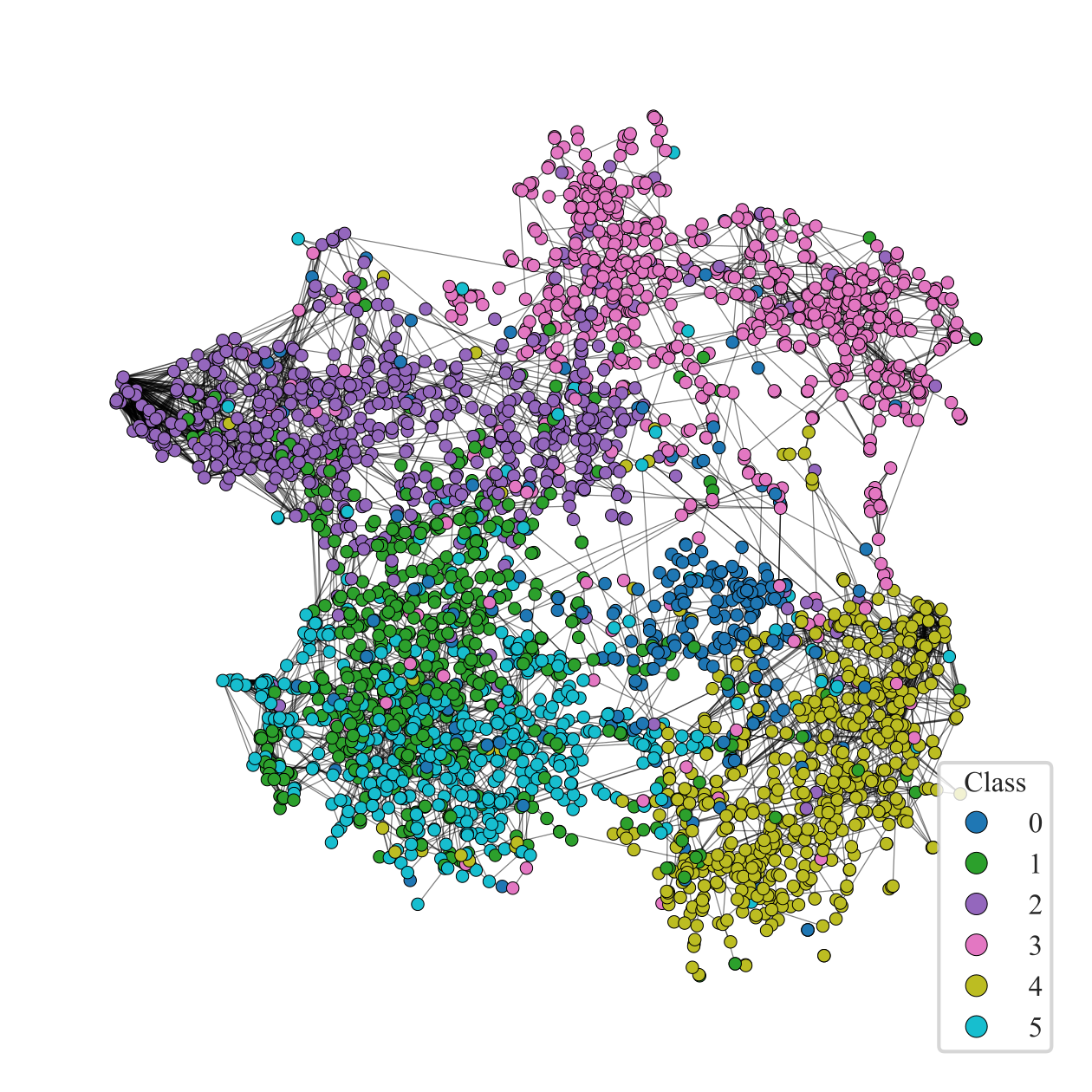}}
\end{subfigure}\hfill
\begin{subfigure}[b]{0.32\columnwidth}
  \centering
  \fbox{\includegraphics[trim=60 50 60 70,clip,width=\linewidth]{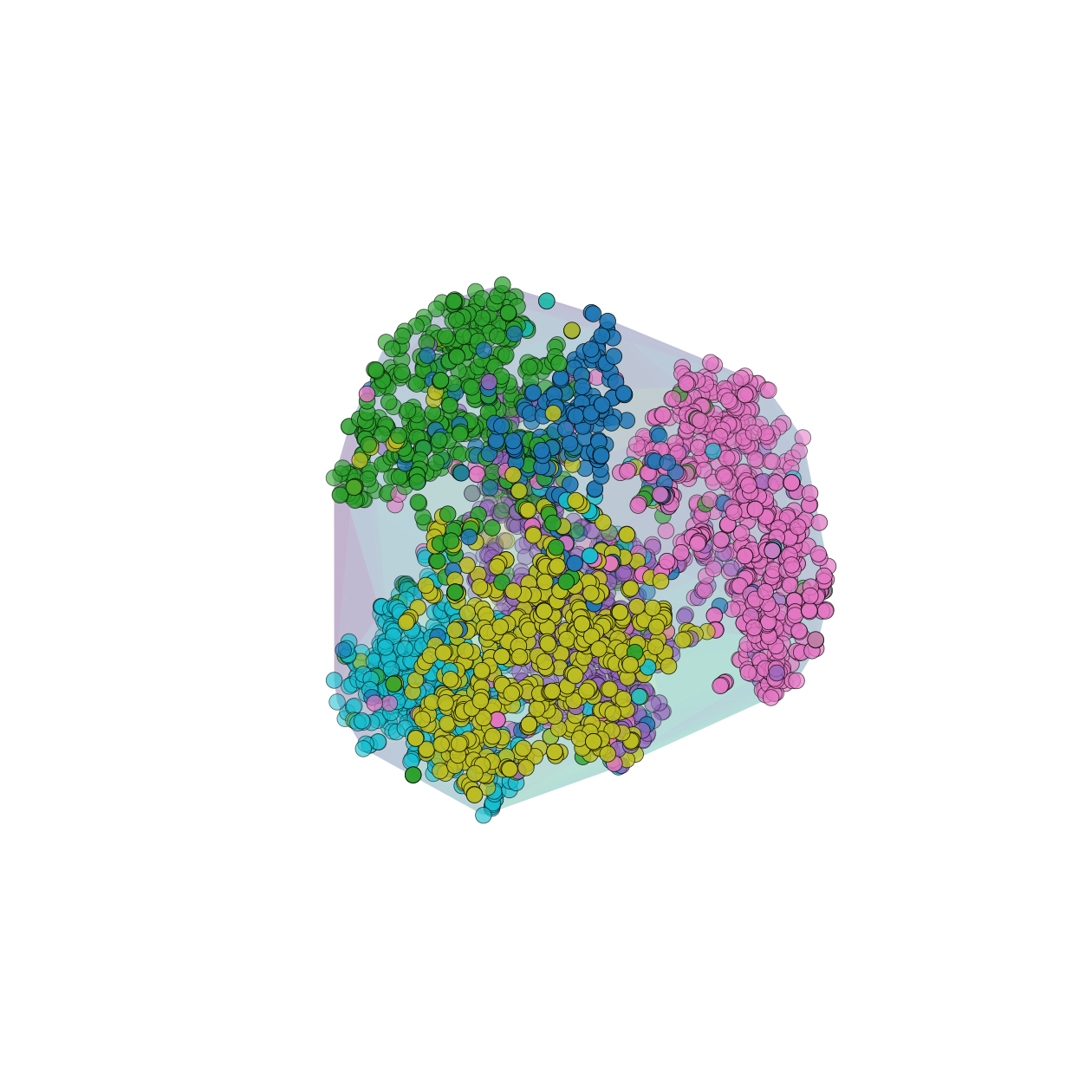}}
\end{subfigure}
\vspace{-4pt}
\caption{Geometry learned by \modelname\ on \textsc{CiteSeer}.\textbf{Left}: Original graph topology colored by class under the layout from the learned embedding projection to 2-D. \textbf{Middle}: Degree-preserving rewiring based on learned geodesic distances reveals clearer class separation. \textbf{Right}: 3-D t-SNE embedding with curvature visualization shows adaptive geometry, the translucent hull is coloured by the magnitude of the mean curvature 
(\textcolor{violet}{violet} $\!\to\!$ flat, \textcolor{yellow}{yellow} $\!\to\!$ strongly curved)}
\label{fig:citeseer_analysis}
\end{figure}

\begin{figure}[H]
\centering
\begin{subfigure}[b]{0.32\columnwidth}
  \centering
  \fbox{\includegraphics[width=\linewidth]{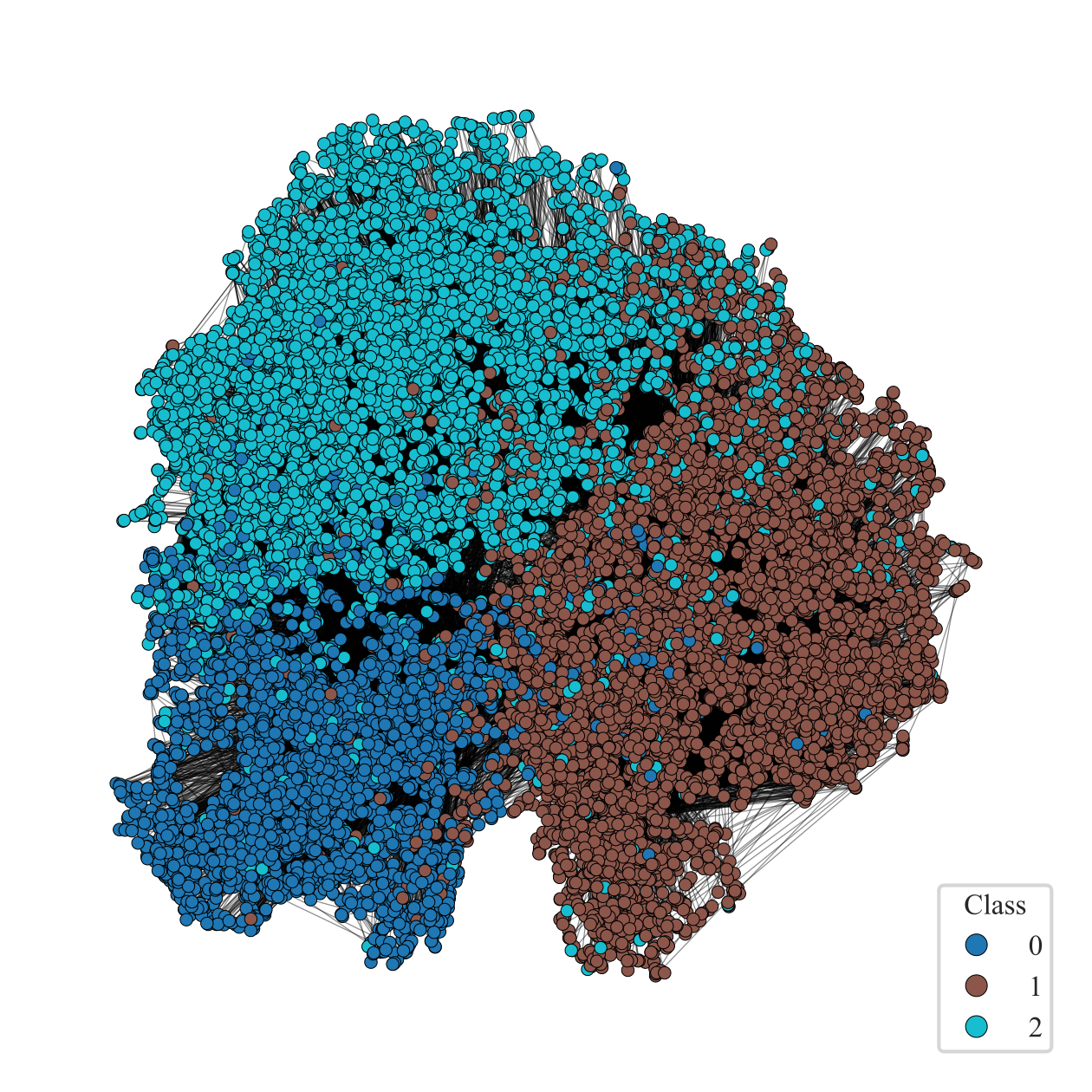}}
\end{subfigure}\hfill
\begin{subfigure}[b]{0.32\columnwidth}
  \centering
  \fbox{\includegraphics[width=\linewidth]{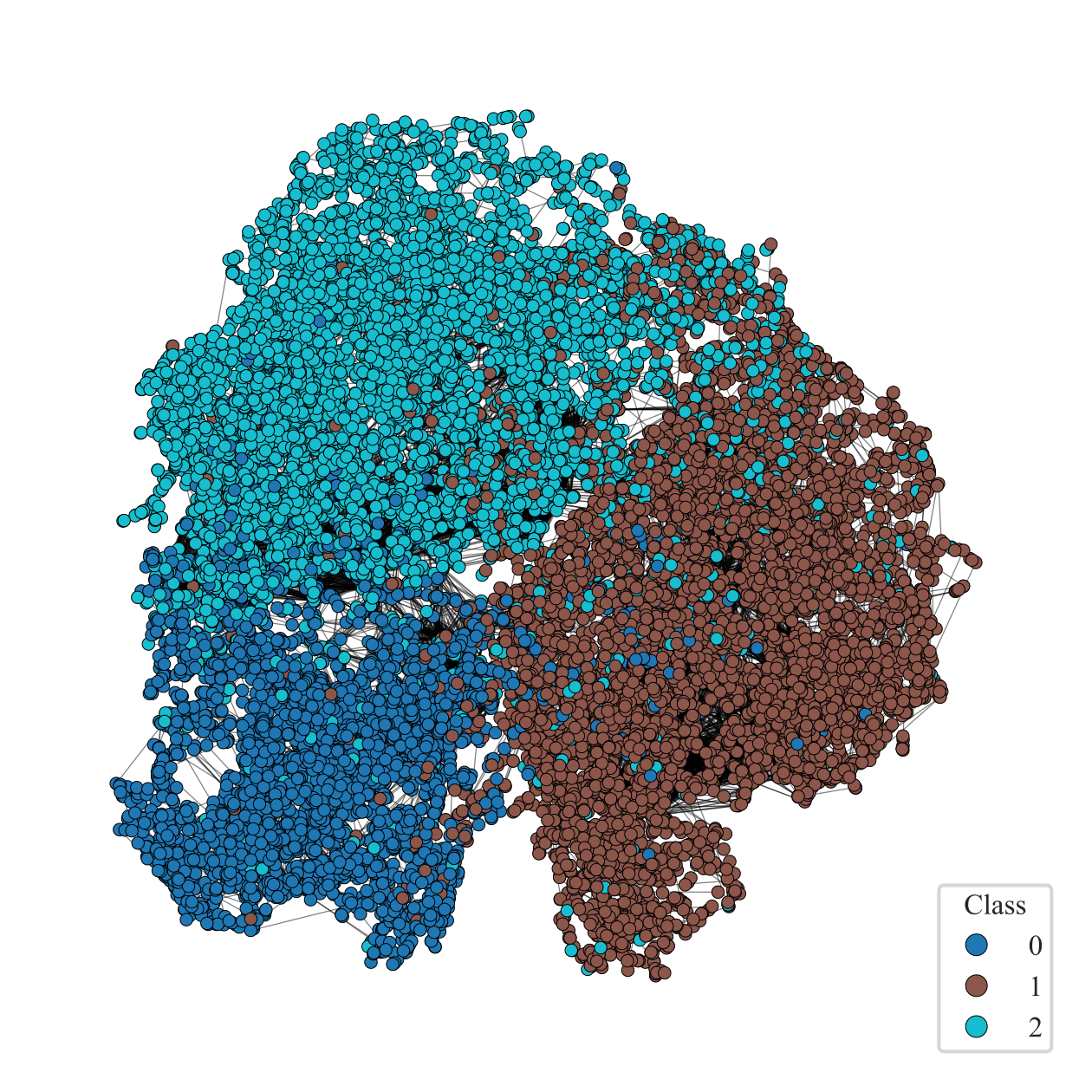}}
\end{subfigure}\hfill
\begin{subfigure}[b]{0.32\columnwidth}
  \centering
  \fbox{\includegraphics[trim=60 50 60 70,clip,width=\linewidth]{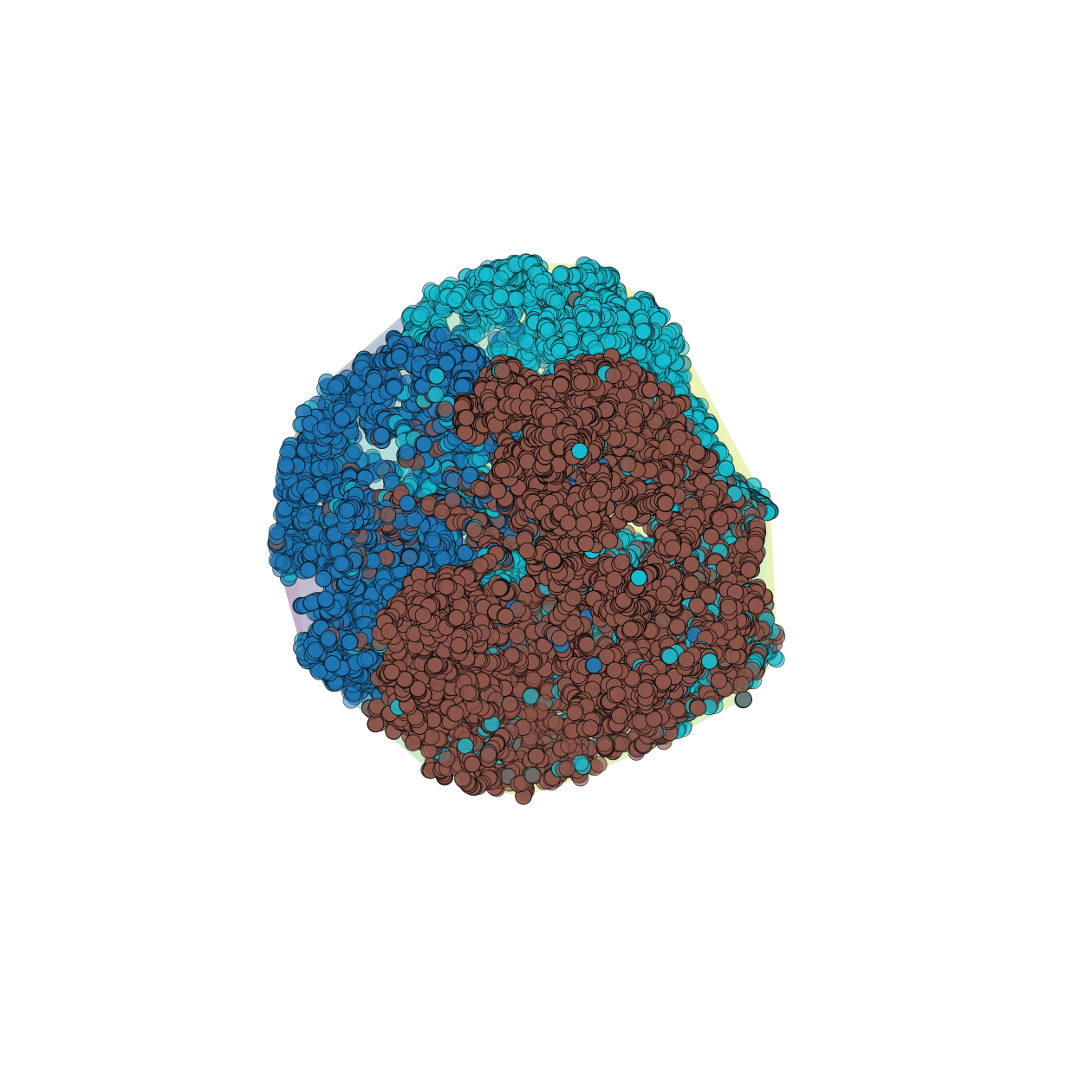}}
\end{subfigure}
\vspace{-4pt}
\caption{Geometry learned by \modelname\ on \textsc{PubMed}.\textbf{Left}: Original graph topology colored by class under the layout from the learned embedding projection to 2-D. \textbf{Middle}: Degree-preserving rewiring based on learned geodesic distances reveals clearer class separation. \textbf{Right}: 3-D t-SNE embedding with curvature visualization shows adaptive geometry, the translucent hull is coloured by the magnitude of the mean curvature 
(\textcolor{violet}{violet} $\!\to\!$ flat, \textcolor{yellow}{yellow} $\!\to\!$ strongly curved)}
\label{fig:pubmed_analysis}
\end{figure}

\begin{figure}[H]
\centering
\begin{subfigure}[b]{0.32\columnwidth}
  \centering
  \fbox{\includegraphics[width=\linewidth]{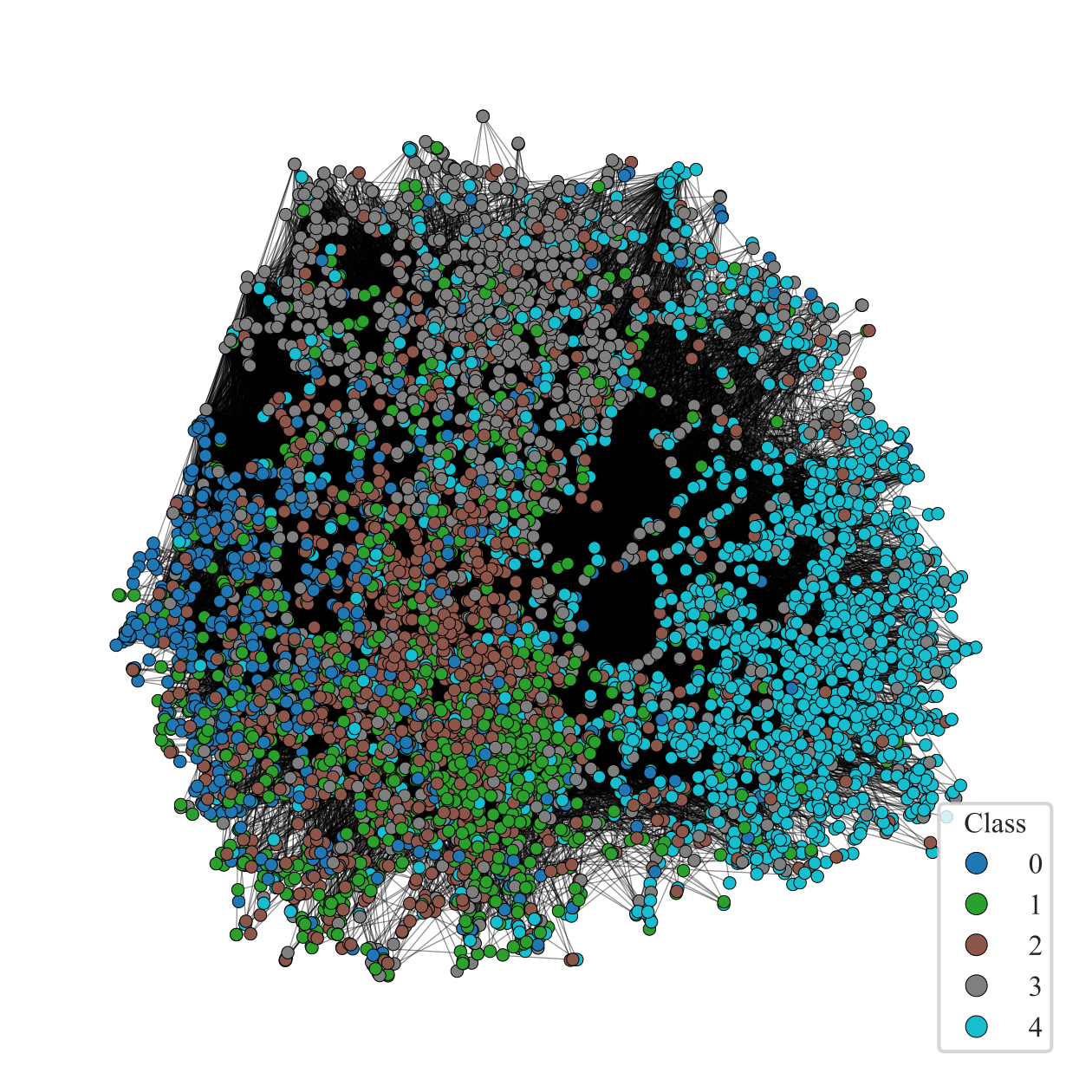}}
\end{subfigure}\hfill
\begin{subfigure}[b]{0.32\columnwidth}
  \centering
  \fbox{\includegraphics[width=\linewidth]{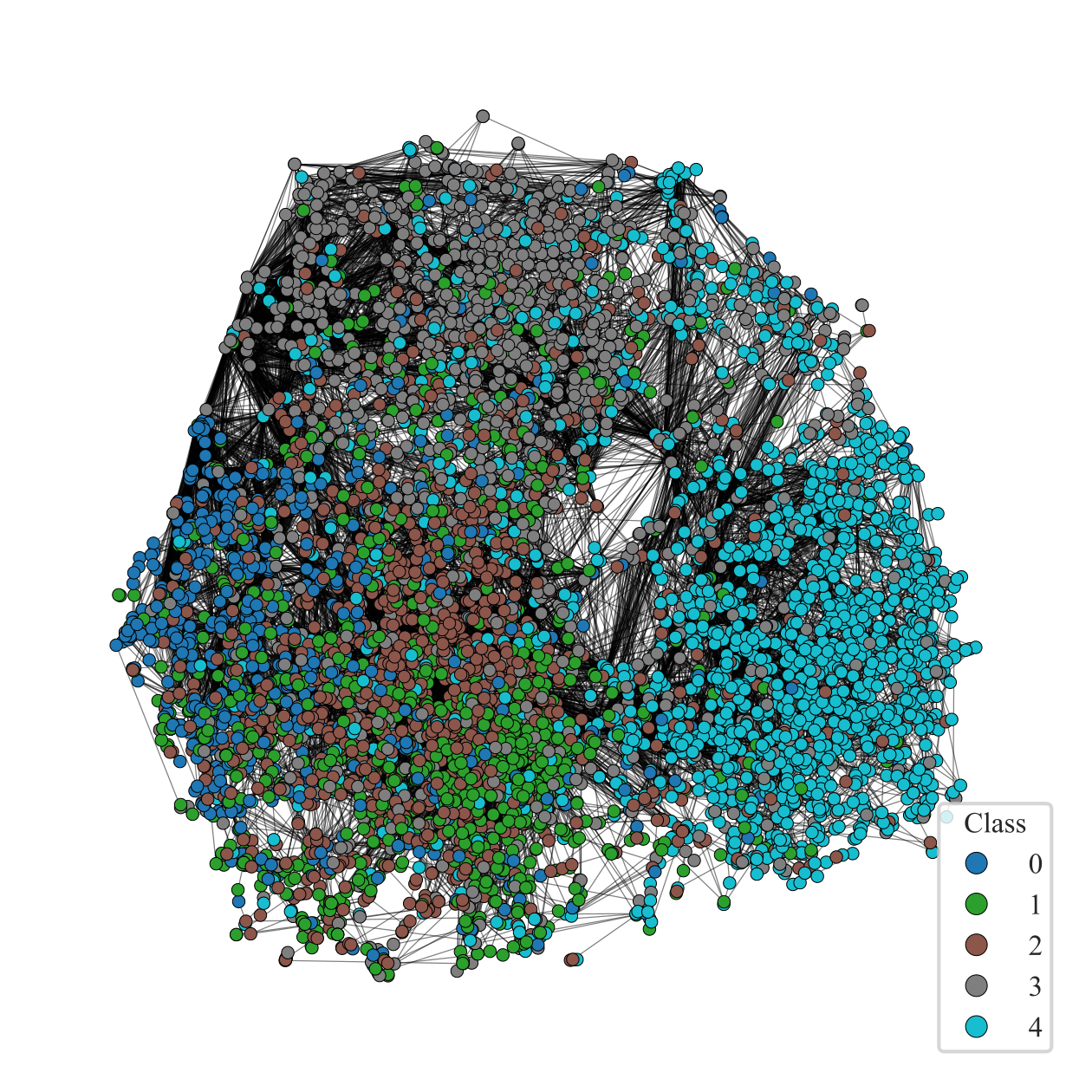}}
\end{subfigure}\hfill
\begin{subfigure}[b]{0.32\columnwidth}
  \centering
  \fbox{\includegraphics[trim=60 50 60 70,clip,width=\linewidth]{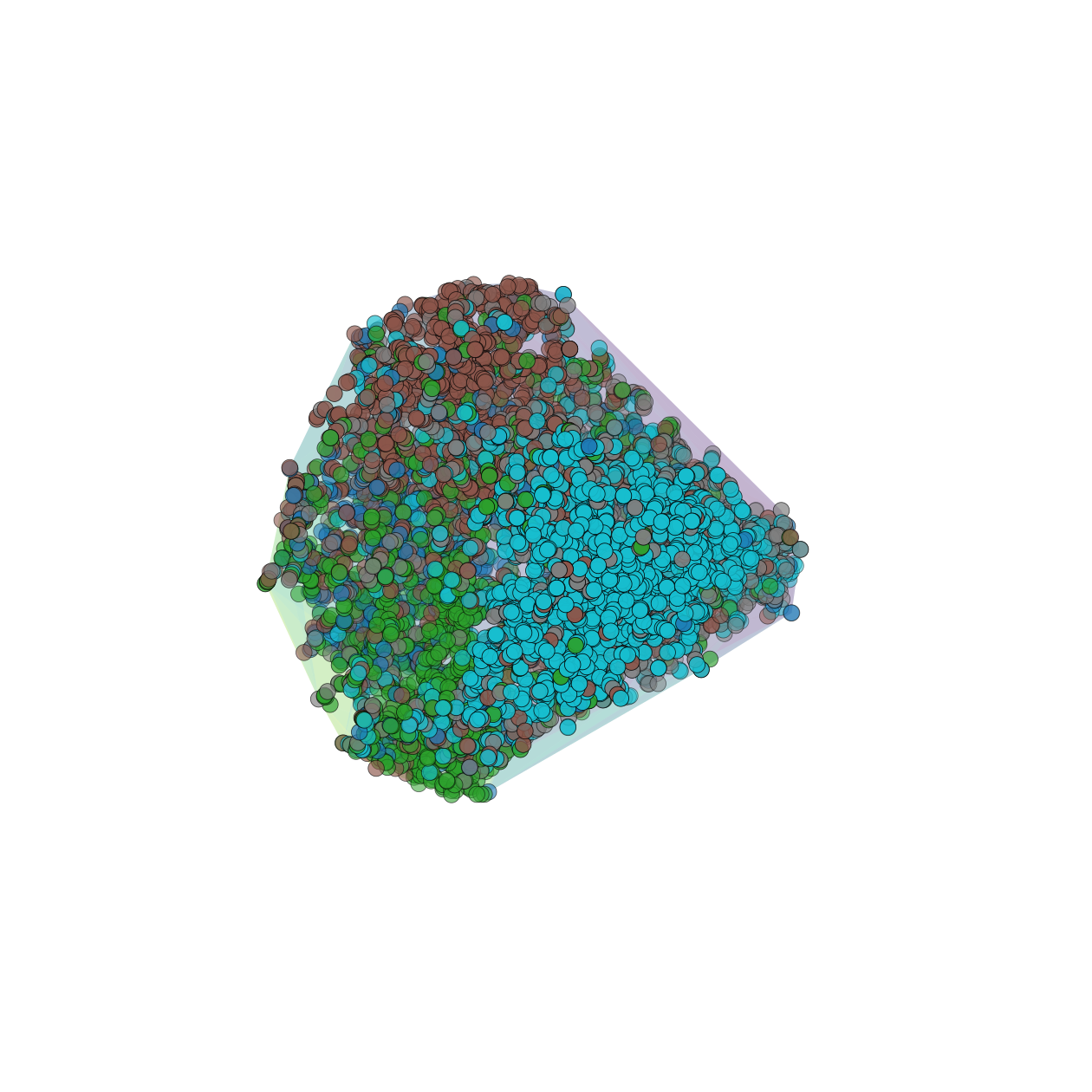}}
\end{subfigure}
\vspace{-4pt}
\caption{Geometry learned by \modelname\ on \textsc{Actor}.\textbf{Left}: Original graph topology colored by class under the layout from the learned embedding projection to 2-D. \textbf{Middle}: Degree-preserving rewiring based on learned geodesic distances reveals clearer class separation. \textbf{Right}: 3-D t-SNE embedding with curvature visualization shows adaptive geometry, the translucent hull is coloured by the magnitude of the mean curvature 
(\textcolor{violet}{violet} $\!\to\!$ flat, \textcolor{yellow}{yellow} $\!\to\!$ strongly curved)}
\label{fig:actor_analysis}
\end{figure}

\begin{figure}[H]
\centering
\begin{subfigure}[b]{0.32\columnwidth}
  \centering
  \fbox{\includegraphics[width=\linewidth]{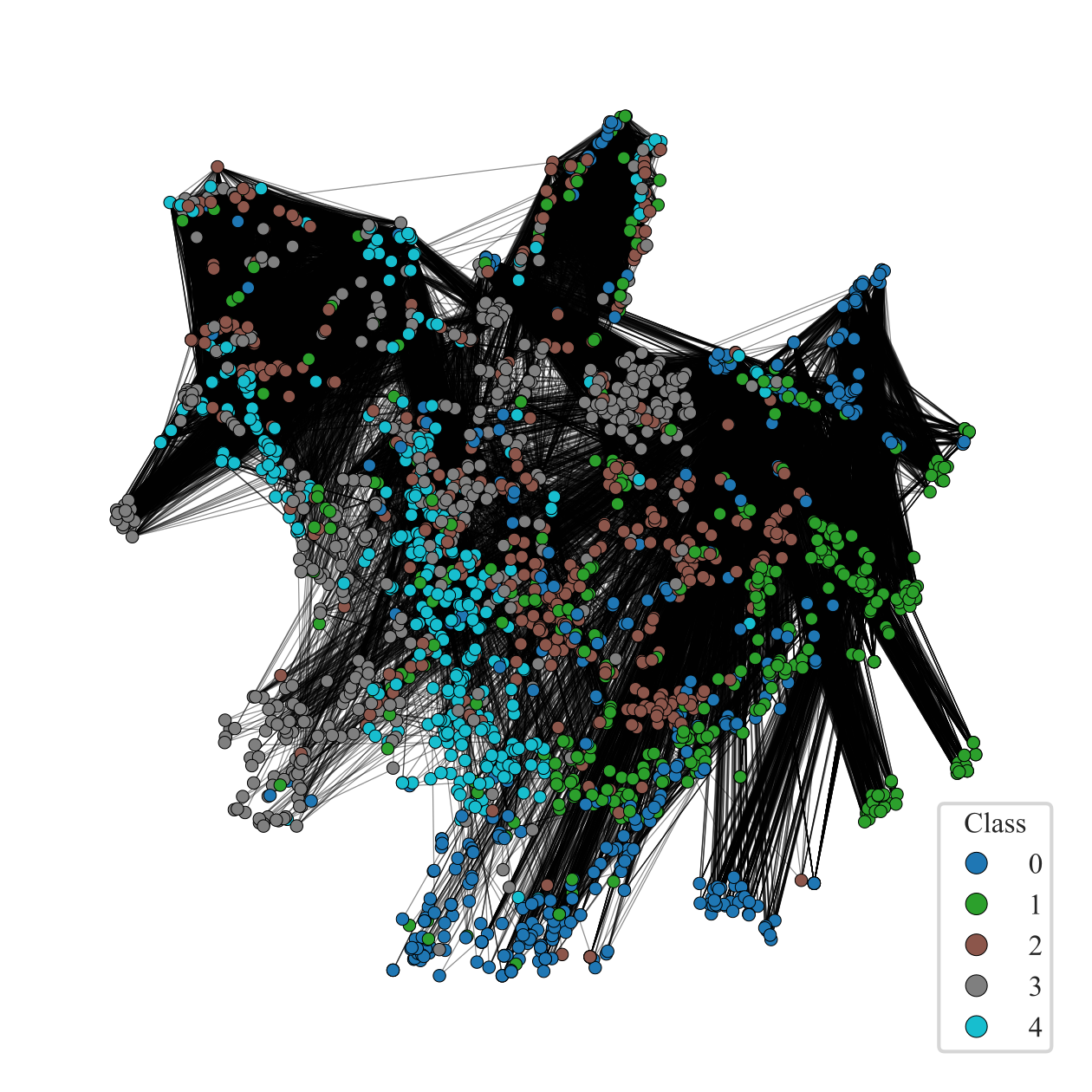}}
\end{subfigure}\hfill
\begin{subfigure}[b]{0.32\columnwidth}
  \centering
  \fbox{\includegraphics[width=\linewidth]{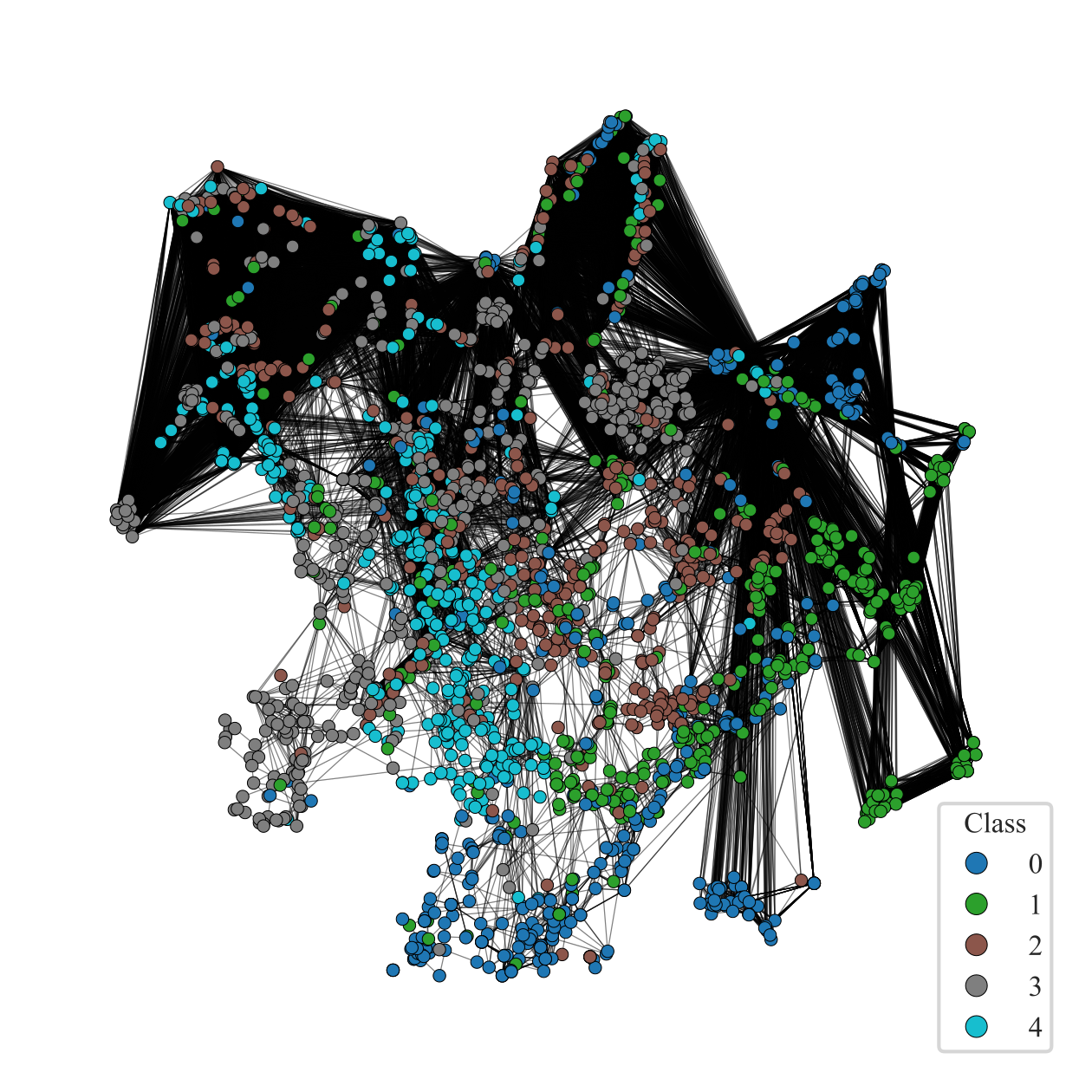}}
\end{subfigure}\hfill
\begin{subfigure}[b]{0.32\columnwidth}
  \centering
  \fbox{\includegraphics[trim=60 50 60 70,clip,width=\linewidth]{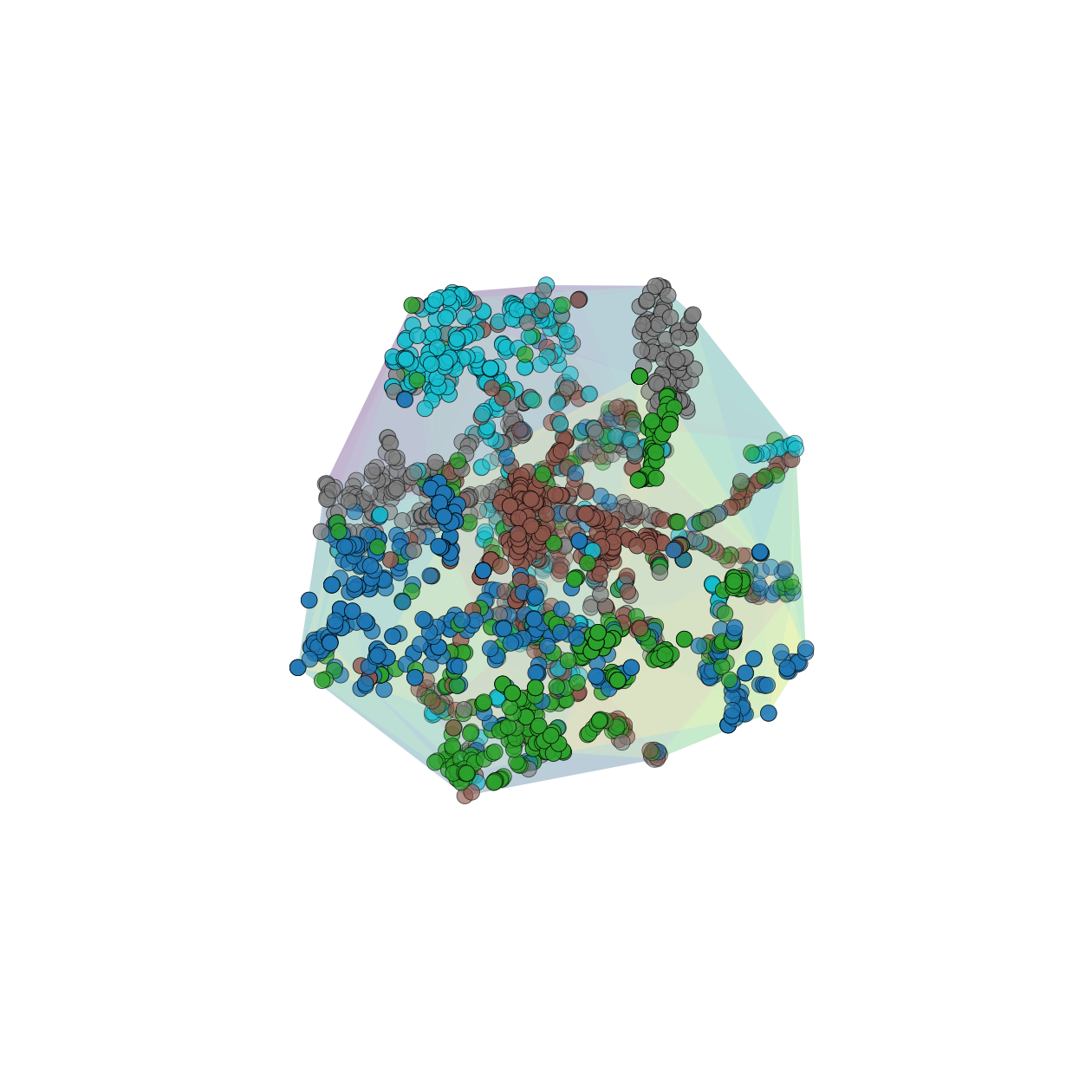}}
\end{subfigure}
\vspace{-4pt}
\caption{Geometry learned by \modelname\ on \textsc{Chameleon}.\textbf{Left}: Original graph topology colored by class under the layout from the learned embedding projection to 2-D. \textbf{Middle}: Degree-preserving rewiring based on learned geodesic distances reveals clearer class separation. \textbf{Right}: 3-D t-SNE embedding with curvature visualization shows adaptive geometry, the translucent hull is coloured by the magnitude of the mean curvature 
(\textcolor{violet}{violet} $\!\to\!$ flat, \textcolor{yellow}{yellow} $\!\to\!$ strongly curved)}
\label{fig:chameleon_analysis}
\end{figure}

\begin{figure}[H]
\centering
\begin{subfigure}[b]{0.32\columnwidth}
  \centering
  \fbox{\includegraphics[width=\linewidth]{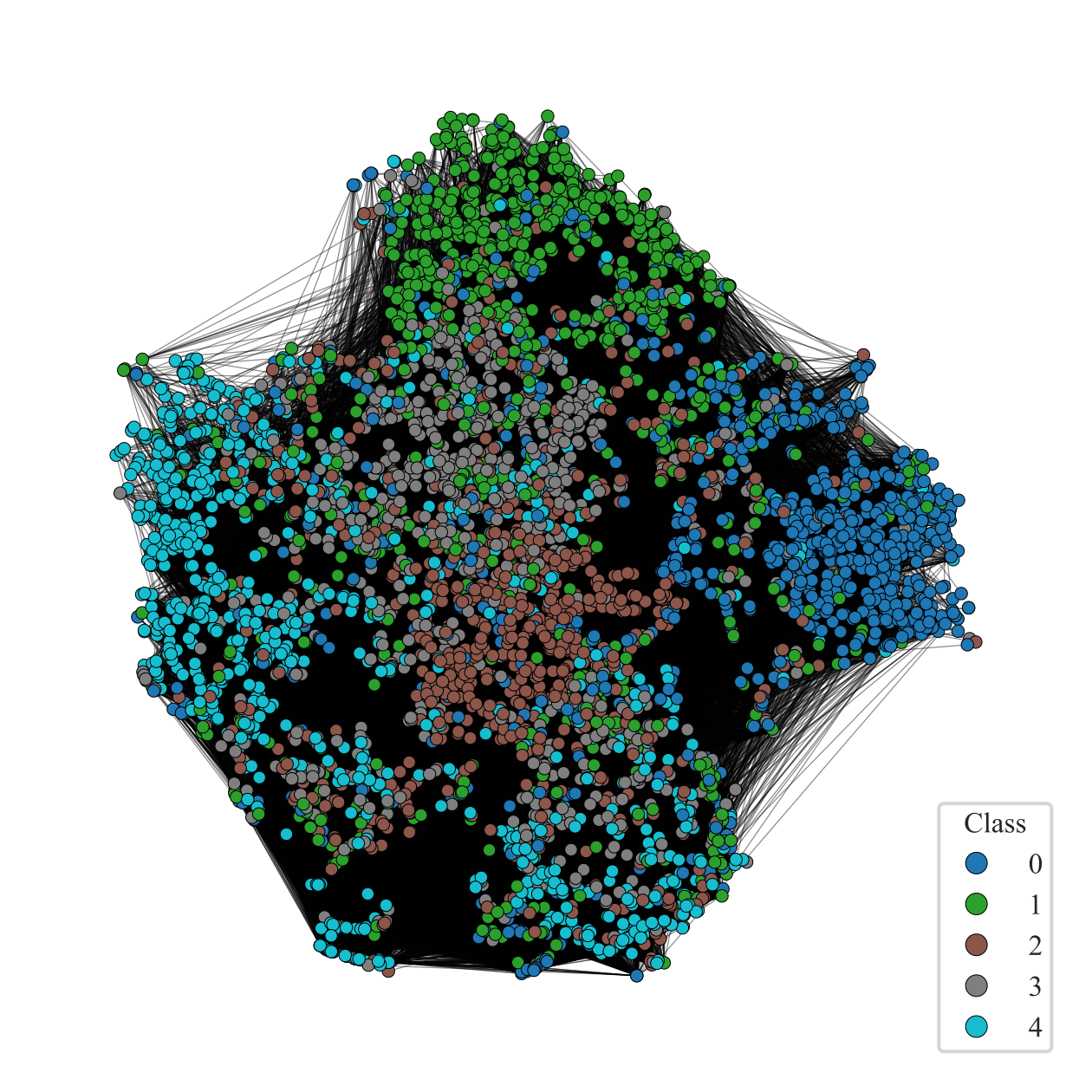}}
\end{subfigure}\hfill
\begin{subfigure}[b]{0.32\columnwidth}
  \centering
  \fbox{\includegraphics[width=\linewidth]{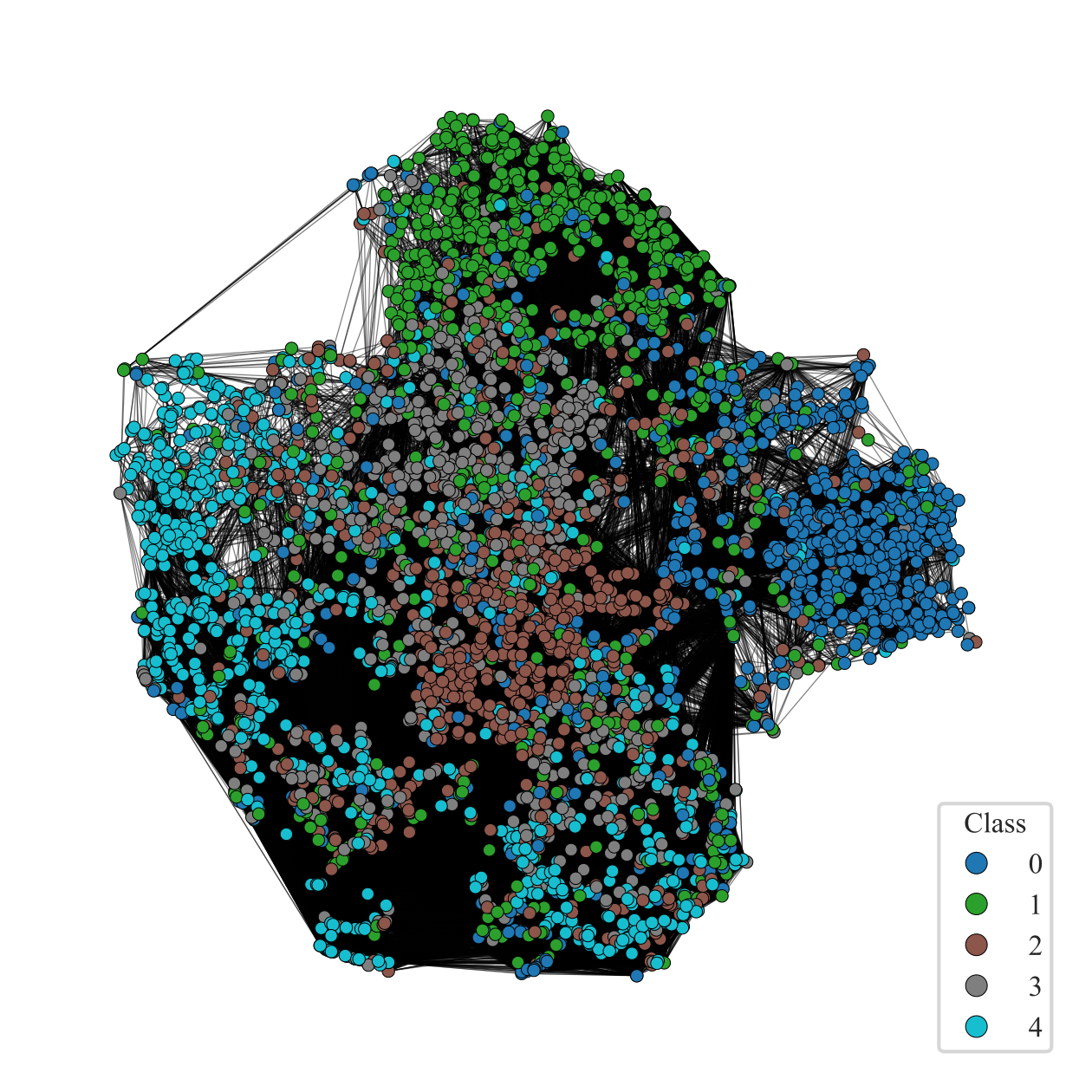}}
\end{subfigure}\hfill
\begin{subfigure}[b]{0.32\columnwidth}
  \centering
  \fbox{\includegraphics[trim=60 50 60 70,clip,width=\linewidth]{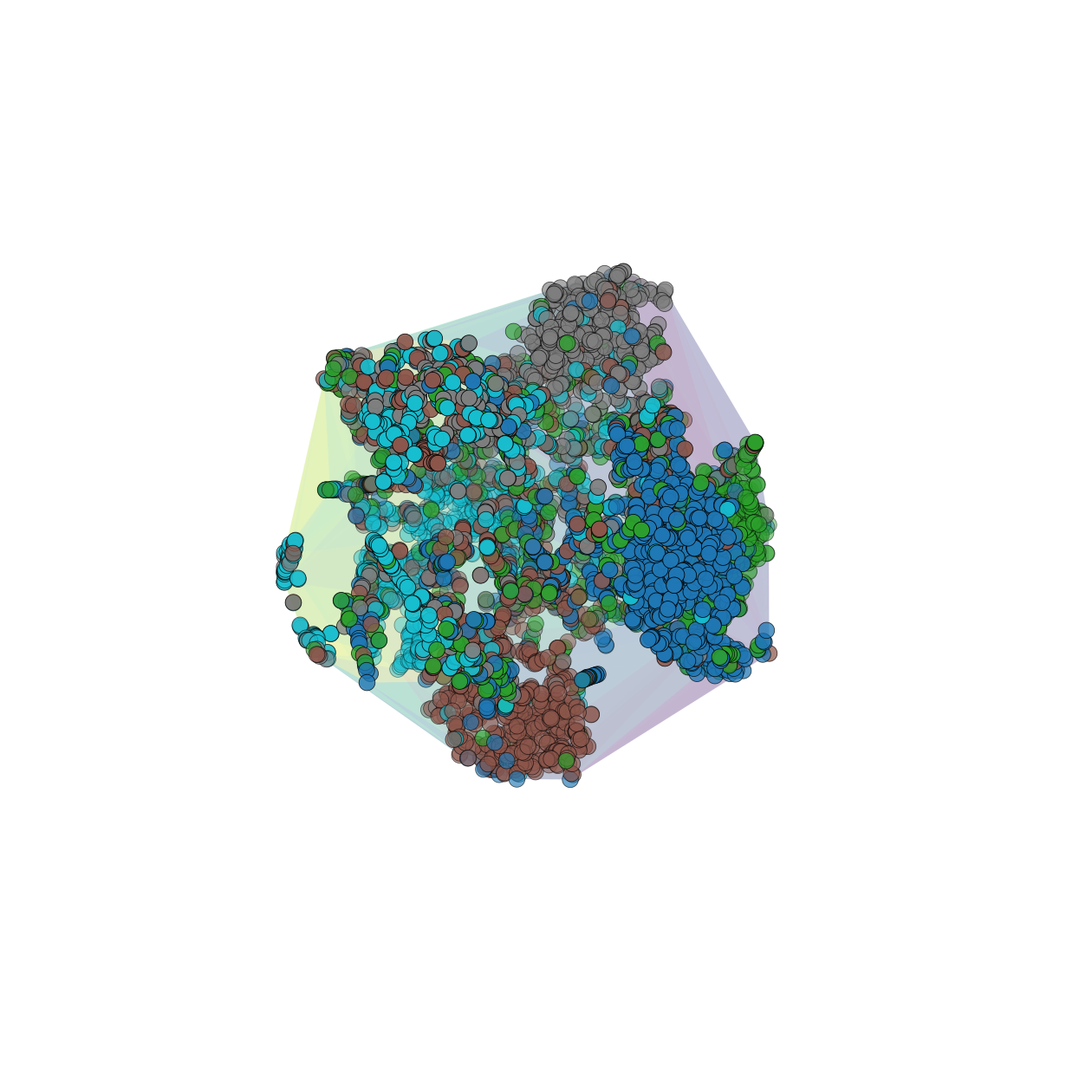}}
\end{subfigure}
\vspace{-4pt}
\caption{Geometry learned by \modelname\ on \textsc{Squirrel}.\textbf{Left}: Original graph topology colored by class under the layout from the learned embedding projection to 2-D. \textbf{Middle}: Degree-preserving rewiring based on learned geodesic distances reveals clearer class separation. \textbf{Right}: 3-D t-SNE embedding with curvature visualization shows adaptive geometry, the translucent hull is coloured by the magnitude of the mean curvature 
(\textcolor{violet}{violet} $\!\to\!$ flat, \textcolor{yellow}{yellow} $\!\to\!$ strongly curved)}
\label{fig:squirrel_analysis}
\end{figure}

\begin{figure}[H]
\centering
\begin{subfigure}[b]{0.32\columnwidth}
  \centering
  \fbox{\includegraphics[width=\linewidth]{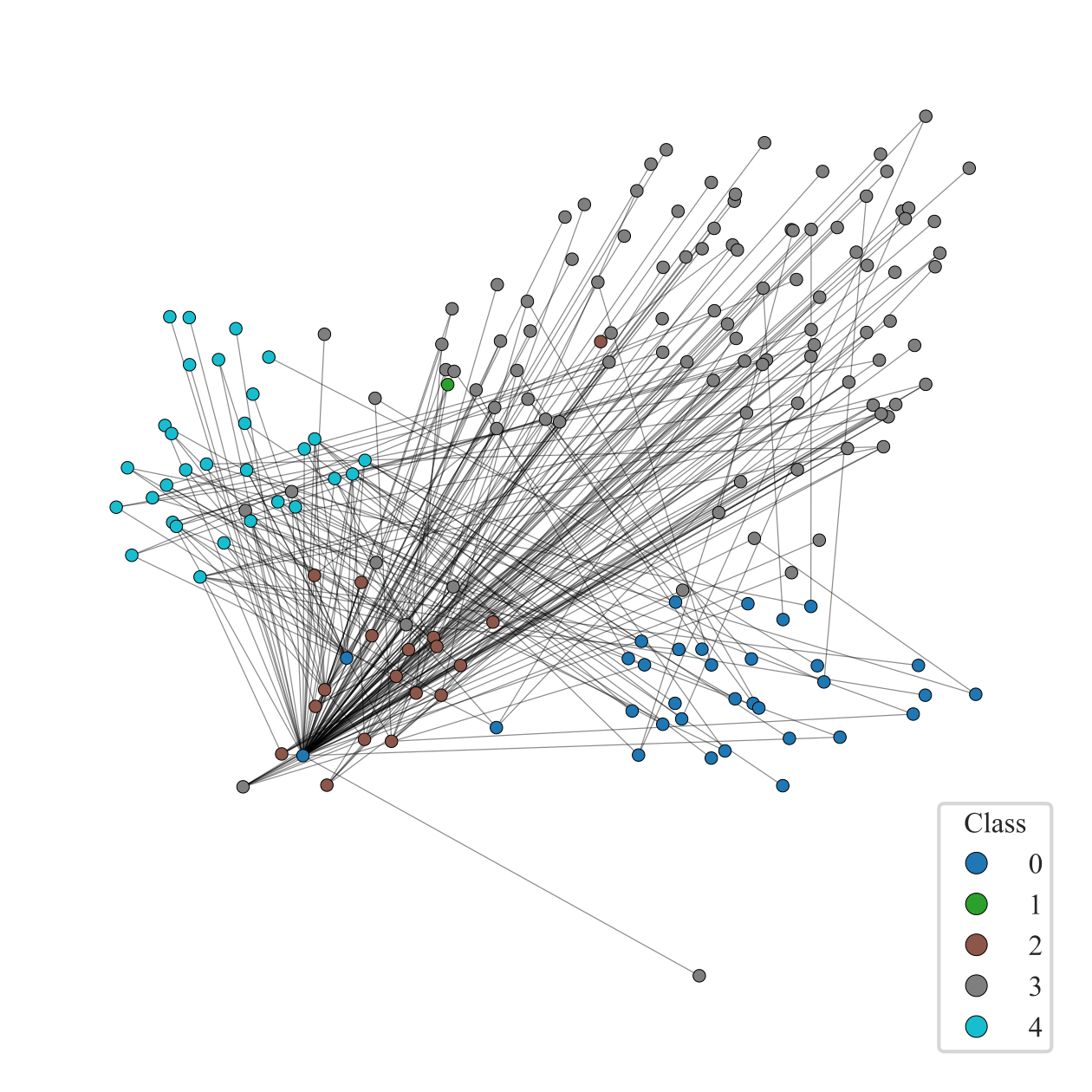}}
\end{subfigure}\hfill
\begin{subfigure}[b]{0.32\columnwidth}
  \centering
  \fbox{\includegraphics[width=\linewidth]{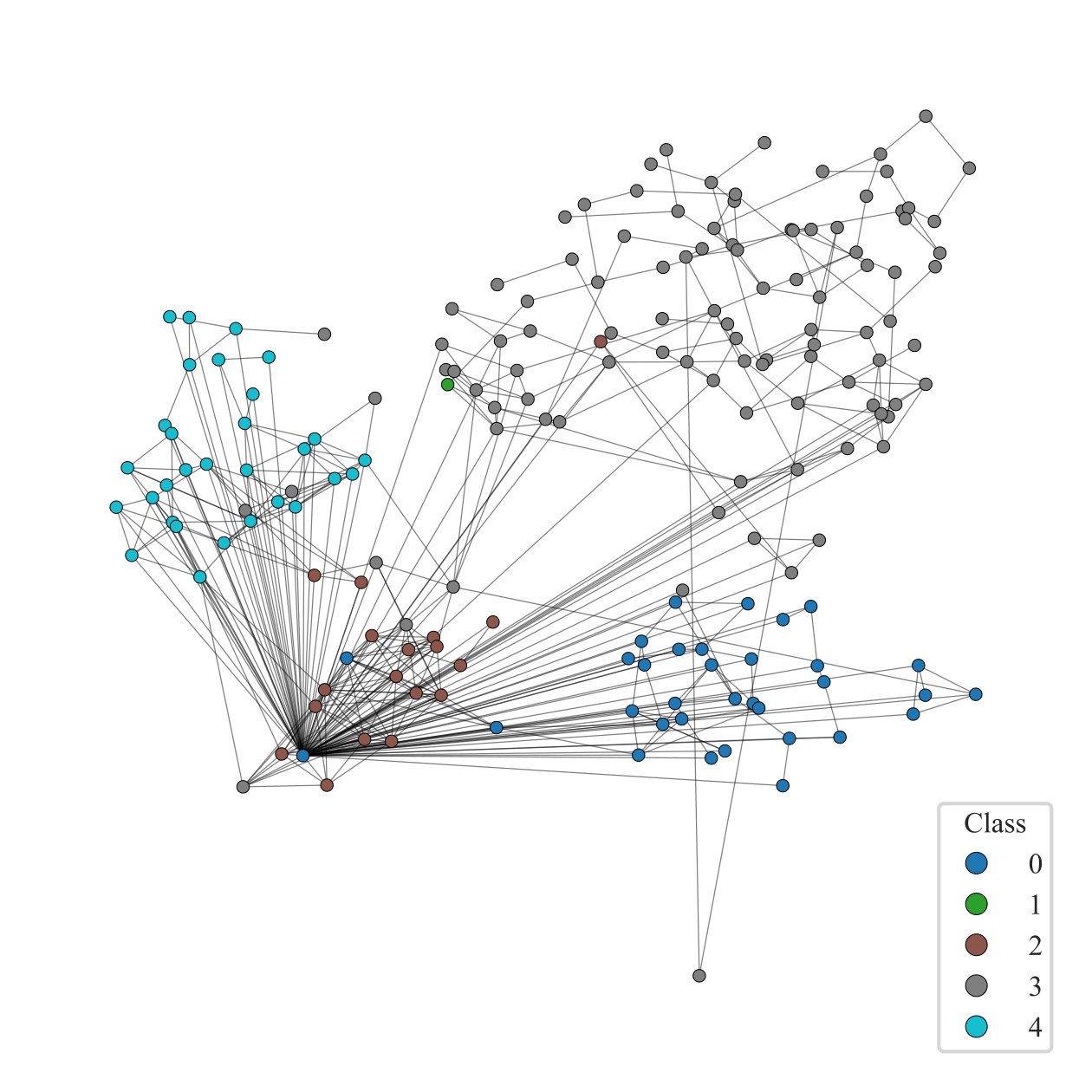}}
\end{subfigure}\hfill
\begin{subfigure}[b]{0.32\columnwidth}
  \centering
  \fbox{\includegraphics[trim=60 50 60 70,clip,width=\linewidth]{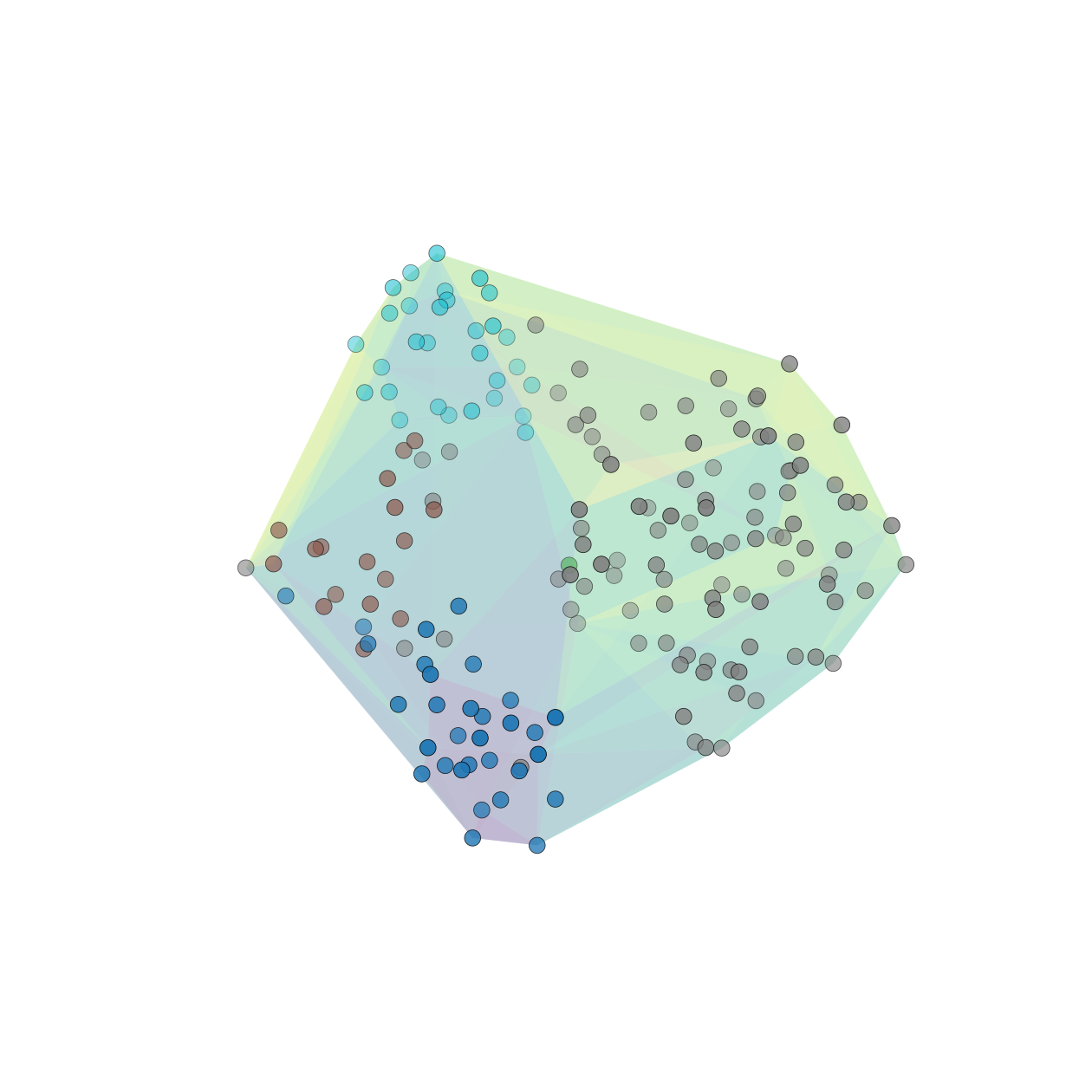}}
\end{subfigure}
\vspace{-4pt}
\caption{Geometry learned by \modelname\ on \textsc{Texas}.\textbf{Left}: Original graph topology colored by class under the layout from the learned embedding projection to 2-D. \textbf{Middle}: Degree-preserving rewiring based on learned geodesic distances reveals clearer class separation. \textbf{Right}: 3-D t-SNE embedding with curvature visualization shows adaptive geometry, the translucent hull is coloured by the magnitude of the mean curvature 
(\textcolor{violet}{violet} $\!\to\!$ flat, \textcolor{yellow}{yellow} $\!\to\!$ strongly curved)}
\label{fig:texas_analysis}
\end{figure}

\begin{figure}[H]
\centering
\begin{subfigure}[b]{0.32\columnwidth}
  \centering
  \fbox{\includegraphics[width=\linewidth]{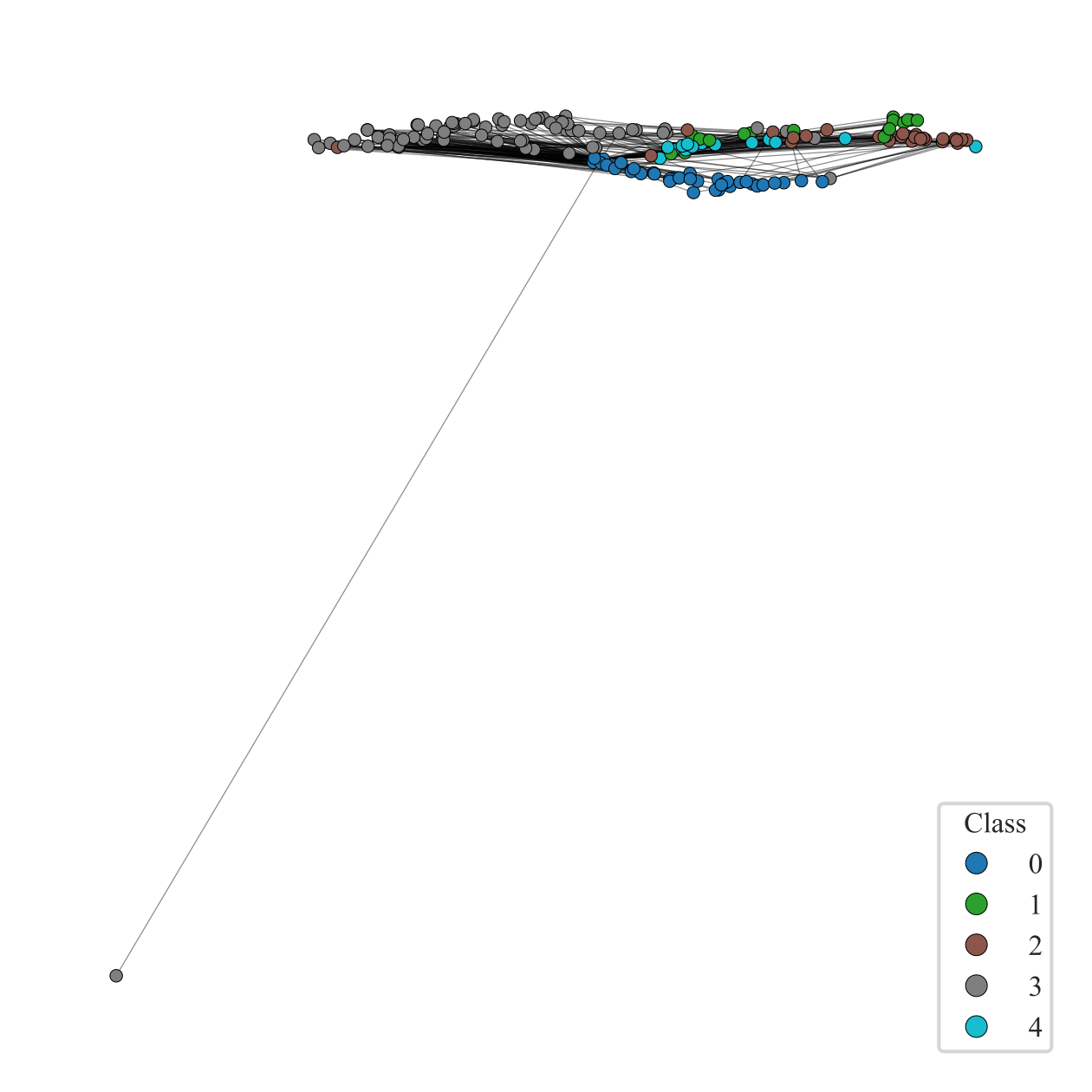}}
\end{subfigure}\hfill
\begin{subfigure}[b]{0.32\columnwidth}
  \centering
  \fbox{\includegraphics[width=\linewidth]{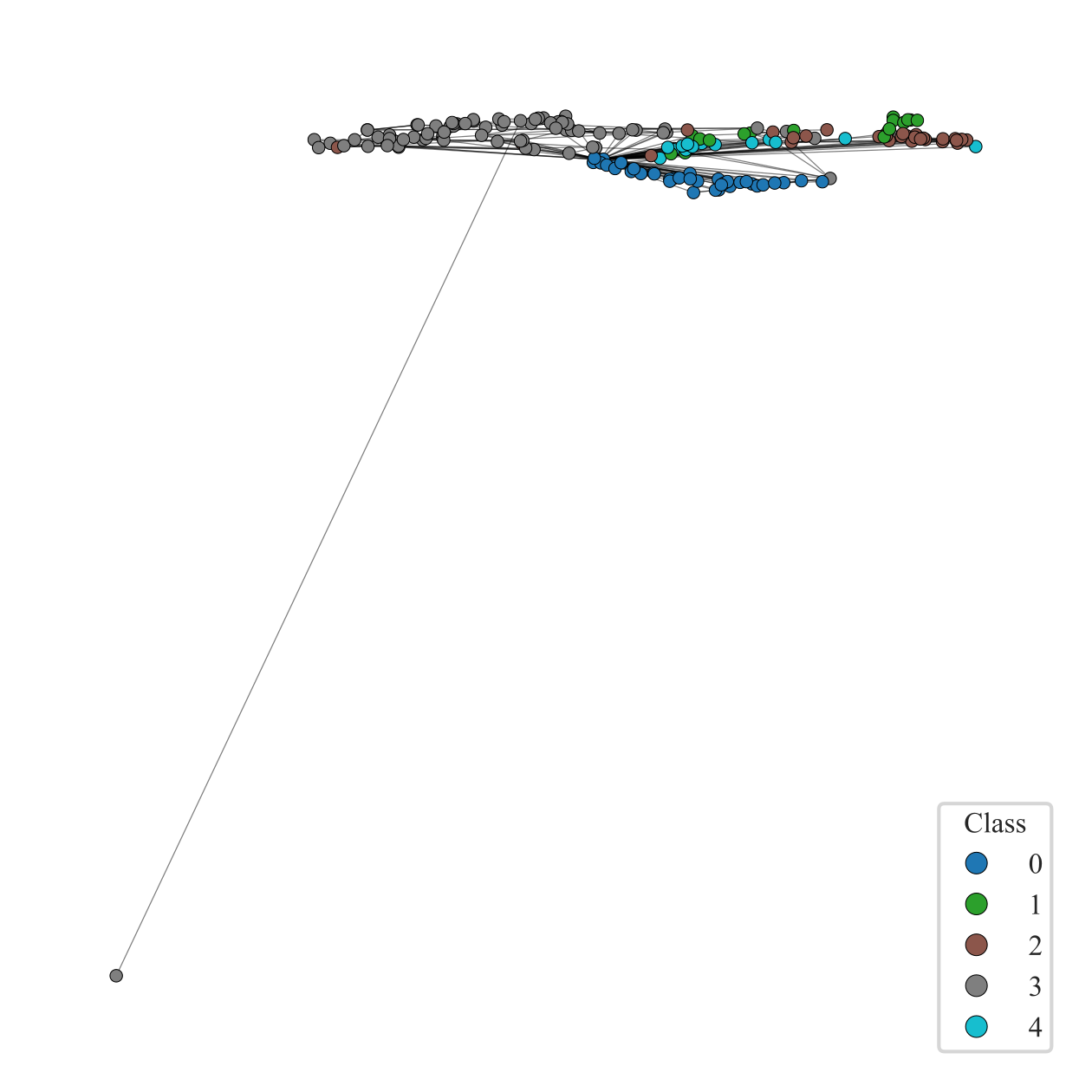}}
\end{subfigure}\hfill
\begin{subfigure}[b]{0.32\columnwidth}
  \centering
  \fbox{\includegraphics[trim=60 50 60 70,clip,width=\linewidth]{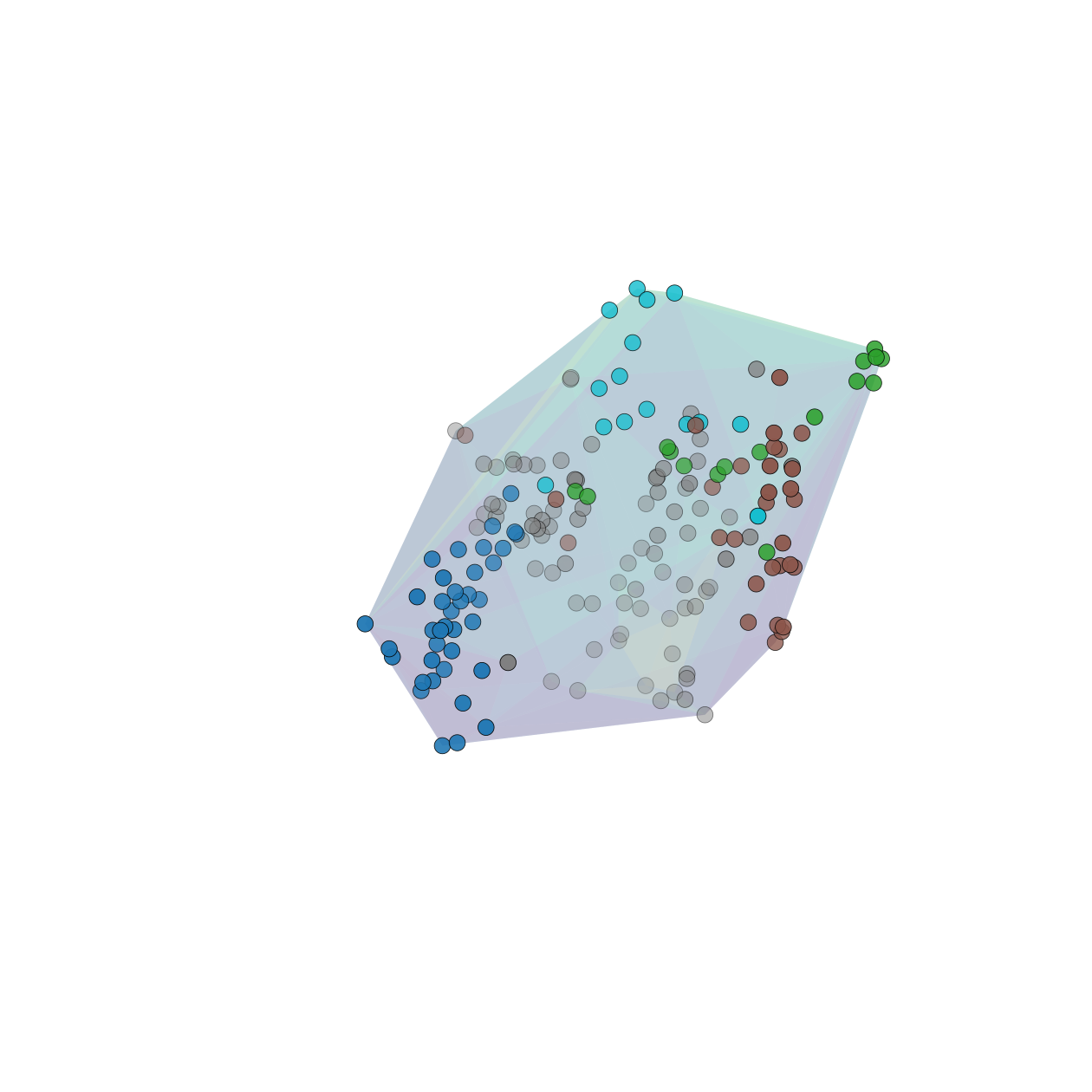}}
\end{subfigure}
\vspace{-4pt}
\caption{Geometry learned by \modelname\ on \textsc{Cornell}.\textbf{Left}: Original graph topology colored by class under the layout from the learned embedding projection to 2-D. \textbf{Middle}: Degree-preserving rewiring based on learned geodesic distances reveals clearer class separation. \textbf{Right}: 3-D t-SNE embedding with curvature visualization shows adaptive geometry, the translucent hull is coloured by the magnitude of the mean curvature 
(\textcolor{violet}{violet} $\!\to\!$ flat, \textcolor{yellow}{yellow} $\!\to\!$ strongly curved)}
\label{fig:cornell_analysis}
\end{figure}

\section{Discussion on Limitation and Borderline Effects}
\label{app:limitations}

\subsection{Diagonal Metric Tensor Simplification}

While \modelname demonstrates strong empirical performance through learnable diagonal metric tensors, this design choice represents a deliberate trade-off between expressiveness and computational efficiency. The diagonal parameterization $\mathbf{G}_i = \text{diag}(\mathbf{g}_i)$ constrains the metric to axis-aligned anisotropy, potentially limiting the model's ability to capture certain geometric structures:

\begin{itemize}
\item \textbf{Geometric Expressiveness}: Full metric tensors $\mathbf{G}_i \in \text{SPD}(d)$ could capture arbitrary local orientations and anisotropic scaling, enabling richer geometric representations. The additional $\frac{d(d-1)}{2}$ off-diagonal parameters per node would allow modeling of correlated feature dimensions and rotational geometry.

\item \textbf{Computational Complexity}: However, full tensors incur $O(d^3)$ complexity for metric operations and $O(|\mathcal{V}|d^2)$ memory overhead. For typical GNN dimensions ($d=128$), this represents a 128× increase in metric computation cost. Low-rank factorizations $\mathbf{G}_i = \mathbf{L}_i\mathbf{L}_i^T$ with $\mathbf{L}_i \in \mathbb{R}^{d \times r}$ offer a middle ground but still require $O(dr^2)$ operations.

\item \textbf{Optimization Challenges}: Non-diagonal metrics cannot leverage our efficient conformal geometry framework (Section~\ref{sec:method}), requiring expensive SPD manifold projections and Riemannian optimization. The increased parameter space also exacerbates overfitting on small graphs.
\end{itemize}

We view exploring full or low-rank metric tensors as valuable future work, particularly for applications requiring fine-grained geometric modeling. However, this extension lies beyond the current scope, as our diagonal approach already achieves state-of-the-art performance while maintaining computational efficiency comparable to standard GNNs.

\subsection{Statements of Broader Impact}

This work introduced \modelname, a novel framework that learns continuous, anisotropic metric tensor fields to capture the geometric diversity inherent in real-world graphs. This advancement possibly has broader impacts for scientific and societal benefits, such as enhancing community network analysis and advancing biomedical research by its node and link prediction power. Apart from offering possible scientific and societal benefits, to the best of our knowledge, our research works do not involve ethical constraints or religious and political restrictions.

\end{document}